%% file: main_bibtex_ready.tex
\documentclass{article}
\usepackage[utf8]{inputenc}
\usepackage{amsmath, amssymb, amsthm, fullpage}
\usepackage{caption}
\usepackage{enumitem}
\usepackage{graphicx}
\usepackage{xcolor}
\usepackage{tikz}
\usetikzlibrary{positioning,arrows.meta,calc}
\usepackage{booktabs}
\newtheorem{theorem}{Theorem}
\newtheorem{lemma}{Lemma}
\newtheorem{proposition}{Proposition}
\newtheorem{corollary}{Corollary}
\theoremstyle{definition}
\newtheorem{definition}{Definition}
\newtheorem{assumption}{Assumption}
\theoremstyle{remark}
\newtheorem{remark}{Remark}

\title{When Can Safe Controllers Adapt?\\ {\Large Information before Commitment}}
\author{Venkatesh Saligrama\thanks{Department of Electrical and Computer Engineering, Boston University, MA 02215, Email: srv@bu.edu}}
\date{}

\begin{document}
\maketitle
\input{safe_v15_package/abstract_v11_constraints_bite}

\input{safe_v15_package/introduction_v23_constraints_bite}
\input{safe_v15_package/setup_v14_edited_direct}

\input{safe_v15_package/mechanisms_v5_edited}

\input{safe_v15_package/positive_v5_edited}
\input{safe_v15_package/computable_v3_edited_final}
\input{safe_v15_package/numerical_v3_edited}
\input{safe_v15_package/related_v4_constraints_bite}

\input{safe_v15_package/discussion_v3_constraints_bite}
\appendix
\input{safe_v15_package/appendix_mechanisms_edited}
\input{safe_v15_package/appendix_positive_v2_edited}
\input{safe_v15_package/appendix_computable_edited}
\bibliographystyle{unsrt}
\bibliography{safe_v15_package/safe_learning_refs_v5}

\end{document}

%% file: safe_v15_package/abstract_v11_constraints_bite.tex
\begin{abstract}
Safe adaptive control is online adaptation under a safety guarantee on the
learning trajectory itself. The controller may use any causal,
history-dependent rule and act differently across environments as data
arrive. Only its safety guarantee is uniform: the same rule must satisfy it
under every initially plausible model. Performance is measured against a safe
oracle that knows the realized model.

Many finite-time analyses assume persistent excitation of the uniformly safe
closed loop, so the data distinguish every pair of models requiring different
control decisions. Under that assumption, feasibility is already settled;
only the rate remains. We ask instead:
\emph{Do the safety constraints permit such an informative experiment at all?}

While an alternative remains plausible, the controller must preserve a safe
continuation under it. We call the first action that forecloses such a
continuation \emph{commitment}. Chance safety allows commitment only on an
event rare under the alternative, and the evidence must arrive beforehand:
the observation generated by the committing action is too late. We define
\emph{precommitment information} as the KL divergence between learner-visible
laws stopped before commitment.

Our main result is a causal reduction. The commitment rule determines three system-dependent quantities: (a) the probability that safety permits commitment under the alternative, (b) the target-side cost of remaining noncommittal, (c) the information available when the decision is made. Bounded precommitment information therefore leaves a fixed fraction of the oracle gap unavoidable. If the gap
is $\Omega(T)$, every uniformly safe policy has linear regret. We establish
the obstruction in a constrained linear system with quadratic regulation
cost. With multiple alternatives, finite information limits how small a certified candidate list can be. We also prove recovery when safe experiments are repeatable,
persistently informative, and return to a common continuation state, and
derive semidefinite upper certificates for deterministic linear-Gaussian
systems.
\end{abstract}

%% file: safe_v15_package/introduction_v23_constraints_bite.tex
\section{Introduction} \label{sec:intro}

\noindent
The safe adaptive control problem is regret minimization over uniformly safe
policies.  A causal policy must satisfy the safety specification under every
model not yet ruled out, and its performance is measured against the safe
oracle that knows the true model.  The difficulty is exploration.  The policy
can learn only through actions that remain safe under every plausible model,
yet it must reach the oracle's model-specific behavior from inside that set.

Many finite-time analyses impose conditions under which this common safe
regime remains informative.  Excitation can be injected, or process noise and
safety-induced state variation keep the closed loop informative
\cite{AbbasiYadkori11,DeanConstrained19,LiDasShammaLi21,%
SchifferJanson25,HutchinsonJiangAlizadeh26}.  Safety then limits where the
system may go, but not the decision-relevant information it can acquire. Under such conditions, familiar adaptive-control rates can be recovered.

We ask what can be said when the constraints also limit the experiment.  This
is not merely a slower estimation problem.  An action that distinguishes
models may consume safety margin under a model that is still plausible.  The
admissible experiment set is then endogenous: it
depends on the margin already spent and on the models not yet falsified.  The
model-specific action sought by adaptation is unavailable until the evidence
supporting it has been collected. This creates an ordering constraint:
evidence must precede commitment, while the same margin that commitment
consumes may also limit the available evidence. The full controlled trajectory is therefore the wrong statistical experiment: it can contain evidence generated by the very action that already required the model-specific decision. The question is no longer
only how fast uncertainty shrinks, but whether enough evidence can be collected
before the first action that forecloses a safe continuation under an
alternative.  We call the evidence available before that action
\emph{precommitment information}.  Bounded precommitment information certifies
an unavoidable oracle gap.  Persistent information generated by repeatable
safety-preserving experiments supports recovery.

\noindent\textbf{Guarantee or exclude.}
With uncertainty unresolved, a uniformly safe controller has two options.  It
can guarantee: remain in the regime that is feasible under every plausible
model and accept the value this assures.  In our lower-bound examples, this
regime already attains the max--min value.  Or it can exclude: once the
evidence permits, take an action as if an alternative were no longer in play.
The exclusion is statistical, not literal.  The safety guarantee remains
uniform over the initial model class; chance safety permits an action that is
unsafe under an alternative only on an event whose probability under that
alternative is at most $\eta_T$.  Making commitment likely under the target
while keeping it this rare under the alternative requires order
$\log(1/\eta_T)$ nats of evidence, and that evidence must be available before
the committing action is chosen.  In the lower-bound examples, adaptation
beyond the max--min value is precisely such an exclusion.  The max--min
criterion does not distinguish the two options once the worst-case value is
fixed.  Oracle regret does.

\noindent\textbf{Oracle regret.}
Let the unknown environment be $\theta\in\Theta_0$.  Over a horizon $T$, let
$\Pi_{\mathrm{safe}}^T(\Theta_0)$ be the causal policies whose probability of
any safety violation is at most $\eta_T$ under every $\theta\in\Theta_0$.
Let $J_T^\star(\theta)$ be the value of the safe oracle that knows $\theta$,
and define
\[
    R_T(\pi;\theta):=J_T^\star(\theta)-J_T(\pi;\theta).
\]
We study
\begin{equation}
    \mathfrak R_T(\Theta_0)
    :=
    \inf_{\pi\in\Pi_{\mathrm{safe}}^T(\Theta_0)}
    \sup_{\theta\in\Theta_0}
    \bigl[J_T^\star(\theta)-J_T(\pi;\theta)\bigr],
    \label{eq:intro-minimax-regret}
\end{equation}
and ask when it is $o(T)$.  More generally, if the oracle advantage itself is
$G_T=o(T)$, we ask whether the learner can close all but $o(G_T)$ of that gap.

\noindent\textbf{Commitment.}
Fix a target environment $\theta$ and an incompatible alternative $\theta'$.
A commitment rule specifies, at each time $t$, a predictable set
$\mathcal D_t^{\theta\mid\theta'}$ of history--action pairs.  The induced
commitment time and event are
\[
    \tau^\pi
    :=\inf\bigl\{t\le T:(H_{t-1},U_t)
        \in\mathcal D_t^{\theta\mid\theta'}\bigr\},
    \qquad
    A^\pi:=\{\tau^\pi\le T\}.
\]
In deterministic constrained systems, commitment is the first action after
which no hard-safe continuation remains under $\theta'$.  Noncommitment is
therefore preservation of recursive feasibility under the alternative.
Because the commitment decision is made from $(H_{t-1},U_t)$, the observation
produced by $U_{\tau^\pi}$ arrives too late to justify that action.

The commitment rule is the fundamental object. It simultaneously determines how often safety permits commitment under the alternative, what noncommitment costs under the target, and how much information is available when the decision is made.  Safety determines how often commitment is allowed under the
alternative.  Value determines the regret cost of not committing under the
target.  Information is measured on the learner-visible record stopped before
commitment.  The controller need not identify the full model; it needs only to
resolve alternatives that require different safe decisions.  This connects to
sequential design and controlled sensing
\cite{Chernoff59,NaghshvarJavidi13,NitinawaratAtiaVeeravalli13}, with the
additional restriction that safety controls which experiments are admissible
and whether their observations arrive before commitment.

\noindent\textbf{Precommitment lower bound.}
Let
\[
    \alpha_T(\theta,\theta')
    :=
    \sup_{\mu\in\Pi_{\mathrm{safe}}^T(\{\theta,\theta'\})}
    \mathbb P_{\theta'}^\mu(A^\mu)
\]
be the largest commitment probability allowed under the alternative.  If
commitment forces a violation under $\theta'$, then
$\alpha_T(\theta,\theta')\le\eta_T$.
For $q\in[0,1]$, define the noncommitment regret profile
\[
    \mathcal G_T(\theta,\theta';q)
    :=
    \inf_{\substack{
      \mu\in\Pi_{\mathrm{safe}}^T(\{\theta,\theta'\})\\
      \mathbb P_\theta^\mu((A^\mu)^c)\ge q
    }}
    R_T(\mu;\theta).
\]
It is the least target regret among uniformly safe policies that do not commit
with probability at least $q$.

Let $I_{\mathrm{pre}}(T;\theta,\theta')$ be the largest KL divergence between
the stopped learner-visible laws under $\theta$ and $\theta'$.  Since
$A^\pi$ is determined by the stopped record, data processing links
precommitment information to the two commitment probabilities.  Theorem~\ref{thm:binary-precommitment-limit}
gives an exact binary-KL bound.  A simpler consequence is
\begin{equation}
    R_T(\pi;\theta)
    \ge
    \mathcal G_T\!\left(
      \theta,\theta';
      \left[
        \frac12e^{-I_{\mathrm{pre}}(T;\theta,\theta')}
        -\alpha_T(\theta,\theta')
      \right]_+
    \right).
    \label{eq:intro-main-profile-bound}
\end{equation}
If a system-specific calculation gives
\[
    \mathcal G_T(\theta,\theta';q)
    \ge [G_Tq-\zeta_T]_+,
\]
then
\begin{equation}
    R_T(\pi;\theta)
    \ge
    \left[
      G_T\left(
        \frac12e^{-I_{\mathrm{pre}}(T;\theta,\theta')}
        -\alpha_T(\theta,\theta')
      \right)
      -\zeta_T
    \right]_+.
    \label{eq:intro-main-bound}
\end{equation}
Thus bounded precommitment information leaves a nonvanishing fraction of the
oracle gap.  If $G_T=\Omega(T)$, every uniformly safe policy has linear oracle
regret.  At the standing risk level $\eta_T=T^{-2}$, the explicit bound remains
positive until the stopped KL reaches $2\log T-\log 2$.  Driving the target
noncommitment probability to zero therefore requires evidence of order
$2\log T$, which a bounded profile cannot supply.

\noindent\textbf{Finite-headroom example.}\quad
A simple example (see Figure~\ref{fig:action-tension}) shows why waiting need not help.  Over $T$ rounds, a
controller chooses commands\footnote{The restriction to nonnegative commands is expositional, not
structural.  One may instead take \(U_t\in[-U,U]\), reward
\(r_t=|U_t|\), and the same thermal dynamics and sensor; both safety
expenditure and Gaussian information depend on \(U_t^2\), so the sign
is immaterial.  A signed aeroelastic realization replaces these by a
noisy flutter signal
\(
Y_t=\theta U_t^2+\xi_t
\)
and a fatigue budget
\(
d_{t+1}=d_t-\theta^2U_t^4.
\)
Then both accumulated fatigue and pairwise information depend on
\(\sum_t U_t^4\), and the same matched-order argument applies.} $U_t\in[0,U]$.  The command earns reward but
consumes thermal headroom:
\[
    d_{t+1}=d_t-\theta u_t^2,\qquad d_t\ge0.
\]
After applying $u_t$, the controller observes
\[
    y_t=\theta u_t+\xi_t,
    \qquad \xi_t\sim\mathcal N(0,\sigma^2).
\]

\begin{figure}[!ht]
\centering
\begin{minipage}[c]{0.35\linewidth}
\centering
\resizebox{\linewidth}{!}{%
\begin{tikzpicture}[
    >=Latex,
    every node/.style={font=\small},
    command/.style={draw, rounded corners, thick, align=center, inner sep=4pt,
        minimum width=1.8cm, minimum height=0.7cm},
    box/.style={draw, rounded corners, align=left, inner sep=3.5pt,
        text width=3.35cm, minimum height=0.72cm},
    lab/.style={font=\scriptsize, align=center}
]
\node[command] (u) at (0,0) {command\\[-1mm]$U_t$};
\node[box] (perf)   at (4.85, 1.05) {\textbf{Performance}\\reward increases};
\node[box] (info)   at (4.85, 0)    {\textbf{Information}\\$\mathrm{kl}_t \propto U_t^2$};
\node[box] (margin) at (4.85,-1.05) {\textbf{Safety margin}\\$d_{t+1}=d_t-\theta U_t^2$};
\draw[->, thick] (u.east) to[out=25,in=180]
    node[midway, above, sloped, lab] {reward} (perf.west);
\draw[->, thick] (u.east) --
    node[midway, above, lab] {signal} (info.west);
\draw[->, thick] (u.east) to[out=-25,in=180]
    node[midway, below, sloped, lab] {consumption} (margin.west);
\end{tikzpicture}}
\end{minipage}%
\hfill
\begin{minipage}[c]{0.58\linewidth}
\centering\small
\setlength{\tabcolsep}{4pt}
\renewcommand{\arraystretch}{1.2}
\resizebox{\linewidth}{!}{%
\begin{tabular}{@{}ll@{}}
\toprule
\textbf{Controller} & \textbf{Reward} \\
\midrule
Healthy oracle & $UT$ \\
Worn-safe max--min baseline &
$\min\{UT,\sqrt{Td_0/\theta_{\rm high}}\}$ \\
Uniformly safe learner & may exceed the baseline only after evidence \\
\bottomrule
\end{tabular}%
}
\end{minipage}

\vspace{2ex}
{\small
\begin{tabular}{@{}p{0.55\linewidth}@{\quad}p{0.4\linewidth}@{}}
Evidence required for error of order $\eta_T=T^{-2}$: &
order $\log(1/\eta_T)=2\log T$ \\
Evidence available before leaving the worn budget: &
$\theta_{\rm high}d_0/(2\sigma^2)$ --- independent of the horizon \\
\end{tabular}}
\caption{\textbf{Finite headroom.}  A command affects reward, information,
and thermal headroom.  The evidence needed to leave the worn-safe regime
grows with the horizon, while the evidence available before leaving it is
bounded.}
\label{fig:action-tension}
\end{figure}

\medskip
\noindent
Take $r_t=u_t$ and compare a healthy device
$\theta_{\rm low}=0$ with a worn device $\theta_{\rm high}>0$.
The healthy oracle uses $u_t=U$ in every round.  Let $\tau^\pi$ be the first
time the learner leaves the worn-device budget.  Before commitment, safety
under the worn model gives
\[
    \sum_{t<\tau^\pi}u_t^2
    \le \frac{d_0}{\theta_{\rm high}}.
\]
The stopped Gaussian KL therefore satisfies
\begin{equation}
    I_{\mathrm{pre}}(T;\theta_{\rm low},\theta_{\rm high})
    \le
    \frac{(\theta_{\rm high}-\theta_{\rm low})^2}{2\sigma^2}
    \frac{d_0}{\theta_{\rm high}}
    =
    \frac{\theta_{\rm high}d_0}{2\sigma^2},
    \label{eq:intro-thermal-expenditure}
\end{equation}
which is independent of $T$.  The required evidence grows as $2\log T$, but
the available evidence does not grow at all.  The healthy oracle gap is
$\Theta(T)$, so the lower bound gives linear regret.

\noindent\textbf{Recovery.}
An unbounded information profile alone does not prove recovery.  One safe
policy must generate the information and then enter the selected operating
regime without losing feasibility.  We give a sufficient condition based on a
repeatable safety-preserving experiment.  In the braking example, maximal
braking is both safe and informative, and a confidence-based controller
achieves $O(\sqrt{T\log T})$ regret.

\paragraph{Contributions.}
\begin{enumerate}[leftmargin=*]
    \item \textbf{Commitment and precommitment information.}
    We define a predictable commitment rule for controlled systems.  The same
    rule determines the safety allowance under the alternative, the
    noncommitment regret profile under the target, and the stopped information.

    \item \textbf{Causal reduction.} We reduce safe oracle attainability to three quantities induced by the same commitment rule: the alternative-side commitment allowance, the target-side noncommitment regret profile, and the information in the learner-visible record available at the decision. This gives the exact binary-KL bound in Theorem~\ref{thm:binary-precommitment-limit}. We extend the result to
    continuous classes and use a list-Fano bound to quantify partial
    specialization among multiple alternatives.

    \item \textbf{Examples and recovery.}
    We prove linear regret in a constrained linear system with quadratic
    regulation cost and an unknown margin-consuming input direction.  An
    unknown-disturbance-location problem yields a certified safe candidate
    list.  A separate alignment condition gives oracle recovery at rate
    $O(\sqrt{T\log T})$ when safe experiments are repeatable and informative.

    \item \textbf{Computation.}
    For deterministic linear-Gaussian systems, we formulate finite-horizon
    precommitment information as a QCQP and derive SDP upper certificates.
    Numerical studies show how replenishment, sensing, and unknown-location
    structure change the attainable level of adaptation.
\end{enumerate}

%% file: safe_v15_package/setup_v14_edited_direct.tex
\section{Setup}
\label{sec:setup-general}

We define the controlled interaction, uniform safety, the parameter-aware oracle, and commitment.

\subsection{General controlled interaction}
\label{subsec:controlled-interaction-general}

Fix a horizon $T$ and an uncertainty set $\Theta_0\subseteq\Theta$.  A
state-space representation, when useful, is
\begin{equation}
    X_{t+1}=f_{\theta,t}(X_t,U_t,W_{t+1}),
    \qquad
    Z_t=h_{\theta,t}(X_t,U_t,V_t),
    \qquad t=1,\ldots,T,
    \label{eq:state-space-model}
\end{equation}
where $X_t$ is the physical state, $U_t$ is the chosen control or sensing
action, $Z_t$ is the learner-visible datum, and $(W_t,V_t)$ are noises. Uppercase denotes random variables. The
learner sees
\begin{equation*}
    H_{t-1}:=(Z_1,U_1,\ldots,Z_{t-1},U_{t-1})
\end{equation*}
before choosing $U_t\sim\pi_t(\cdot\mid H_{t-1})$.  Equivalently, under
$\theta$, the next learner-visible datum has conditional law
\begin{equation}
    Z_t\mid(H_{t-1},U_t)
    \sim Q_{\theta,t}(\cdot\mid H_{t-1},U_t).
    \label{eq:general-controlled-kernel}
\end{equation}
The kernels $Q_{\theta,t}$ are induced by the dynamics and observation model.
The lower bound uses only these controlled observation laws; it does not require
linearity, Gaussian noise, or Markov representation in
\eqref{eq:state-space-model}.

Let $\mathsf{Safe}_{\theta,T}$ be the event that all state, input, resource,
and operational constraints hold through horizon $T$ under $\theta$.  For
state constraints, it includes the post-action state $X_{T+1}$ generated by
$U_T$.  Let $\overline{P}_{\theta,T}^\pi$ denote the full physical trajectory
law, including latent safety variables, and let $P_{\theta,T}^\pi$ denote the
learner-visible marginal.  Safety probabilities are taken under
$\overline{P}_{\theta,T}^\pi$; KL divergences are taken on learner-visible
records.

\begin{definition}[Uniform chance safety]
\label{def:general-uniform-safety}
For a risk level $\eta_T\in[0,1)$, a policy is \emph{chance-$\eta_T$ safe}
under $\theta$ if
\begin{equation*}
    \overline{P}_{\theta,T}^\pi(\mathsf{Safe}_{\theta,T}^c)\le \eta_T.
\end{equation*}
It is \emph{uniformly chance-safe} on $\Theta_0$ if the same causal policy
satisfies this inequality for every initially plausible environment:
\begin{equation}
    \Pi_{\mathrm{safe}}^T(\Theta_0)
    :=
    \bigcap_{\theta\in\Theta_0}
    \left\{\pi:
    \overline{P}_{\theta,T}^\pi(\mathsf{Safe}_{\theta,T}^c)\le\eta_T
    \right\}.
    \label{eq:general-safe-class}
\end{equation}
Hard safety is the special case $\eta_T=0$.  In the lower-bound examples,
$\eta_T\to0$, often $\eta_T=T^{-2}$.  When reward is bounded, the
contribution of violation paths to expected value then vanishes.
\end{definition}

\begin{remark}[Why safety is uniform]
\label{rem:why-uniform-safety}
The policy must be certified before the true environment is known.  Uniformity
therefore applies to the safety guarantee, not to the realized behavior: a
causal policy may use observations and act differently under different
models.  Every data-dependent decision inside the policy, including a decision
to commit, remains covered by the guarantee in
\eqref{eq:general-safe-class}.
\end{remark}

\begin{remark}[Uniform safety and max--min control]
\label{rem:vs-robust-control}
The class $\Pi_{\mathrm{safe}}^T(\Theta_0)$ contains general adaptive policies;
a max--min policy may also be adaptive.  The distinction in this paper is the
performance criterion: safety is required uniformly over $\Theta_0$, whereas
performance is measured by regret to the oracle for the environment that is
actually realized.
\end{remark}

Let $\mathcal R_T$ denote realized reward through horizon $T$.  We assume
that its expectation and the oracle value below are finite.  The main theorem
imposes no pathwise reward ceiling.  A finite gross bound is used only in
Lemma~\ref{lem:binary-value-verification} to evaluate the noncommitment regret
profile in deterministic examples.
The value of a policy $\pi$ in environment $\theta$ is
\begin{equation}
    J_T(\pi;\theta):=\mathbb E_\theta^\pi[\mathcal R_T].
    \label{eq:general-value}
\end{equation}
The parameter-aware safe oracle knows $\theta$ and is constrained only by
safety under that environment:
\begin{equation}
    J_T^\star(\theta)
    :=
    \sup_{\pi\in\Pi_{\mathrm{safe}}^T(\{\theta\})}J_T(\pi;\theta).
    \label{eq:general-safe-oracle}
\end{equation}
Regret is
\begin{equation}
    R_T(\pi;\theta):=J_T^\star(\theta)-J_T(\pi;\theta).
    \label{eq:general-regret}
\end{equation}
Because $\Pi_{\mathrm{safe}}^T(\Theta_0)\subseteq
\Pi_{\mathrm{safe}}^T(\{\theta\})$ for every $\theta\in\Theta_0$, the regret of
a uniformly safe policy is nonnegative.

\begin{definition}[Safe oracle attainability]
\label{def:general-oracle-attainability}
Let
\begin{equation*}
    \mathfrak R_T(\Theta_0)
    :=
    \inf_{\pi\in\Pi_{\mathrm{safe}}^T(\Theta_0)}
    \sup_{\theta\in\Theta_0}R_T(\pi;\theta).
\end{equation*}
For a positive comparison scale $a_T$, the oracle is safely attainable at
scale $a_T$ if $\mathfrak R_T(\Theta_0)=o(a_T)$.  Average-regret learnability
corresponds to $a_T=T$.
\end{definition}

The scale matters: a lower bound of order $\sqrt T$ can rule out closure of a
$\Theta(\sqrt T)$ oracle advantage without ruling out $o(T)$ average regret.

\input{safe_v15_package/setup_2.3_v3}

\section{Fundamental Pre-Commitment Limits}
\label{sec:fundamental-limit-generalized}

The lower bound uses three quantities: commitment probability under the
alternative, the regret cost of noncommitment under the target, and the
information available before commitment.  We define each quantity directly.

\subsection{Binary target versus incompatible alternative}
\label{subsec:binary-limit}
The committing action is chosen before its observation is seen, so the record on
which information is measured should contain the committing action but not its
outcome. We therefore use a stopping time. Let
$\mathcal F_t:=\sigma(H_{t-1},U_t)$
be the $\sigma$-algebra generated by the learner-visible history and the
time-$t$ action but \emph{not} the time-$t$ observation $Z_t$.  Because whether
$(H_{t-1},U_t)\in\mathcal D_t^{\theta\mid\theta'}$ is decided by $U_t$ before
$Z_t$ is seen, the commitment set is $\mathcal F_t$-measurable, and hence
$\tau^\pi$ is an $\{\mathcal F_t\}$-stopping time.

Let $\mathcal F_{\tau^\pi}$ be the associated stopped $\sigma$-algebra.  The
stopped record
\begin{equation}
    S_T^\pi
    :=
    \left(
       \tau^\pi\wedge(T+1),\;
       H_{(\tau^\pi-1)\wedge T},\;
       \widetilde U^\pi
    \right),
    \qquad
    \widetilde U^\pi:=
    \begin{cases}
      U_{\tau^\pi},&\tau^\pi\le T,\\
      \dagger,&\tau^\pi=T+1,
    \end{cases}
    \label{eq:general-stopped-record}
\end{equation}
generates $\mathcal F_{\tau^\pi}$; the placeholder $\dagger$ marks the absence of
a committing action.  Let $P_{\theta,T}^{\pi,\mathrm{pre}}$ denote the law of
$S_T^\pi$ under $\theta$, i.e.\ the restriction of the physical law to
$\mathcal F_{\tau^\pi}$.

The commitment event
$A^\pi=\{\tau^\pi\le T\}$ is determined by the stopped record.  The physical
law and the stopped law therefore assign it the same probability, written
$P_\theta^\pi(A^\pi)$.  We apply change of measure to the stopped laws and
evaluate safety under the full physical laws.

For later use, define the one-step conditional divergence
\begin{equation}
    \mathrm{kl}^{\theta,\theta'}_t
    :=
    D_{\mathrm{KL}}\!\left(
      Q_{\theta,t}(\cdot\mid H_{t-1},U_t)
      \,\middle\|\,
      Q_{\theta',t}(\cdot\mid H_{t-1},U_t)
    \right).
    \label{eq:general-one-step-kl}
\end{equation}

\begin{definition}[Pairwise pre-commitment information]
\label{def:pairwise-pre-information}
The maximum information that can be collected safely before commitment is
\begin{equation}
    I_{\mathrm{pre}}(T;\theta,\theta')
    :=
    \sup_{\pi\in\Pi_{\mathrm{safe}}^T(\{\theta,\theta'\})}
    D_{\mathrm{KL}}\!\left(
      P_{\theta,T}^{\pi,\mathrm{pre}}
      \,\middle\|\,
      P_{\theta',T}^{\pi,\mathrm{pre}}
    \right).
    \label{eq:pairwise-pre-information}
\end{equation}
Its horizon-uniform capacity is
\begin{equation}
    \beta_{\mathrm{pre}}(\theta,\theta')
    :=\sup_{T\ge1}I_{\mathrm{pre}}(T;\theta,\theta').
    \label{eq:pairwise-pre-capacity}
\end{equation}
\end{definition}

\begin{lemma}[Stopped controlled-KL identity]
\label{lem:general-stopped-kl}
Assume
$Q_{\theta,t}(\cdot\mid h,u)\ll Q_{\theta',t}(\cdot\mid h,u)$ for every
pre-commitment history-action pair.  Then, for every causal policy,
\begin{equation}
    D_{\mathrm{KL}}\!\left(
      P_{\theta,T}^{\pi,\mathrm{pre}}
      \,\middle\|\,
      P_{\theta',T}^{\pi,\mathrm{pre}}
    \right)
    =
    \mathbb E_\theta^\pi\!\left[
      \sum_{t=1}^T
      \mathbf 1\{t<\tau^\pi\}
      \mathrm{kl}^{\theta,\theta'}_t
    \right].
    \label{eq:general-stopped-kl-identity}
\end{equation}
\end{lemma}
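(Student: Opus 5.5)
The plan is to prove the identity \eqref{eq:general-stopped-kl-identity} by the usual chain-rule (divergence decomposition) argument for bandit and controlled-sensing laws, applied to the stopped record $S_T^\pi$ rather than the full history.

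\textbf{Step 1: density of the stopped law.}
Write the density of $S_T^\pi$ explicitly. On $\{\tau^\pi=k\}$ with $k\le T$, the record is $(k,Z_1,U_1,\dots,Z_{k-1},U_{k-1},U_k)$. On $\{\tau^\pi=T+1\}$ it is the full history $H_T$ together with $\dagger$. Since the commitment set is $\mathcal F_t$-measurable, the event $\{\tau^\pi=k\}$ is a deterministic function of $(H_{k-1},U_k)$. The first coordinate is therefore redundant given the remaining ones, and the map from histories to records is well defined.

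Under $\theta$, and with respect to a common dominating measure, the density factorizes as
\[
\prod_{s<k}\pi_s(u_s\mid h_{s-1})\,q_{\theta,s}(z_s\mid h_{s-1},u_s)\;\cdot\;\pi_k(u_k\mid h_{k-1}).
\]
For $k=T+1$ the final policy factor is absent. The policy factors do not depend on $\theta$, so they cancel in the likelihood ratio. The absolute-continuity hypothesis on pre-commitment pairs guarantees that every observation kernel appearing in the product (only times $s<\tau^\pi$) is dominated. Hence
\[
\log\frac{dP^{\pi,\mathrm{pre}}_{\theta,T}}{dP^{\pi,\mathrm{pre}}_{\theta',T}}(S_T^\pi)=\sum_{t=1}^T\mathbf 1\{t<\tau^\pi\}\log\frac{dQ_{\theta,t}(\cdot\mid H_{t-1},U_t)}{dQ_{\theta',t}(\cdot\mid H_{t-1},U_t)}(Z_t).
\]

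\textbf{Step 2: take expectations term by term.}
Take the $\theta$-expectation of this log-likelihood ratio. The key point is that $\{t<\tau^\pi\}=\{\tau^\pi\le t\}^c$ lies in $\mathcal F_t=\sigma(H_{t-1},U_t)$, because $\tau^\pi$ is an $\{\mathcal F_t\}$-stopping time. Conditioning the $t$-th term on $\mathcal F_t$ therefore pulls the indicator out. The conditional law of $Z_t$ given $\mathcal F_t$ is $Q_{\theta,t}(\cdot\mid H_{t-1},U_t)$ by \eqref{eq:general-controlled-kernel}. The conditional expectation of the log-ratio is then exactly $\mathrm{kl}^{\theta,\theta'}_t$, and the tower property yields the right-hand side.

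\textbf{Main obstacle: integrability.}
The delicate step is justifying the interchange of expectation and sum, since the individual log-ratios may not be integrable. I would handle this by splitting each log-ratio into its positive and negative parts. The negative parts have conditional expectation bounded by $1/e$ times a probability, using $x\log x\ge -1/e$ on the Radon--Nikodym derivative. The sum is finite because there are at most $T$ terms. Alternatively, one can apply the chain rule for KL divergence directly to the sequential factorization, conditioning at each step on $\mathcal F_t$. Both sides are then defined in $[0,\infty]$ and the identity holds including the value $+\infty$.

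Finally, the paper's "physical law" versus learner-visible law distinction needs a brief remark. $S_T^\pi$ is learner-visible, so its law is the same whether it is computed from $\overline P$ or from $P$, and latent variables play no role.
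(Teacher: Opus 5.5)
Your proposal is correct and follows the same route as the paper's proof. You factor the stopped-record likelihood and cancel the common policy kernels, including the one that generates $U_{\tau^\pi}$, so only observation factors with $t<\tau^\pi$ survive; you then condition each term on $\mathcal F_t=\sigma(H_{t-1},U_t)$ to obtain $\mathrm{kl}^{\theta,\theta'}_t$. Your treatment of integrability, bounding the negative parts through $x\log x\ge -1/e$ so the identity holds in $[0,\infty]$, is a rigor step the paper leaves implicit rather than a different argument.
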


\begin{proof}
Since $\tau^\pi$ is an $\{\mathcal F_t\}$-stopping time with
$\mathcal F_t=\sigma(H_{t-1},U_t)$, the event $\{t<\tau^\pi\}$ is
$\mathcal F_t$-measurable, hence determined by $(H_{t-1},U_t)$ before $Z_t$ is
seen.  Form the likelihood ratio of the stopped record under $\theta$ against
$\theta'$.  The policy kernels $\pi_t(\cdot\mid H_{t-1})$ are common to both
environments and cancel, including the kernel that generates the committing
action $U_{\tau^\pi}$; only the observation kernels $Q_{\theta,t}$ versus
$Q_{\theta',t}$ survive, and the stopping time contributes no additional
likelihood factor.

If $\tau^\pi=m\le T$, the stopped record contains observations only through
time $m-1$, so the log-likelihood ratio is the sum of one-step observation
contributions for $t<m$; if no commitment occurs, the sum runs through
$t=1,\ldots,T$.  In both cases the terms present are exactly those on
$\{t<\tau^\pi\}$.  Conditioning the $t$-th term on $(H_{t-1},U_t)$ gives
$\mathrm{kl}^{\theta,\theta'}_t$ by definition
\eqref{eq:general-one-step-kl}, and taking expectations under $\theta$ yields
\eqref{eq:general-stopped-kl-identity}.
\end{proof}

\paragraph{Finite-headroom information.}
After choosing $u_t$, the controller observes
$y_t=\theta u_t+\xi_t$, where $\xi_t\sim\mathcal N(0,\sigma^2)$.  The one-step
divergence is
\begin{equation}
    \mathrm{kl}^{\theta_b,\theta_d}_t
    =D_{\mathrm{KL}}\!\left(
      \mathcal N(\theta_bu_t,\sigma^2)
      \,\middle\|\,
      \mathcal N(\theta_du_t,\sigma^2)
    \right)
    =\frac{(\theta_d-\theta_b)^2u_t^2}{2\sigma^2}.
    \label{eq:running-thermal-one-step-kl}
\end{equation}
Before commitment,
$\sum_{t<\tau^\pi}u_t^2\le B_d=d_0/\theta_d$.  Hence
Lemma~\ref{lem:general-stopped-kl} gives
\begin{equation}
    I_{\mathrm{pre}}(T;\theta_b,\theta_d)
    \le
    \beta_{\mathrm{th}}
    :=\frac{(\theta_d-\theta_b)^2d_0}{2\sigma^2\theta_d}.
    \label{eq:running-thermal-pre-information}
\end{equation}
This bound does not depend on $T$.

\begin{definition}[Commitment allowance]
\label{ass:binary-value-risk}
For an ordered pair $(\theta,\theta')$ with a nonempty uniformly safe
class, define
\begin{equation}
    \alpha_T(\theta,\theta')
    :=
    \sup_{\pi\in\Pi_{\mathrm{safe}}^T(\{\theta,\theta'\})}
    P_{\theta'}^\pi(A^\pi).
    \label{eq:binary-risk-separation}
\end{equation}
We write $\alpha_T$ when the pair is clear.  Any upper bound
$\bar\alpha_T\ge\alpha_T$ can replace it in the bounds below.  In particular,
if
\begin{equation*}
    A^\pi\subseteq \mathsf{Safe}_{\theta',T}^c
    \quad \overline P_{\theta',T}^\pi\text{-a.s.}
\end{equation*}
for every uniformly safe policy, then $\alpha_T\le\eta_T$.
\end{definition}

\paragraph{Finite-headroom device.}
Crossing the $\theta_d$-safe budget gives
\begin{equation*}
    d_{\tau^\pi+1}
    =d_0-\theta_d\sum_{s\le\tau^\pi}u_s^2<0
\end{equation*}
under $\theta_d$.  Hence commitment implies a safety violation and
$\alpha_T(\theta_b,\theta_d)\le\eta_T$.

\subsection{The noncommitment regret profile}
\label{subsec:value-gap-derived}

The stopped information bound gives a lower bound on the probability of not
committing under the target.  The next definition records the exact regret
cost of that probability.

\begin{definition}[Noncommitment regret profile]
\label{def:noncommitment-regret-profile}
\label{def:value-separation}
For an ordered pair $(\theta,\theta')$ and $q\in[0,1]$, define
\begin{equation}
    \mathcal G_T(\theta,\theta';q)
    :=
    \inf_{\substack{
      \mu\in\Pi_{\mathrm{safe}}^T(\{\theta,\theta'\})\\
      P_\theta^\mu((A^\mu)^c)\ge q
    }}
    R_T(\mu;\theta),
    \label{eq:noncommitment-regret-profile}
\end{equation}
with the convention $\inf\varnothing=+\infty$.  Equivalently, whenever the
constraint set is nonempty,
\begin{equation}
    \mathcal G_T(\theta,\theta';q)
    =J_T^\star(\theta)
    -
    \sup_{\substack{
      \mu\in\Pi_{\mathrm{safe}}^T(\{\theta,\theta'\})\\
      P_\theta^\mu((A^\mu)^c)\ge q
    }}
    J_T(\mu;\theta).
    \label{eq:noncommitment-value-profile}
\end{equation}
The function $q\mapsto\mathcal G_T(\theta,\theta';q)$ is nonnegative and
nondecreasing.
\end{definition}

This definition imposes no pathwise reward ceiling.  In a stochastic-reward
problem, the profile may be bounded directly in expectation.  The following
lemma gives a convenient pathwise bound for the deterministic examples.

\begin{lemma}[Pathwise bound on the profile]
\label{lem:binary-value-verification}
Suppose there are constants
\begin{equation*}
    0\le V_T(\theta,\theta')\le B_T(\theta)\le R_{\max,T}<\infty
\end{equation*}
such that every $\mu\in\Pi_{\mathrm{safe}}^T(\{\theta,\theta'\})$ satisfies,
$\overline P_{\theta,T}^\mu$-almost surely,
\begin{equation*}
\mathcal R_T\le
\begin{cases}
V_T(\theta,\theta'),
    &\text{on }(A^\mu)^c\cap\mathsf{Safe}_{\theta,T},\\
B_T(\theta),
    &\text{on }A^\mu\cap\mathsf{Safe}_{\theta,T},\\
R_{\max,T},
    &\text{always.}
\end{cases}
\end{equation*}
Then, for every $q\in[0,1]$,
\begin{align}
    \mathcal G_T(\theta,\theta';q)
    \ge\Bigl[&J_T^\star(\theta)-B_T(\theta)
      +\bigl(B_T(\theta)-V_T(\theta,\theta')\bigr)q \notag\\
      &-\bigl(R_{\max,T}-V_T(\theta,\theta')\bigr)\eta_T
    \Bigr]_+.
    \label{eq:binary-pathwise-value-gap}
\end{align}
\end{lemma}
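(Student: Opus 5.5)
The argument bounds the value of an arbitrary feasible policy in the profile's infimum pathwise, then takes expectations. Fix $q\in[0,1]$ and any $\mu\in\Pi_{\mathrm{safe}}^T(\{\theta,\theta'\})$ with $P_\theta^\mu((A^\mu)^c)\ge q$. If no such $\mu$ exists, the profile is $+\infty$ and there is nothing to prove. Otherwise it suffices to show
\[
J_T(\mu;\theta)\le B_T(\theta)-\bigl(B_T(\theta)-V_T(\theta,\theta')\bigr)q+\bigl(R_{\max,T}-V_T(\theta,\theta')\bigr)\eta_T .
\]
Subtracting this from $J_T^\star(\theta)$ and taking the infimum over $\mu$ gives the bracketed bound. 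Since $\mathcal G_T\ge0$ by Definition~\ref{def:noncommitment-regret-profile} (regret of a policy safe under $\theta$ is nonnegative), the positive part $[\cdot]_+$ follows.

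To get the value bound, split the sample space under $\overline P_{\theta,T}^\mu$ into three pieces:
\begin{itemize}
\item $E_1:=(A^\mu)^c\cap\mathsf{Safe}_{\theta,T}$, where $\mathcal R_T\le V_T(\theta,\theta')$;
\item $E_2:=A^\mu\cap\mathsf{Safe}_{\theta,T}$, where $\mathcal R_T\le B_T(\theta)$;
\item $E_3:=\mathsf{Safe}_{\theta,T}^c$, where $\mathcal R_T\le R_{\max,T}$.
\end{itemize}
Write $p_i:=\overline P_{\theta,T}^\mu(E_i)$. Then $J_T(\mu;\theta)\le V_Tp_1+B_Tp_2+R_{\max,T}p_3$.

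Next, rewrite this in terms of $p_1$ and $p_3$ only, using $p_2=1-p_1-p_3$:
\[
J_T(\mu;\theta)\le B_T-(B_T-V_T)p_1+(R_{\max,T}-B_T)p_3 .
\]
Uniform safety gives $p_3\le\eta_T$. The coefficient $R_{\max,T}-B_T$ is nonnegative by hypothesis, so this term is at most $(R_{\max,T}-B_T)\eta_T$.

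For $p_1$, the noncommitment constraint gives $p_1\ge P_\theta^\mu((A^\mu)^c)-p_3\ge q-\eta_T$. The coefficient $B_T-V_T$ is nonnegative, so
\[
-(B_T-V_T)p_1\le-(B_T-V_T)q+(B_T-V_T)\eta_T .
\]
Adding the two error terms gives $(R_{\max,T}-B_T)\eta_T+(B_T-V_T)\eta_T=(R_{\max,T}-V_T)\eta_T$, which is exactly the claimed coefficient.

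The only delicate step is this accounting of the two $\eta_T$-sized losses. One must not use $p_1\ge q$ directly, because that ignores the unsafe noncommitment paths. One must instead note that these paths are already counted inside $p_3$, and the stated coefficient shows the two errors combine to $R_{\max,T}-V_T$. A second, minor point is that $P_\theta^\mu(A^\mu)$ has the same meaning under the stopped and physical laws, as noted after \eqref{eq:general-stopped-record}. This is what lets the constraint on $(A^\mu)^c$ be used inside $\overline P_{\theta,T}^\mu$.
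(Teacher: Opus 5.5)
Your proposal is correct and follows the paper's proof: the same three-way split of the sample space, the bound $p_1\ge P_\theta^\mu((A^\mu)^c)-p_3$, and uniform safety $p_3\le\eta_T$. The only cosmetic difference is that you apply $p_3\le\eta_T$ to each error term separately before summing, whereas the paper first merges them into $(R_{\max,T}-V_T)f$ and then bounds $f\le\eta_T$.
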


\begin{proof}
Fix a feasible policy $\mu$ in
\eqref{eq:noncommitment-regret-profile}, and write
\begin{equation*}
    p:=P_\theta^\mu((A^\mu)^c),
    \qquad
    f:=\overline P_{\theta,T}^\mu(\mathsf{Safe}_{\theta,T}^c).
\end{equation*}
Let
$x:=\overline P_{\theta,T}^\mu((A^\mu)^c\cap\mathsf{Safe}_{\theta,T})$.
Then $x\ge p-f$.  The pathwise bounds give
\begin{align*}
    J_T(\mu;\theta)
    &\le
      V_Tx+B_T(1-f-x)+R_{\max,T}f\\
    &\le
      B_T-(B_T-V_T)p+(R_{\max,T}-V_T)f\\
    &\le
      B_T-(B_T-V_T)q+(R_{\max,T}-V_T)\eta_T,
\end{align*}
where the arguments $(\theta,\theta')$ have been suppressed.  Subtracting
from $J_T^\star(\theta)$ and taking the infimum over feasible policies gives
\eqref{eq:binary-pathwise-value-gap}.
\end{proof}

A useful simpler bound is
\begin{equation}
    \mathcal G_T(\theta,\theta';q)
    \ge
    \left[G_T(\theta,\theta')q-\zeta_T(\theta,\theta')\right]_+,
    \qquad q\in[0,1].
    \label{eq:binary-value-separation}
\end{equation}
The quantities $G_T$ and $\zeta_T$ are derived from the system; they are not
assumptions of the main theorem.

\begin{corollary}[Pathwise verification in deterministic systems]
\label{cor:additive-value-separation}
Suppose $J_{T,\mathrm{hard}}^\star(\theta)$ is achieved by a hard-safe policy
and bounds the reward of every hard-safe trajectory.  Suppose also that every
safe noncommitting trajectory earns at most
$V_{T,\mathrm{com}}(\theta,\theta')$ and that
$0\le\mathcal R_T\le R_{\max,T}$.  Then
\begin{align}
    \mathcal G_T(\theta,\theta';q)
    \ge
    \Bigl[&J_T^\star(\theta)-J_{T,\mathrm{hard}}^\star(\theta)
    +G_T(\theta,\theta')q \notag\\
    &-\bigl(R_{\max,T}-V_{T,\mathrm{com}}(\theta,\theta')\bigr)\eta_T
    \Bigr]_+,
    \label{eq:deterministic-profile-bound}
\end{align}
where
\begin{equation}
    G_T(\theta,\theta')
    :=J_{T,\mathrm{hard}}^\star(\theta)
      -V_{T,\mathrm{com}}(\theta,\theta').
    \label{eq:deterministic-commitment-gap}
\end{equation}
Since every hard-safe policy is chance-safe,
$J_T^\star(\theta)\ge J_{T,\mathrm{hard}}^\star(\theta)$.  Thus
\eqref{eq:binary-value-separation} holds with this $G_T$ and
$\zeta_T=R_{\max,T}\eta_T$.
\end{corollary}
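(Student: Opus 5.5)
The plan is to obtain Corollary~\ref{cor:additive-value-separation} as a direct specialization of Lemma~\ref{lem:binary-value-verification}. I will set
\[
V_T(\theta,\theta'):=V_{T,\mathrm{com}}(\theta,\theta'),\qquad B_T(\theta):=J_{T,\mathrm{hard}}^\star(\theta),
\]
and keep $R_{\max,T}$ as given. Substituting these into \eqref{eq:binary-pathwise-value-gap} and writing $B_T-V_T=G_T(\theta,\theta')$ from \eqref{eq:deterministic-commitment-gap} reproduces the bracket in \eqref{eq:deterministic-profile-bound} term by term. So the work reduces to checking the lemma's hypotheses.

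\textbf{Checking the pathwise bounds.} Fix $\mu\in\Pi_{\mathrm{safe}}^T(\{\theta,\theta'\})$.
\begin{itemize}
\item On $\mathsf{Safe}_{\theta,T}$ the realized trajectory is safe. In the deterministic system it is therefore a hard-safe trajectory, so its reward is at most $J_{T,\mathrm{hard}}^\star(\theta)=B_T(\theta)$. This covers the $A^\mu\cap\mathsf{Safe}_{\theta,T}$ case.
\item On $(A^\mu)^c\cap\mathsf{Safe}_{\theta,T}$ the trajectory is safe and noncommitting, so by hypothesis its reward is at most $V_{T,\mathrm{com}}=V_T$.
\item The bound $\mathcal R_T\le R_{\max,T}$ holds always.
\end{itemize}

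\textbf{Checking the ordering $0\le V_T\le B_T\le R_{\max,T}$.}
\begin{itemize}
\item $V_T\ge0$ follows from $\mathcal R_T\ge0$, provided a safe noncommitting trajectory exists; otherwise $V_{T,\mathrm{com}}$ may be replaced by $0$ without loss.
\item $V_T\le B_T$ holds because a safe noncommitting trajectory is in particular hard-safe. This uses the reading that "safe" here means hard-safe under $\theta$ in the deterministic setting.
\item $B_T\le R_{\max,T}$ holds because $J_{T,\mathrm{hard}}^\star$ is attained by some trajectory whose reward is at most $R_{\max,T}$.
\end{itemize}
The lemma then yields \eqref{eq:deterministic-profile-bound}.

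\textbf{Simplified form.} Every hard-safe policy lies in $\Pi_{\mathrm{safe}}^T(\{\theta\})$ for any $\eta_T\ge0$, so by \eqref{eq:general-safe-oracle} we have $J_T^\star(\theta)\ge J_{T,\mathrm{hard}}^\star(\theta)$. Hence the first difference in the bracket is nonnegative and can be dropped. The $\eta_T$ term satisfies $(R_{\max,T}-V_{T,\mathrm{com}})\eta_T\le R_{\max,T}\eta_T$ since $V_{T,\mathrm{com}}\ge0$. Because $[\cdot]_+$ is monotone, this gives \eqref{eq:binary-value-separation} with $\zeta_T=R_{\max,T}\eta_T$.

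\textbf{Main obstacle.} The only delicate step is the first pathwise bound. The realized trajectory on $\mathsf{Safe}_{\theta,T}$ must be admissible as a hard-safe trajectory, and $J_{T,\mathrm{hard}}^\star$ must bound the reward of every such trajectory, not merely the value of every hard-safe policy. The statement's hypothesis ("bounds the reward of every hard-safe trajectory") is phrased exactly to supply this, so I will invoke it directly rather than attempt a policy-level argument. In a deterministic system a policy's randomization only selects among trajectories, each of which is individually constrained.
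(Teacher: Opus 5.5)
Your proposal is correct and matches the paper's argument: the corollary is Lemma~\ref{lem:binary-value-verification} instantiated with $V_T=V_{T,\mathrm{com}}$ and $B_T=J^\star_{T,\mathrm{hard}}$, with the committing-branch bound supplied by the hard-safe-trajectory hypothesis as in Remark~\ref{rem:value-quantities}, followed by dropping $J_T^\star-J^\star_{T,\mathrm{hard}}\ge0$ and using $(R_{\max,T}-V_{T,\mathrm{com}})\eta_T\le R_{\max,T}\eta_T$. Your step $V_{T,\mathrm{com}}\le J^\star_{T,\mathrm{hard}}$ is valid only when $V_{T,\mathrm{com}}$ is read as the supremum over safe noncommitting trajectories (a merely loose upper bound would not give the ordering); that is the reading the paper's remark uses, so the argument goes through.
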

These conditions hold in the deterministic examples in this paper.  For
stochastic rewards, the main theorem still applies; one bounds
$\mathcal G_T(\theta,\theta';q)$ directly in expectation.
\begin{remark}[Intuition]
\label{rem:value-quantities}
The pathwise bounds involve four values, ordered as
\begin{equation*}
    V_{T,\mathrm{com}}(\theta,\theta')
    \;\le\;
    J^\star_{T,\mathrm{hard}}(\theta)
    \;\le\;
    J^\star_T(\theta)
    \;\le\;
    R_{\max,T}.
\end{equation*}
$R_{\max,T}$ is the physical ceiling: the most any trajectory can earn,
safe or not.  $J^\star_T(\theta)$ is the benchmark: the best
chance-$\eta_T$-safe value with $\theta$ known.
$J^\star_{T,\mathrm{hard}}(\theta)$ is the best value with $\theta$ known
and no violation tolerance; it enters for a pathwise reason: on
$\mathsf{Safe}_{\theta,T}$ every realized trajectory is hard-safe, so in a
deterministic-reward system its reward cannot exceed
$J^\star_{T,\mathrm{hard}}(\theta)$, whereas the chance-safe value is an
expectation and bounds no single path.  Using
$J^\star_{T,\mathrm{hard}}\le J^\star_T$ only makes the regret bound
conservative.  $V_{T,\mathrm{com}}(\theta,\theta')$ is the recoverable
value: the most a trajectory can earn under $\theta$ while retaining a
hard-safe continuation under $\theta'$, that is, while noncommitting.
The gap $G_T=J^\star_{T,\mathrm{hard}}-V_{T,\mathrm{com}}$ is the portion
of the target's value reachable only by committing. The linear profile
bound \eqref{eq:binary-value-separation} reads: forcing noncommitment
probability $q$ forfeits at least a $q$-fraction of this gap.
On the finite-headroom device, $R_{\max,T}=UT$ (full command every round),
$J^\star_{T,\mathrm{hard}}(\theta_b)=\min\{UT,\sqrt{TB_b}\}$ (spend the
true budget), $V_{T,\mathrm{com}}=\min\{UT,\sqrt{TB_d}\}$ (spend only the
worn budget); $G_T$ is the reward available only beyond the worn budget.
\end{remark}
\paragraph{Finite-headroom device.}
For the target $\theta_b$ and alternative $\theta_d$,
\begin{equation*}
    J_{T,\mathrm{hard}}^\star(\vartheta)
    =\min\{UT,\sqrt{TB_\vartheta}\},
    \qquad B_0=+\infty,
\end{equation*}
and every safe noncommitting trajectory earns at most
\begin{equation*}
    V_{T,\mathrm{com}}(\theta_b,\theta_d)
    =\min\{UT,\sqrt{TB_d}\}.
\end{equation*}
Thus
\begin{equation}
    G_T(\theta_b,\theta_d)
    =
    \min\{UT,\sqrt{TB_b}\}
    -
    \min\{UT,\sqrt{TB_d}\}.
    \label{eq:running-thermal-gap}
\end{equation}
Since $R_{\max,T}=UT$, Corollary~\ref{cor:additive-value-separation} gives
\begin{equation}
    \mathcal G_T(\theta_b,\theta_d;q)
    \ge
    \left[G_T(\theta_b,\theta_d)q-UT\eta_T\right]_+.
    \label{eq:thermal-linear-profile}
\end{equation}

\subsection{The binary limit}
\label{subsec:binary-limit-theorem}

For $p,q\in[0,1]$, let
\begin{equation*}
    d_{\mathrm{bin}}(p\|q)
    :=p\log\frac{p}{q}+(1-p)\log\frac{1-p}{1-q},
\end{equation*}
with the usual extended-value conventions.  For $\alpha\in[0,1)$ and
$D\in[0,\infty]$, define
\begin{equation}
    \psi_\alpha(D)
    :=
    \inf\left\{
      p\in[0,1-\alpha]:
      d_{\mathrm{bin}}(p\|1-\alpha)\le D
    \right\}.
    \label{eq:binary-kl-inverse}
\end{equation}
Set $\psi_1(D):=0$ and $\psi_\alpha(\infty):=0$.  The function
$\psi_\alpha$ is nonincreasing.  It is the smallest target noncommitment
probability compatible with divergence $D$ and alternative commitment
probability at most $\alpha$.

\begin{theorem}[Binary pre-commitment limit]
\label{thm:binary-precommitment-limit}
Fix an ordered pair $(\theta,\theta')$ whose uniformly safe class is nonempty.
For
$\pi\in\Pi_{\mathrm{safe}}^T(\{\theta,\theta'\})$, define
\begin{equation*}
    D_T^\pi
    :=D_{\mathrm{KL}}\!\left(
      P_{\theta,T}^{\pi,\mathrm{pre}}
      \,\middle\|\,
      P_{\theta',T}^{\pi,\mathrm{pre}}
    \right),
    \qquad
    a_T^\pi:=P_{\theta'}^\pi(A^\pi).
\end{equation*}
Then
\begin{equation}
    R_T(\pi;\theta)
    \ge
    \mathcal G_T\!\left(
      \theta,\theta';
      \psi_{a_T^\pi}(D_T^\pi)
    \right).
    \label{eq:binary-policy-exact-bound}
\end{equation}
Consequently,
\begin{equation}
    R_T(\pi;\theta)
    \ge
    \mathcal G_T\!\left(
      \theta,\theta';
      \psi_{\alpha_T(\theta,\theta')}\!\left(
        I_{\mathrm{pre}}(T;\theta,\theta')
      \right)
    \right).
    \label{eq:binary-exact-profile-bound}
\end{equation}
Since
\begin{equation*}
    \psi_\alpha(D)
    \ge
    \left[\frac12e^{-D}-\alpha\right]_+,
\end{equation*}
this implies the explicit bound
\begin{equation}
    R_T(\pi;\theta)
    \ge
    \mathcal G_T\!\left(
      \theta,\theta';
      \left[
        \frac12 e^{-I_{\mathrm{pre}}(T;\theta,\theta')}
        -\alpha_T(\theta,\theta')
      \right]_+
    \right).
    \label{eq:binary-bh-profile-bound}
\end{equation}
If the linear profile bound \eqref{eq:binary-value-separation} has been
verified, then
\begin{equation}
    R_T(\pi;\theta)
    \ge
    \left[
      G_T(\theta,\theta')\left(
        \frac12 e^{-I_{\mathrm{pre}}(T;\theta,\theta')}
        -\alpha_T(\theta,\theta')
      \right)
      -\zeta_T(\theta,\theta')
    \right]_+.
    \label{eq:binary-profile-bound}
\end{equation}
If, in addition,
$\beta_{\mathrm{pre}}(\theta,\theta')<\infty$,
$\alpha_T(\theta,\theta')\to0$, and
$\zeta_T(\theta,\theta')=o(G_T(\theta,\theta'))$, then
\begin{equation}
    \liminf_{T\to\infty}
    \inf_{\pi\in\Pi_{\mathrm{safe}}^T(\{\theta,\theta'\})}
    \frac{\sup_{\vartheta\in\{\theta,\theta'\}}R_T(\pi;\vartheta)}
         {G_T(\theta,\theta')}
    \ge
    \frac12e^{-\beta_{\mathrm{pre}}(\theta,\theta')}.
    \label{eq:binary-asymptotic-bound}
\end{equation}
\end{theorem}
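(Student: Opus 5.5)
The plan is to reduce everything to a two-point data-processing inequality on the stopped record, and then chain monotonicity statements.

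\textbf{Step 1: the policy-level bound \eqref{eq:binary-policy-exact-bound}.} Fix $\pi\in\Pi_{\mathrm{safe}}^T(\{\theta,\theta'\})$ and set $p:=P_\theta^\pi((A^\pi)^c)$. The event $A^\pi=\{\tau^\pi\le T\}$ is determined by the first coordinate of $S_T^\pi$, so it is $\mathcal F_{\tau^\pi}$-measurable. Applying data processing to the map $S_T^\pi\mapsto\mathbf 1_{(A^\pi)^c}$ gives
\[
d_{\mathrm{bin}}\bigl(p\,\|\,1-a_T^\pi\bigr)\le D_T^\pi .
\]
This inequality needs a case split. If $p\le 1-a_T^\pi$, then $p$ lies in the feasible set of \eqref{eq:binary-kl-inverse}, so $p\ge\psi_{a_T^\pi}(D_T^\pi)$. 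If $p>1-a_T^\pi$, then $p$ exceeds every element of that feasible set, and the same conclusion holds. The degenerate conventions cover the remaining cases: $a_T^\pi=1$ and $D=\infty$ both give $\psi=0$. Since $\pi$ is itself feasible for the infimum defining $\mathcal G_T(\theta,\theta';q)$ with $q=\psi_{a_T^\pi}(D_T^\pi)$, we get $R_T(\pi;\theta)\ge\mathcal G_T(\theta,\theta';q)$.

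\textbf{Step 2: passing to \eqref{eq:binary-exact-profile-bound}.} By definition, $a_T^\pi\le\alpha_T$ and $D_T^\pi\le I_{\mathrm{pre}}$. I will show that $\psi_\alpha(D)$ is nonincreasing in both arguments. In $D$ this is stated in the text. In $\alpha$, note that for $p\le 1-\alpha$ the quantity $d_{\mathrm{bin}}(p\|1-\alpha)$ is decreasing in $1-\alpha$ once $p\le 1-\alpha$, so raising $\alpha$ enlarges the constraint set near small $p$. It is easier to argue directly: $\psi_\alpha(D)$ equals the smaller root $p^*$ of $d_{\mathrm{bin}}(p\|1-\alpha)=D$ on $[0,1-\alpha]$ (or $0$ if none exists), and this root moves down as $1-\alpha$ decreases. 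Combining this with the nondecreasing property of $\mathcal G_T$ in $q$ gives \eqref{eq:binary-exact-profile-bound}.

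\textbf{Step 3: the explicit inequality $\psi_\alpha(D)\ge[\tfrac12e^{-D}-\alpha]_+$.} I expect this to be the main technical point. I will prove it through the Bretagnolle--Huber inequality applied to the binary pair: for any event $E$,
\[
P(E^c)+Q(E)\ge\tfrac12 e^{-D_{\mathrm{KL}}(P\|Q)}.
\]
Take $P=\mathrm{Bern}(p)$, $Q=\mathrm{Bern}(1-\alpha)$, with $E$ the complement of the first outcome. The left side is then $p+\alpha$, and since the binary divergence is at most $D$, we get $p+\alpha\ge\tfrac12 e^{-D}$ for every feasible $p$. Taking the infimum gives the claim, and the degenerate cases are trivial. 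Monotonicity of $\mathcal G_T$ then yields \eqref{eq:binary-bh-profile-bound}, and substituting \eqref{eq:binary-value-separation} gives \eqref{eq:binary-profile-bound}.

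\textbf{Step 4: the asymptotic statement \eqref{eq:binary-asymptotic-bound}.} For every $\pi$, the supremum over $\vartheta\in\{\theta,\theta'\}$ is at least $R_T(\pi;\theta)$, and $I_{\mathrm{pre}}\le\beta_{\mathrm{pre}}$. Dividing \eqref{eq:binary-profile-bound} by $G_T$ gives the uniform lower bound
\[
\tfrac12e^{-\beta_{\mathrm{pre}}}-\alpha_T-\zeta_T/G_T .
\]
Taking the infimum over $\pi$ and then the $\liminf$, the last two terms vanish by hypothesis. This step assumes $G_T>0$ eventually, which is implicit in $\zeta_T=o(G_T)$.
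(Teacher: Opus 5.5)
Your proposal is correct and follows the paper's proof essentially step for step. You use data processing for the event $(A^\pi)^c$ on the stopped record with the same case split on $p\le 1-a_T^\pi$, then monotonicity of $d_{\mathrm{bin}}(p\|\cdot)$ to pass from $a_T^\pi$ to $\alpha_T$ (which you package as monotonicity of $\psi_\alpha$ in $\alpha$), then Bretagnolle--Huber, and finally substitution of the linear profile bound. One wording slip in Step 2: for $p\le 1-\alpha$, $d_{\mathrm{bin}}(p\|1-\alpha)$ is increasing, not decreasing, in $1-\alpha$. Your conclusion and your root argument are nonetheless right.
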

\noindent \emph{Interpretation.}
Equation~\eqref{eq:binary-exact-profile-bound} is the exact causal
reduction; Equation~\eqref{eq:binary-bh-profile-bound} is a convenient
explicit relaxation, and Equation~\eqref{eq:binary-profile-bound}
follows after inserting the linear noncommitment-profile bound.
Safety controls the commitment probability under the alternative,
the stopped experiment controls how different the target commitment
probability can be, and the noncommitment profile converts the
remaining target-side probability into regret.
\begin{proof}
Let
\begin{equation*}
    p:=P_\theta^\pi((A^\pi)^c),
    \qquad
    q:=P_{\theta'}^\pi((A^\pi)^c)=1-a_T^\pi.
\end{equation*}
Data processing for the event $(A^\pi)^c$ gives
\begin{equation}
    d_{\mathrm{bin}}(p\|q)\le D_T^\pi.
    \label{eq:binary-data-processing}
\end{equation}
If $p\le q$, the definition of $\psi$ gives
$p\ge\psi_{a_T^\pi}(D_T^\pi)$; if $p>q$, the same inequality is immediate.
The policy $\pi$ is therefore feasible in the definition of the profile at
this value, which proves \eqref{eq:binary-policy-exact-bound}.

For the uniform bound, Definition~\ref{ass:binary-value-risk} gives
$q\ge1-\alpha_T(\theta,\theta')$.  If
$p\le1-\alpha_T$, then
$d_{\mathrm{bin}}(p\|1-\alpha_T)\le d_{\mathrm{bin}}(p\|q)$; if
$p>1-\alpha_T$, the same lower bound is immediate.  Hence
\begin{equation*}
    p\ge\psi_{\alpha_T(\theta,\theta')}(D_T^\pi).
\end{equation*}
Since
\begin{equation*}
    D_T^\pi\le I_{\mathrm{pre}}(T;\theta,\theta')
\end{equation*}
and $\psi_\alpha$ is nonincreasing,
\eqref{eq:binary-exact-profile-bound} follows.  Bretagnolle--Huber gives
\begin{equation*}
    \psi_\alpha(D)
    \ge
    \left[\frac12e^{-D}-\alpha\right]_+.
\end{equation*}
This yields \eqref{eq:binary-bh-profile-bound}.  Substituting
\eqref{eq:binary-value-separation} gives \eqref{eq:binary-profile-bound}.
The asymptotic statement follows from the horizon-uniform information bound
and the stated limits.
\end{proof}

The explicit term in \eqref{eq:binary-bh-profile-bound} is positive when
\begin{equation*}
    I_{\mathrm{pre}}(T;\theta,\theta')
    <\log\frac{1}{2\alpha_T(\theta,\theta')}.
\end{equation*}
For $\alpha_T=T^{-2}$, the threshold is $2\log T-\log2$.

\paragraph{Finite-headroom device.}
Using $\alpha_T(\theta_b,\theta_d)\le\eta_T$,
\eqref{eq:running-thermal-pre-information}, and
\eqref{eq:thermal-linear-profile}, Theorem~\ref{thm:binary-precommitment-limit}
gives
\begin{equation}
    R_T(\pi;\theta_b)
    \ge
    \left[
      G_T(\theta_b,\theta_d)
      \left(\frac12e^{-\beta_{\mathrm{th}}}-\eta_T\right)
      -UT\eta_T
    \right]_+.
    \label{eq:running-thermal-final-bound}
\end{equation}
With $\eta_T=T^{-2}$, fixed $0<\theta_b<\theta_d$, and an inactive per-step
cap, $G_T(\theta_b,\theta_d)=\Theta(\sqrt T)$.  At $\theta_b=0$,
$G_T(0,\theta_d)=UT-O(\sqrt T)=\Theta(T)$, so the same information bound gives
linear regret.

\begin{corollary}[Information required for a target regret]
\label{cor:binary-required-information}
Suppose the linear profile bound \eqref{eq:binary-value-separation} holds.  If a
uniformly safe policy satisfies $R_T(\pi;\theta)\le r_T$, then
\begin{equation}
    I_{\mathrm{pre}}(T;\theta,\theta')
    \ge
    \left[
      \log\frac{1}{
        2\left(
          \alpha_T(\theta,\theta')
          +\frac{r_T+\zeta_T(\theta,\theta')}
                 {G_T(\theta,\theta')}
        \right)
      }
    \right]_+.
    \label{eq:binary-required-information}
\end{equation}
Thus $r_T=o(G_T)$ requires unbounded pre-commitment information whenever
$\alpha_T\to0$ and $\zeta_T=o(G_T)$.
\end{corollary}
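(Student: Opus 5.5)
The plan is to read Corollary~\ref{cor:binary-required-information} as the contrapositive of the explicit bound \eqref{eq:binary-profile-bound} in Theorem~\ref{thm:binary-precommitment-limit}, solved for $I_{\mathrm{pre}}$. Write $I:=I_{\mathrm{pre}}(T;\theta,\theta')$ and suppress the arguments of $\alpha_T$, $G_T$ and $\zeta_T$. The policy $\pi$ is uniformly safe on $\Theta_0$ and hence lies in $\Pi_{\mathrm{safe}}^T(\{\theta,\theta'\})$, so the theorem applies to it. Combining \eqref{eq:binary-profile-bound} with the hypothesis $R_T(\pi;\theta)\le r_T$ gives
\begin{equation*}
    r_T\ \ge\ \Bigl[G_T\Bigl(\tfrac12 e^{-I}-\alpha_T\Bigr)-\zeta_T\Bigr]_+
    \ \ge\ G_T\Bigl(\tfrac12 e^{-I}-\alpha_T\Bigr)-\zeta_T .
\end{equation*}

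Next I isolate the exponential. Assume $G_T>0$; otherwise the right side of \eqref{eq:binary-required-information} is read as $0$ and there is nothing to prove. Dividing by $G_T$ gives $\tfrac12 e^{-I}\le \alpha_T+(r_T+\zeta_T)/G_T=:c$. If $c=0$, then $e^{-I}\le 0$, which forces $I=\infty$, and the claim holds trivially. If $c>0$, taking logarithms gives $I\ge\log\bigl(1/(2c)\bigr)$. Since $I\ge0$ as a KL divergence, I may replace the right side by its positive part, which yields \eqref{eq:binary-required-information}. The only bookkeeping is at these degenerate edges ($G_T=0$, $c=0$, $I=\infty$), which I will handle with the extended-value conventions already in force.

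For the asymptotic remark, suppose $r_T=o(G_T)$, $\alpha_T\to0$ and $\zeta_T=o(G_T)$. Then $c\to0$, so $\log(1/(2c))\to\infty$, and $I$ must be unbounded. In particular $\beta_{\mathrm{pre}}(\theta,\theta')=\infty$.

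There is no real obstacle here, since the work was done in the theorem. The only point needing care is that \eqref{eq:binary-profile-bound} uses the policy-free quantities $I_{\mathrm{pre}}$ and $\alpha_T$ rather than $D_T^\pi$ and $a_T^\pi$, so the conclusion bounds $I_{\mathrm{pre}}$ rather than the policy's own stopped divergence. That matches the statement as written. A sharper version would bound $D_T^\pi$ by applying the same manipulation to \eqref{eq:binary-policy-exact-bound} through $\psi$.
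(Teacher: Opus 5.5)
Your proposal is correct and is the paper's argument: the paper's proof is the one line ``apply \eqref{eq:binary-profile-bound} and rearrange,'' which is what you do when you combine that bound with $R_T(\pi;\theta)\le r_T$ and solve for $I_{\mathrm{pre}}$. Beyond the paper, you also handle the degenerate cases $G_T=0$ and $c=0$, and you use nonnegativity of KL to justify taking the positive part.
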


\begin{proof}
Apply \eqref{eq:binary-profile-bound} and rearrange.
\end{proof}

\begin{remark}[Other risk constraints]
\label{rem:cvar-commitment-ceiling}
The theorem uses safety only through the commitment allowance $\alpha_T$.  Any
risk constraint that bounds $P_{\theta'}^\pi(A^\pi)$ can be used.  For example,
if a nonnegative loss satisfies
$L_T\ge m_T\mathbf 1\{A^\pi\}$ under $\theta'$ and
$\mathrm{CVaR}_\beta(L_T)\le c_T$, then
\begin{equation*}
    P_{\theta'}^\pi(A^\pi)
    \le\frac{c_T}{m_T}.
\end{equation*}
Specifications that budget cumulative violations rather than bounding the
violation probability under every plausible model need not imply a small
commitment allowance and therefore fall outside the premise of the theorem;
in the safe linear bandit setting, \cite{GangradeChenSaligrama24} show that
such a budget purchases exactly the boundary exploration that uniform
chance safety forbids, and near-optimal learning is then possible.
\end{remark}

\begin{remark}[Generality]
\label{rem:generality}
The theorem uses only the two stopped laws and data processing.  It does not
require Markov dynamics, Gaussian noise, linearity, full-state observation, or
absolute continuity.  Absolute continuity is needed only for the sum identity
in Lemma~\ref{lem:general-stopped-kl}.  An application must define a
predictable commitment rule and bound $\alpha_T$, $I_{\mathrm{pre}}$, and
$\mathcal G_T$.
\end{remark}

\subsection{Continuous classes: optimizing over the alternative}
\label{subsec:continuous-worst-alternative}

The binary theorem applies to every incompatible pair in $\Theta_0$.  If
$\Theta_0$ is continuous, the pairwise bound can be maximized over the
alternative.

\begin{corollary}[Worst-case alternative and minimax bound]
\label{cor:continuous-sup}
Let $\Theta_0$ be arbitrary.  For a target $\theta\in\Theta_0$, let
$\mathcal A(\theta)\subseteq\Theta_0$ contain alternatives with predictable
commitment rules.  Each alternative may use its own rule.  Then every
$\pi\in\Pi_{\mathrm{safe}}^T(\Theta_0)$ satisfies
\begin{equation}
    R_T(\pi;\theta)
    \ge
    \sup_{\theta'\in\mathcal A(\theta)}
    \mathcal G_T\!\left(
      \theta,\theta';
      \psi_{\alpha_T(\theta,\theta')}\!\left(
        I_{\mathrm{pre}}(T;\theta,\theta')
      \right)
    \right),
    \label{eq:continuous-per-target-bound}
\end{equation}
and
\begin{equation}
    \mathfrak R_T(\Theta_0)
    \ge
    \sup_{\theta\in\Theta_0}
    \sup_{\theta'\in\mathcal A(\theta)}
    \mathcal G_T\!\left(
      \theta,\theta';
      \psi_{\alpha_T(\theta,\theta')}\!\left(
        I_{\mathrm{pre}}(T;\theta,\theta')
      \right)
    \right).
    \label{eq:continuous-minimax-bound}
\end{equation}
\end{corollary}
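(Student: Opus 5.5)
The plan is to reduce Corollary~\ref{cor:continuous-sup} to Theorem~\ref{thm:binary-precommitment-limit}, applied pair by pair, and then take suprema. Fix a target $\theta\in\Theta_0$ and a policy $\pi\in\Pi_{\mathrm{safe}}^T(\Theta_0)$.

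\textbf{Step 1: monotonicity of the safe class.} The first step is the inclusion
\[
\Pi_{\mathrm{safe}}^T(\Theta_0)\subseteq\Pi_{\mathrm{safe}}^T(\{\theta,\theta'\})
\qquad\text{for every }\theta'\in\mathcal A(\theta)\subseteq\Theta_0 .
\]
This is immediate from the intersection form \eqref{eq:general-safe-class}: the left side intersects over more environments. Consequently $\pi$ is admissible in the binary theorem for every such pair, and the pairwise class is nonempty, as the theorem requires.

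\textbf{Step 2: pairwise bound.} Fix $\theta'\in\mathcal A(\theta)$ together with its own predictable commitment rule $\mathcal D_t^{\theta\mid\theta'}$. This rule induces $\tau^\pi$, $A^\pi$, the stopped record, and the pairwise quantities $\alpha_T(\theta,\theta')$, $I_{\mathrm{pre}}(T;\theta,\theta')$ and $\mathcal G_T(\theta,\theta';\cdot)$. All of these are defined through the pairwise class $\Pi_{\mathrm{safe}}^T(\{\theta,\theta'\})$ and through that one rule, so nothing couples different alternatives. Applying \eqref{eq:binary-exact-profile-bound} then gives
\[
R_T(\pi;\theta)\ge \mathcal G_T\bigl(\theta,\theta';\psi_{\alpha_T(\theta,\theta')}(I_{\mathrm{pre}}(T;\theta,\theta'))\bigr).
\]
The left side $R_T(\pi;\theta)$ depends only on $\pi$ and $\theta$, not on $\theta'$ or on the commitment rule. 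Taking the supremum over $\theta'\in\mathcal A(\theta)$ therefore yields \eqref{eq:continuous-per-target-bound}. This is the only point needing care: I would verify that the regret is defined against the oracle $J_T^\star(\theta)$ over $\Pi_{\mathrm{safe}}^T(\{\theta\})$, which is independent of the pair, so the same quantity appears on the left for every $\theta'$. If $\mathcal A(\theta)$ is empty, the supremum is $-\infty$ (or $0$ by convention) and the bound is trivial because regret is nonnegative.

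\textbf{Step 3: minimax bound.} For any $\pi\in\Pi_{\mathrm{safe}}^T(\Theta_0)$,
\[
\sup_{\vartheta\in\Theta_0}R_T(\pi;\vartheta)\ge R_T(\pi;\theta)\qquad\text{for each }\theta\in\Theta_0 ,
\]
so the right side of \eqref{eq:continuous-per-target-bound} lower-bounds the worst-case regret of $\pi$ for every $\theta$. Taking the supremum over $\theta$ gives a lower bound that does not depend on $\pi$. Taking the infimum over $\pi$ then gives \eqref{eq:continuous-minimax-bound}, by Definition~\ref{def:general-oracle-attainability}.

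\textbf{Main obstacle.} The argument is mostly bookkeeping; the main obstacle is a measurability concern. Nothing requires $\theta'\mapsto$ (bound) to be measurable, and none of the steps needs it, because only pointwise inequalities followed by suprema are used.
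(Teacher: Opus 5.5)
Your proposal is correct and follows the paper's own argument. The inclusion $\Pi_{\mathrm{safe}}^T(\Theta_0)\subseteq\Pi_{\mathrm{safe}}^T(\{\theta,\theta'\})$ lets Theorem~\ref{thm:binary-precommitment-limit} apply to each pair under that alternative's own commitment rule; you then take the supremum over alternatives, then over targets, then the infimum over uniformly safe policies, and your remarks on empty $\mathcal A(\theta)$ and on the $\theta'$-independence of $R_T(\pi;\theta)$ are harmless bookkeeping that the paper leaves implicit.
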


\begin{proof}
Fix $\theta'\in\mathcal A(\theta)$.  A policy safe on $\Theta_0$ is safe on
$\{\theta,\theta'\}$, so Theorem~\ref{thm:binary-precommitment-limit} applies.
Taking the supremum over alternatives gives
\eqref{eq:continuous-per-target-bound}.  Taking the supremum over targets and
then the infimum over uniformly safe policies gives
\eqref{eq:continuous-minimax-bound}.
\end{proof}

If each pair satisfies the linear profile bound
\eqref{eq:binary-value-separation}, Corollary~\ref{cor:continuous-sup} gives the
explicit lower bound
\begin{equation}
    R_T(\pi;\theta)
    \ge
    \sup_{\theta'\in\mathcal A(\theta)}
    \left[
      G_T(\theta,\theta')\left(
        \frac12e^{-I_{\mathrm{pre}}(T;\theta,\theta')}
        -\alpha_T(\theta,\theta')
      \right)
      -\zeta_T(\theta,\theta')
    \right]_+.
    \label{eq:continuous-linear-bound}
\end{equation}
The optimization balances value and distinguishability.  Nearby alternatives
are harder to distinguish but may have a smaller value gap.

\paragraph{The finite-headroom device on a continuum.}
Let $\Theta_0=[0,\bar\theta]$, with
$B_\vartheta=d_0/\vartheta$ for $\vartheta>0$ and $B_0=+\infty$.  For a target
$\theta$ and alternative $\theta'\in(\theta,\bar\theta]$, commitment is the
first exit from the $\theta'$-budget.  Then
\begin{equation*}
    \alpha_T(\theta,\theta')\le\eta_T,
    \qquad
    \zeta_T(\theta,\theta')=UT\eta_T,
\end{equation*}
\begin{equation*}
    G_T(\theta,\theta')
    =\min\{UT,\sqrt{TB_\theta}\}
     -\min\{UT,\sqrt{TB_{\theta'}}\},
\end{equation*}
and
\begin{equation*}
    I_{\mathrm{pre}}(T;\theta,\theta')
    \le
    \beta(\theta,\theta')
    :=\frac{(\theta'-\theta)^2d_0}{2\sigma^2\theta'}.
\end{equation*}
Ignoring the vanishing $\eta_T$ terms, define
\begin{equation*}
    S_T(\theta,\theta')
    :=\frac12e^{-\beta(\theta,\theta')}G_T(\theta,\theta').
\end{equation*}

\emph{Interior target.}
Let $\theta>0$ and suppose the per-step cap is inactive.  Then
\begin{equation*}
    S_T(\theta,\theta')
    =\frac12\sqrt{Td_0}\,f_\theta(\theta'),
    \qquad
    f_\theta(\theta')
    :=e^{-\beta(\theta,\theta')}
      \bigl(\theta^{-1/2}-\theta'^{-1/2}\bigr).
\end{equation*}
The function $f_\theta$ does not depend on $T$, so the optimized lower bound is
$\Theta(\sqrt T)$.

\emph{Boundary target.}
For $\theta=0$,
\begin{equation*}
    G_T(0,\theta')=UT-\sqrt{Td_0/\theta'},
    \qquad
    \beta(0,\theta')=\frac{\theta'd_0}{2\sigma^2}.
\end{equation*}
Optimizing $S_T(0,\theta')$ gives
\begin{equation}
    \theta'^\star
    =\left(\frac{\sigma^4}{U^2d_0}\right)^{1/3}
      T^{-1/3}(1+o(1)),
    \qquad
    \beta^\star
    =\frac12\left(\frac{d_0}{\sigma U}\right)^{2/3}
      T^{-1/3}(1+o(1)),
    \label{eq:boundary-worst-alternative}
\end{equation}
and
\begin{equation}
    \sup_{\theta'\in(0,\bar\theta]}S_T(0,\theta')
    =\frac{UT}{2}
    -\frac34\left(\frac{d_0^2U}{\sigma^2}\right)^{1/3}T^{2/3}
    +o(T^{2/3}).
    \label{eq:boundary-sup-value}
\end{equation}
Thus
\begin{equation}
    \mathfrak R_T([0,\bar\theta])
    \ge\frac{UT}{2}(1-o(1)).
    \label{eq:continuous-minimax-thermal}
\end{equation}
The optimizing alternative approaches the boundary target at rate $T^{-1/3}$.

\subsection{One policy, many alternatives: safe lists}
\label{subsec:list-limit}

\begin{remark}[One policy against many alternatives]
\label{rem:multi-alternative-fano}
The binary theorem is pairwise.  Its information profile may use a different
policy for each alternative.  If the learner must choose among $M$
model-specific regimes, one policy must gather information about all $M$
possibilities.  The relevant quantity is
\begin{equation}
    I_{\mathrm{pre}}^{(M)}(T;\Theta_M)
    :=
    \sup_{\pi\in\Pi_{\mathrm{safe}}^T(\Theta_M)}I_\pi(J;S_T^\pi),
    \qquad J\sim\operatorname{Unif}[M].
    \label{eq:multiclass-information}
\end{equation}
When partial identification already improves control, the target is a
certified candidate list rather than exact classification.
\end{remark}

\begin{proposition}[Information required for a safe candidate list]
\label{prop:list-fano}
Let $J\sim\operatorname{Unif}[M]$, let $S$ be the information available when a
nonempty list $C(S)\subseteq[M]$ is selected, and fix
$L\in\{1,\ldots,M-1\}$.  Define
\begin{equation}
    \varepsilon_L
    :=P\bigl(J\notin C(S)\ \text{or}\ |C(S)|>L\bigr).
    \label{eq:list-fano-error}
\end{equation}
Then
\begin{equation}
    I(J;S)
    \ge
    (1-\varepsilon_L)\log\frac{M}{L}-h_2(\varepsilon_L),
    \label{eq:list-fano-bound}
\end{equation}
where $h_2(p):=-p\log p-(1-p)\log(1-p)$.  Consequently, if
\begin{equation*}
    P(J\notin C(S))\le\eta,
    \qquad
    P(|C(S)|>L)\le\delta,
    \qquad
    \eta+\delta\le\tfrac12,
\end{equation*}
then
\begin{equation}
    I(J;S)
    \ge
    (1-\eta-\delta)\log\frac{M}{L}-h_2(\eta+\delta).
    \label{eq:list-fano-two-sided}
\end{equation}
\end{proposition}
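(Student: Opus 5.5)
We prove the list-Fano bound by a data-processing argument in the spirit of Fano's inequality. The argument compares the true joint law of $(J,S)$ with the product law in which $J$ is independent of $S$.

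\textbf{Step 1 (reduction to a binary test).} Define the success event
\[
E:=\{J\in C(S),\ |C(S)|\le L\}.
\]
By definition, $P(E)=1-\varepsilon_L$ under the joint law $P_{JS}$. Now let $Q:=P_J\otimes P_S$ be the product of the marginals, so that
\[
I(J;S)=D_{\mathrm{KL}}(P_{JS}\|Q).
\]
Under $Q$, $J$ is uniform on $[M]$ and independent of $S$. Conditioning on $S$, the list $C(S)$ is fixed. If $|C(S)|\le L$, then $Q(J\in C(S)\mid S)=|C(S)|/M\le L/M$; otherwise $E$ fails. Hence $Q(E)\le L/M$.

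\textbf{Step 2 (data processing).} The indicator of $E$ is a measurable function of $(J,S)$. Data processing therefore gives
\[
I(J;S)\ge d_{\mathrm{bin}}\bigl(1-\varepsilon_L\,\|\,Q(E)\bigr).
\]
We then lower bound the right-hand side. Expanding,
\[
d_{\mathrm{bin}}(p\|r)=-h_2(p)+p\log\tfrac1r+(1-p)\log\tfrac1{1-r}.
\]
Dropping the nonnegative last term and using $r=Q(E)\le L/M$ yields
\[
d_{\mathrm{bin}}(p\|r)\ge p\log\tfrac{M}{L}-h_2(p).
\]
Since $h_2(1-\varepsilon_L)=h_2(\varepsilon_L)$, setting $p=1-\varepsilon_L$ gives \eqref{eq:list-fano-bound}. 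This direct expansion avoids any monotonicity case split on whether $p\ge r$, so $d_{\mathrm{bin}}$ is never needed beyond its definition. The usual conventions apply if $r=0$: then $p$ must also be $0$, or the divergence is infinite and the bound is trivial.

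\textbf{Step 3 (two-sided version).} A union bound gives $\varepsilon_L\le\eta+\delta\le\tfrac12$. Write $g(\varepsilon):=(1-\varepsilon)\log(M/L)-h_2(\varepsilon)$, which is the right-hand side of \eqref{eq:list-fano-bound}. On $[0,\tfrac12]$, the function $h_2$ is nondecreasing and the linear term is nonincreasing, so $g$ is nonincreasing there. Since both $\varepsilon_L$ and $\eta+\delta$ lie in $[0,\tfrac12]$ and $\varepsilon_L\le\eta+\delta$, we get $g(\varepsilon_L)\ge g(\eta+\delta)$, which is \eqref{eq:list-fano-two-sided}. The hypothesis $\eta+\delta\le\tfrac12$ is exactly what keeps us in this monotone regime.

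\textbf{Main obstacle.} The only delicate point is the bound $Q(E)\le L/M$. It must be computed with $J$ drawn independently of $S$, and lists longer than $L$ must be assigned to failure so that they contribute nothing. Everything else follows from data processing and elementary calculus.
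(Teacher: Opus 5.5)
Your proof is correct, but it takes a different route from the paper's. The paper uses the classical Fano entropy bookkeeping. With $E$ the error event, it bounds $H(J\mid S)\le H(\mathbf 1_E\mid S)+H(J\mid S,\mathbf 1_E)\le h_2(\varepsilon_L)+(1-\varepsilon_L)\log L+\varepsilon_L\log M$ and subtracts this from $H(J)=\log M$. Its two-sided corollary is handled exactly as in your Step 3: a union bound, then monotonicity on $[0,\tfrac12]$.

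You instead write $I(J;S)$ as the divergence from the product of marginals and apply data processing through the success indicator. The only problem-specific estimate you then need is $Q(E)\le L/M$. The paper's argument is shorter once standard entropy identities are granted, but it relies on $J$ being discrete and uniform, both for $H(J)=\log M$ and for $H(J\mid S,E)\le\log M$. Your argument uses uniformity only through $Q(E)\le L/M$, which buys two things:
\begin{itemize}
\item It extends verbatim to nonuniform priors, replacing $L/M$ by the largest prior mass of an $L$-set.
\item It also covers the continuous interval version in Remark~\ref{rem:list-rate-distortion}, which the paper states without proof. There, $Q\bigl(J\in C(S),\ \operatorname{diam}C(S)\le\ell\bigr)\le\ell/D$ yields \eqref{eq:continuous-list-fano} after the same monotonicity step, with no differential entropy needed.
\end{itemize}
Your route also makes visible the slack you discard, $\varepsilon_L\log\frac{1}{1-Q(E)}\ge0$. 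When $1-\varepsilon_L\ge L/M$, you could instead compare with $d_{\mathrm{bin}}(1-\varepsilon_L\,\|\,L/M)$. That gains an extra $\varepsilon_L\log\frac{M}{M-L}$, which the paper's cruder bound $H(J\mid S,E)\le\log M$ cannot see.
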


The proof is in the appendix.  The case $L=M$ gives no reduction, $L=1$ gives
exact specialization, and $1<L<M$ gives partial specialization.

\begin{remark}[Scalar continuous candidate sets]
\label{rem:list-rate-distortion}
Let $J\sim\operatorname{Unif}[\theta_{\min},\theta_{\max}]$, with
$D=\theta_{\max}-\theta_{\min}$.  Suppose the learner returns a set $C(S)$
such that
\begin{equation*}
    P\bigl(J\notin C(S)\ \text{or}\ \operatorname{diam}C(S)>\ell\bigr)
    \le\varepsilon\le\tfrac12.
\end{equation*}
Then
\begin{equation}
    I(J;S)
    \ge
    (1-\varepsilon)\log\frac{D}{\ell}-h_2(\varepsilon),
    \qquad
    \ell
    \ge
    D\exp\!\left(
      -\frac{I(J;S)+h_2(\varepsilon)}{1-\varepsilon}
    \right).
    \label{eq:continuous-list-fano}
\end{equation}
This is a converse for a scalar parameter on an interval.  Achieving the bound
still requires a safe sensing policy.
\end{remark}

\subsection{How to use the limits in an application}
\label{subsec:limit-interpretation}

For a new system:
\begin{enumerate}
    \item define a predictable commitment event;
    \item bound the commitment allowance $\alpha_T$ under the incompatible
    alternative;
    \item compute or lower-bound the noncommitment regret profile
    $\mathcal G_T$; and
    \item bound the stopped KL divergence, or the stopped mutual information in
    a multi-alternative problem.
\end{enumerate}
The theorem then gives a regret lower bound.  A positive result requires a
single uniformly safe policy that gathers the needed information and moves
safely to the selected regime.

%% file: safe_v15_package/setup_2.3_v3.tex
\subsection{Commitment and information before commitment}
\label{subsec:commitment-general}

Fix an ordered pair $(\theta,\theta')\in\Theta_0^2$.  The first environment is
the target: leaving the common regime may be valuable there.  The second is an
incompatible alternative: the same decision is not safely recoverable there.
These roles are relative to the decision under study; they do not impose a
global ordering on the model class.

In the deterministic constrained systems used below, let
$\mathcal U_t^{\mathrm{viab}}(h;\vartheta)$ denote the actions available after
history $h$ for which at least one continuation remains hard-safe under
$\vartheta$ through the horizon.

\begin{definition}[Pairwise commitment]
\label{def:pairwise-commitment}
For the ordered pair $(\theta,\theta')$, define
\begin{equation}
    \mathcal D_t^{\theta\mid\theta'}
    :=\bigl\{(h,u):u\notin
    \mathcal U_t^{\mathrm{viab}}(h;\theta')\bigr\}.
    \label{eq:viability-induced-commitment}
\end{equation}
For a policy $\pi$, let
\begin{equation}
    \tau^\pi
    :=\inf\bigl\{t\le T:(H_{t-1},U_t)
    \in\mathcal D_t^{\theta\mid\theta'}\bigr\},
    \label{eq:pairwise-commitment-time}
\end{equation}
with $\tau^\pi=T+1$ if the set is empty, and define
\begin{equation}
    A^\pi:=\{\tau^\pi\le T\}.
    \label{eq:pairwise-commitment-event}
\end{equation}
\end{definition}

Thus commitment is the first action after which no hard-safe continuation
remains under the incompatible alternative.  Because membership in
$\mathcal D_t^{\theta\mid\theta'}$ is determined by $(H_{t-1},U_t)$, the
commitment decision is made before the observation $Z_t$ generated by that
action is available.  For a general stochastic model, the same stopping-time
construction may be used with an application-specific predictable set
$\mathcal D_t^{\theta\mid\theta'}$; its safety consequence is then quantified by bounding the commitment allowance of Definition~\ref{ass:binary-value-risk} below.  No general
chance-constrained viability operator is needed for the lower bound.

\paragraph{Finite-headroom commitment.} Recall from Section~\ref{sec:intro} the controller chooses $u_t\in[0,U]$, receives reward $r_t=u_t$, and the safety margin evolves as
$     d_{t+1}=d_t-\theta u_t^2, \,\,\, d_1=d_0>0.$ For an ordered pair $0\le\theta_b<\theta_d$, write $B_d:=B_{\theta_d}=d_0/\theta_d$. Let
\begin{equation*}
    E_{t-1}(h):=\sum_{s=1}^{t-1}u_s^2.
\end{equation*}
Since future commands can be set to zero, for $\vartheta>0$,
\begin{equation}
    \mathcal U_t^{\mathrm{viab}}(h;\vartheta)
    =\left[0,\min\left\{U,
      \sqrt{\left(\frac{d_0}{\vartheta}-E_{t-1}(h)\right)_+}
    \right\}\right].
    \label{eq:thermal-viable-action-set}
\end{equation}
For the ordered pair $(\theta_b,\theta_d)$, commitment is therefore the first
exit from the dangerous-device budget:
\begin{equation}
    \mathcal D_t^{\theta_b\mid\theta_d}
    =\bigl\{(h,u):E_{t-1}(h)+u^2>B_d\bigr\}.
    \label{eq:thermal-commitment-set}
\end{equation}

%% file: safe_v15_package/mechanisms_v5_edited.tex
\section{Mechanisms and Examples}
\label{sec:mechanisms-examples}

Section~\ref{subsec:limit-interpretation} gives the application procedure for
the lower bound.  Here we study two mechanisms that bound the information
available before commitment: finite expenditure and a finite sensing window.
We apply them to a constrained linear-regulation problem and an
unknown-location problem.  The first yields linear regret; the second yields
safe partial adaptation through a certified candidate list.

\subsection{Finite expenditure before commitment}
\label{subsec:finite-expenditure}

\begin{lemma}[Finite pre-commitment expenditure]
\label{lem:finite-expenditure}
Fix an ordered pair $(\theta,\theta')$ and its commitment time $\tau^\pi$.
Suppose there is a nonnegative predictable quantity
$S_t=S(H_{t-1},U_t)$, constants $\kappa<\infty$ and $q\ge1$, and a number
$B_T<\infty$ such that every
$\pi\in\Pi_{\mathrm{safe}}^T(\{\theta,\theta'\})$ satisfies, pathwise,
\begin{equation}
    \sum_{t=1}^T \mathbf 1\{t<\tau^\pi\}S_t\le B_T,
    \label{eq:finite-expenditure-budget}
\end{equation}
and, whenever $t<\tau^\pi$,
\begin{equation}
    \mathrm{kl}^{\theta,\theta'}_t\le \kappa S_t^q.
    \label{eq:finite-expenditure-local-kl}
\end{equation}
Then
\begin{equation}
    I_{\mathrm{pre}}(T;\theta,\theta')\le \kappa B_T^q.
    \label{eq:finite-expenditure-information}
\end{equation}
In particular, if $\sup_T B_T<\infty$, then
$\beta_{\mathrm{pre}}(\theta,\theta')<\infty$.
\end{lemma}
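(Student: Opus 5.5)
The plan is to reduce everything to the stopped controlled-KL identity of Lemma~\ref{lem:general-stopped-kl}, bound the integrand pathwise, and only then take expectations and the supremum over policies. Fix any $\pi\in\Pi_{\mathrm{safe}}^T(\{\theta,\theta'\})$. First I check the hypothesis of Lemma~\ref{lem:general-stopped-kl}: for every pre-commitment pair, i.e.\ whenever $t<\tau^\pi$, \eqref{eq:finite-expenditure-local-kl} gives $\mathrm{kl}^{\theta,\theta'}_t\le\kappa S_t^q<\infty$. A finite KL divergence forces $Q_{\theta,t}(\cdot\mid H_{t-1},U_t)\ll Q_{\theta',t}(\cdot\mid H_{t-1},U_t)$, so absolute continuity holds on exactly the pairs that enter the stopped record. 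Hence
\begin{equation*}
    D_{\mathrm{KL}}\!\left(P_{\theta,T}^{\pi,\mathrm{pre}}\,\middle\|\,P_{\theta',T}^{\pi,\mathrm{pre}}\right)
    =\mathbb E_\theta^\pi\!\left[\sum_{t=1}^T\mathbf 1\{t<\tau^\pi\}\,\mathrm{kl}^{\theta,\theta'}_t\right].
\end{equation*}

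Next I bound the sum inside the expectation pathwise. On $\{t<\tau^\pi\}$ apply \eqref{eq:finite-expenditure-local-kl} termwise, giving
\begin{equation*}
\sum_t\mathbf 1\{t<\tau^\pi\}\,\mathrm{kl}_t^{\theta,\theta'}\le\kappa\sum_t\bigl(\mathbf 1\{t<\tau^\pi\}S_t\bigr)^q .
\end{equation*}
Since $q\ge1$ and the summands $a_t:=\mathbf 1\{t<\tau^\pi\}S_t$ are nonnegative, superadditivity of $x\mapsto x^q$ on $[0,\infty)$ gives $\sum_t a_t^q\le(\sum_t a_t)^q$. To verify this, normalize by $s=\sum_t a_t>0$, so that each ratio $a_t/s\in[0,1]$ satisfies $(a_t/s)^q\le a_t/s$. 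The budget \eqref{eq:finite-expenditure-budget} then bounds $(\sum_t a_t)^q\le B_T^q$ pathwise.

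Taking $\mathbb E_\theta^\pi$ of this deterministic bound gives $D_{\mathrm{KL}}(P_{\theta,T}^{\pi,\mathrm{pre}}\|P_{\theta',T}^{\pi,\mathrm{pre}})\le\kappa B_T^q$ for every uniformly safe $\pi$. The supremum in \eqref{eq:pairwise-pre-information} therefore yields \eqref{eq:finite-expenditure-information}. If $\sup_TB_T=:\bar B<\infty$, then $I_{\mathrm{pre}}(T;\theta,\theta')\le\kappa\bar B^q$ for all $T$, so $\beta_{\mathrm{pre}}(\theta,\theta')<\infty$ by \eqref{eq:pairwise-pre-capacity}.

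The only delicate step is the first one: justifying the identity of Lemma~\ref{lem:general-stopped-kl} without a separately assumed absolute-continuity hypothesis. I handle it as above, since finite one-step KL on pre-commitment pairs already implies absolute continuity there, and these are the only pairs that appear in the stopped likelihood ratio. I would also note that the pathwise bounds need to hold only $\overline P_{\theta,T}^\pi$-a.s., which suffices for the expectation step. Everything else is the elementary inequality $\sum a_t^q\le(\sum a_t)^q$.
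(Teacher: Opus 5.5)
Your proposal is correct and follows the paper's proof essentially step for step. You truncate $S_t$ to the pre-commitment window, bound $\sum_t s_t^q\le(\sum_t s_t)^q\le B_T^q$ pathwise, then pass to $\mathbb E_\theta^\pi$ through Lemma~\ref{lem:general-stopped-kl} and take the supremum over uniformly safe policies. Your added remark is a valid detail that the paper leaves implicit: the finite one-step bound $\kappa S_t^q$ already forces the absolute continuity that Lemma~\ref{lem:general-stopped-kl} requires on pre-commitment pairs.
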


The proof is in Appendix~\ref{app:proof-finite-expenditure}.  Since
$\sum_{t<\tau^\pi} S_t^q\le(\sum_{t<\tau^\pi} S_t)^q$ for $q\ge1$, the
pathwise expenditure bound directly bounds the stopped information.  For
$q=1$, information is at most proportional to expenditure.  For $q>1$,
splitting a finite budget among many small actions cannot produce persistent
information.

\begin{remark}[One-sided versus Signed Actions]
\label{rem:signed-parity}
The conversion
\[
    \sum_t u_t\le B
    \quad\Longrightarrow\quad
    \sum_t u_t^2\le\min\{B^2,UB\}
\]
requires $u_t\ge0$.  With signed inputs and only a constraint on the
signed sum, cancellations can keep $\sum_t u_t$ bounded while
$\sum_t u_t^2$ grows.  This caveat is specific to the linear
one-sided margin of Variant~I.  In Variant~II, safety directly bounds
the signed expenditure
\(
    \sum_t(a^\top u_t)^2,
\)
so reversing the active loading consumes the same quadratic headroom
rather than restoring it.  The information ceiling therefore persists
under fully signed requests and controls.
\end{remark}
\input{safe_v15_package/lq_two_variants_compact_edited}
\subsection{Finite sensing bandwidth before an evolving deadline}
\label{subsec:finite-sensing-window}

The linear-regulation example limits information because informative control
spends a hidden safety margin.  A different obstruction occurs when the
state evolves toward a boundary while the learner must search across many
possible sensing locations.

\begin{lemma}[Finite pre-commitment sensing window]
\label{lem:finite-sensing-window}
Consider a labeled problem with $J\sim\operatorname{Unif}[M]$.  Suppose that,
under every policy, at most $N_T$ label-dependent observations can occur
before specialization or an absorbing fallback, and each such observation
satisfies
\begin{equation}
    I_\pi(J;Z_t\mid H_{t-1},U_t,t<\tau^\pi)
    \le c_{\mathrm{obs}}.
    \label{eq:sensing-window-one-step}
\end{equation}
Then
\begin{equation}
    I_{\mathrm{pre}}^{(M)}(T;\Theta_M)
    \le N_T c_{\mathrm{obs}}.
    \label{eq:sensing-window-total}
\end{equation}
For the one-location Gaussian sensor
\begin{equation}
    Z_t=\mu\mathbf 1\{A_t=J\}+\xi_t,
    \qquad
    \xi_t\sim\mathcal N(0,\sigma^2),
    \label{eq:sensing-window-gaussian-sensor}
\end{equation}
where $A_t\in[M]$ is chosen adaptively, one may take
\begin{equation}
    c_{\mathrm{obs}}
    =
    \min\left\{\log 2,\frac{\mu^2}{2\sigma^2}\right\}.
    \label{eq:sensing-window-cobs}
\end{equation}
\end{lemma}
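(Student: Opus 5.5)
The plan is to apply the chain rule for mutual information to the stopped record $S_T^\pi$ of \eqref{eq:general-stopped-record}, mirroring the proof of Lemma~\ref{lem:general-stopped-kl} with $I_\pi(J;\cdot)$ in place of the pairwise KL.

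\textbf{Step 1 (chain rule).} Write the stopped record as the sequence of pairs $(U_t,Z_t)$ for $t<\tau^\pi$, together with the committing action. Because $\{t<\tau^\pi\}$ is $\mathcal F_t=\sigma(H_{t-1},U_t)$-measurable, the chain rule gives
\[
I_\pi(J;S_T^\pi)=\sum_{t=1}^T\Bigl[I_\pi\bigl(J;U_t\mid H_{t-1},\, t\le\tau^\pi\bigr)+\mathbb E\bigl[\mathbf 1\{t<\tau^\pi\}\,I_\pi(J;Z_t\mid H_{t-1},U_t)\bigr]\Bigr].
\]
The action terms vanish: $U_t\sim\pi_t(\cdot\mid H_{t-1})$ does not depend on $J$, so $J$ and $U_t$ are conditionally independent given $H_{t-1}$. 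This covers the committing action as well, which is the same cancellation used in Lemma~\ref{lem:general-stopped-kl}. The stopping index adds no information, since it is a function of the recorded pairs.

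\textbf{Step 2 (counting).} Only observations whose law depends on $J$ contribute. By hypothesis there are at most $N_T$ of these before $\tau^\pi$ (specialization) or before the absorbing fallback, after which observations are label-free. Each contributes at most $c_{\mathrm{obs}}$ by \eqref{eq:sensing-window-one-step}. The delicate point is that the number of label-dependent steps is random and adaptive. To handle it, I will define the predictable indicator $\chi_t$ that step $t$ is label-dependent and bound
\[
I_\pi(J;S_T^\pi)\le c_{\mathrm{obs}}\,\mathbb E\Bigl[\sum_t\chi_t\mathbf 1\{t<\tau^\pi\}\Bigr]\le N_T\,c_{\mathrm{obs}},
\]
using the pathwise count bound. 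For label-free steps, $Z_t$ is conditionally independent of $J$, so those terms vanish. Taking the supremum over $\pi\in\Pi_{\mathrm{safe}}^T(\Theta_M)$ then yields \eqref{eq:sensing-window-total}. I expect this step to be the main obstacle, specifically making ``label-dependent observation'' precise as a predictable event.

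\textbf{Step 3 (Gaussian constant).} Given $(H_{t-1},U_t)$, the location $A_t$ is fixed and $Z_t$ depends on $J$ only through the binary variable $B=\mathbf 1\{A_t=J\}$. By data processing,
\[
I(J;Z_t\mid\cdot)\le I(B;Z_t\mid\cdot)\le H(B)\le\log 2.
\]
For the second bound, use the convexity (golden) bound: with reference $\mathcal N(0,\sigma^2)$,
\[
I(B;Z_t)\le \mathbb E_B\, D_{\mathrm{KL}}\bigl(\mathcal N(\mu B,\sigma^2)\,\|\,\mathcal N(0,\sigma^2)\bigr)\le\frac{\mu^2}{2\sigma^2}.
\]
Taking the minimum of the two bounds gives \eqref{eq:sensing-window-cobs}.
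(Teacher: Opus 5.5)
Your proposal is correct and follows the paper's proof essentially step for step. You apply the chain rule to the stopped record, with the action and stopping terms vanishing because the policy kernels do not depend on $J$; you bound the sum by at most $N_T$ label-dependent terms, each at most $c_{\mathrm{obs}}$; and for the Gaussian sensor you reduce to $B_t=\mathbf 1\{A_t=J\}$ and combine the entropy bound $\log 2$ with the golden-formula bound using reference $\mathcal N(0,\sigma^2)$. Your predictable indicator $\chi_t$ spells out the adaptive counting that the paper asserts in one line. It also works when the one-step bound is only averaged over the event $\{t<\tau^\pi,\chi_t=1\}$, since $\mathbb E[\mathbf 1\{t<\tau^\pi\}\chi_t\, i_t]\le c_{\mathrm{obs}}\,P(t<\tau^\pi,\chi_t=1)$, where $i_t$ is the conditional mutual information given $(H_{t-1},U_t)$.
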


The proof is in Appendix~\ref{app:proof-sensing-window}.  This is the
one-policy search effect identified in Remark~\ref{rem:multi-alternative-fano}:
one policy must search over all labels, while each round can inspect only one
location. The same bandit-feedback search structure governs quickest changepoint
detection with adaptive sensing \cite{GopalanLS21}, where the delay--false-alarm
tradeoff arises.

\subsection{Unknown disturbance location: safe lists and partial adaptation}
\label{subsec:unknown-location-lists}

There are $M$ components and one unknown affected location $J\in[M]$.
Fix an integer $1\le N<T$.  During diagnostic operation,
\begin{equation}
    X_{i,t+1}=X_{i,t}+\Delta\mathbf 1\{i=J\},
    \qquad
    X_{i,1}=0,
    \qquad i\in[M],
    \label{eq:location-diagnostic-state}
\end{equation}
and safety requires $X_{i,t}\le H$ for all $i,t$.  Write
\begin{equation}
    H=N\Delta.
    \label{eq:location-deadline}
\end{equation}
Thus full diagnostic operation is safe for $N$ rounds; on the next round the
affected component would cross the boundary unless mitigation is selected.
At each diagnostic round the learner chooses one sensor $A_t\in[M]$ and
observes
\begin{equation}
    Z_t=\mu\mathbf 1\{A_t=J\}+\xi_t,
    \qquad
    \xi_t\stackrel{\mathrm{iid}}{\sim}\mathcal N(0,\sigma^2).
    \label{eq:location-sensor}
\end{equation}
Diagnostic operation earns unit reward per round.  At the deadline the learner
selects a nonempty certified list $\mathcal C_N\subseteq[M]$.  The
post-diagnostic controller divides both throughput and mitigation across the
retained candidates:
\begin{equation}
    q(\mathcal C_N)=\frac1{|\mathcal C_N|},
    \qquad
    u_i(\mathcal C_N)
    =
    \frac{\Delta}{|\mathcal C_N|}\mathbf 1\{i\in\mathcal C_N\}.
    \label{eq:location-list-controller}
\end{equation}
After the list is chosen,
\begin{equation}
    X_{i,t+1}
    =
    X_{i,t}
    +\Delta q(\mathcal C_N)\mathbf 1\{i=J\}
    -u_i(\mathcal C_N).
    \label{eq:location-post-list-state}
\end{equation}
If $J\in\mathcal C_N$, the affected state is arrested.  If
$J\notin\mathcal C_N$, it violates the constraint on the next round; we treat
this as an absorbing zero-reward failure.  Thus safety of the list policy is
equivalent to retaining the true location.

Proposition~\ref{prop:list-fano} and
Lemma~\ref{lem:finite-sensing-window} give the following list-size bound.
If a chance-safe policy returns a list
$\mathcal C_N$ with
$\mathbb P(J\notin\mathcal C_N)\le\eta$ and
$\mathbb P(|\mathcal C_N|>L)\le\delta$, then necessarily
\begin{equation}
    N\,c_{\mathrm{sens}}
    \ge
    (1-\eta-\delta)\log\frac{M}{L}
    -h_2(\eta+\delta),
    \qquad
    c_{\mathrm{sens}}
    :=
    \min\left\{\log 2,\frac{\mu^2}{2\sigma^2}\right\}.
    \label{eq:location-list-necessary-information}
\end{equation}
Equivalently, for a deterministic size cap and failure probability at most
$\eta\le1/2$,
\begin{equation}
    L
    \ge
    M\exp\left\{
      -\frac{N c_{\mathrm{sens}}+h_2(\eta)}{1-\eta}
    \right\}.
    \label{eq:location-list-size-lower-bound}
\end{equation}
Finite pre-commitment information can therefore reduce the uncertainty set
even when it is insufficient for exact localization.

One uniformly safe elimination rule is the following.  For candidate $j$, let
\begin{equation}
    \Lambda_{j,t}
    :=
    \sum_{s\le t:A_s=j}
    \log\frac{\phi_{0,\sigma}(Z_s)}{\phi_{\mu,\sigma}(Z_s)},
    \label{eq:location-elimination-statistic}
\end{equation}
where $\phi_{m,\sigma}$ is the density of $\mathcal N(m,\sigma^2)$, and set
\begin{equation}
    \mathcal C_t
    :=
    \left\{j:\Lambda_{j,t}<\log\frac1{\eta_T}\right\}.
    \label{eq:location-safe-list-rule}
\end{equation}
Under the true location, $\exp(\Lambda_{J,t})$ is a likelihood-ratio
martingale, so Ville's inequality gives
\begin{equation}
    \mathbb P_J\!\bigl(J\notin\mathcal C_t\text{ for some }t\le N\bigr)
    \le \eta_T.
    \label{eq:location-list-safety-guarantee}
\end{equation}
If the set in \eqref{eq:location-safe-list-rule} is empty, the learner
returns $[M]$.  Otherwise, it can inspect, for example, the least-sampled
surviving location.  This rule is not claimed to be optimal; it only shows
that partial certification can be uniformly safe.

Under the reward convention above, the robust, list-adaptive, and oracle
values are
\begin{align}
    J_T^{\mathrm{rob}}
    &=
    N+\frac{T-N}{M},
    \label{eq:location-robust-value}
    \\
    J_T^{\mathrm{list}}
    &=
    N+(T-N)
    \mathbb E\!
    \left[
      \frac{\mathbf 1\{J\in\mathcal C_N\}}{|\mathcal C_N|}
    \right],
    \label{eq:location-list-value}
    \\
    J_T^{\mathrm{oracle}}
    &=T.
    \label{eq:location-oracle-value}
\end{align}
The list policy improves on robust control when
\begin{equation*}
    \mathbb E\!\left[
      \frac{\mathbf 1\{J\in\mathcal C_N\}}{|\mathcal C_N|}
    \right]
    >\frac1M,
\end{equation*}
and it reaches the oracle only when $\mathcal C_N=\{J\}$ almost surely.
Whenever the first inequality is strict and exact localization does not hold,
\begin{equation}
    J_T^{\mathrm{rob}}
    <
    J_T^{\mathrm{list}}
    <
    J_T^{\mathrm{oracle}}.
    \label{eq:location-partial-ordering}
\end{equation}
Section~\ref{subsec:numerics-unknown-location} evaluates this rule over a
range of sensing windows.

\begin{table}[t]
\centering
\small
\renewcommand{\arraystretch}{1.18}
\begin{tabular}{@{}p{0.24\linewidth}p{0.27\linewidth}p{0.15\linewidth}p{0.23\linewidth}@{}}
\hline
\textbf{Example} & \textbf{Pre-commitment information} &
\textbf{Oracle gap} & \textbf{Consequence} \\
\hline
Unknown active constraint direction (linear regulation) &
bounded by finite margin expenditure. &
$\Theta(T)$ & linear regret under uniform safety \\
Unknown disturbance location &
at most $N c_{\mathrm{sens}}$ before mitigation &
list dependent & certified robust, partial, or exact adaptation \\
Repeatable safe experiment &
can grow linearly &
model dependent & oracle recovery when the transition remains safe \\
\hline
\end{tabular}
\caption{The effect
of bounded information depends on the oracle gap.  In a multi-alternative
problem, finite information can instead determine the size of a certified
safe list.}
\label{tab:mechanism-taxonomy}
\end{table}

%% file: safe_v15_package/lq_two_variants_compact_edited.tex
\subsection{Constrained linear regulation under two safety geometries}
\label{sec:linear_regulation_unknown_security}
\label{sec:lq-two-geometries}

We use a common two-channel output geometry to exhibit two finite-expenditure
mechanisms in linear--quadratic regulation.  Variant~I uses nonnegative reserve
deployment and a linear security margin; its information ceiling follows from
an order mismatch between linear expenditure and quadratic Gaussian
information.  Variant~II permits genuinely signed regulation and uses a
quadratic $I^2t$-type margin.  The second variant shows that the obstruction is
not an artifact of one-sided actuation.

\paragraph{Shared observation geometry.}
Let
\begin{equation}
    a\in\{a_0,a_1\},
    \qquad
    a_0=e_2,
    \qquad
    a_1=e_1,
    \label{eq:lq-shared-pair}
\end{equation}
and write $u_t=(p_t,q_t)$.  The regulated state $x_t$ is observed before the
action.  After applying $u_t$, the controller observes
\begin{equation}
    y_t=a^\top u_t+\xi_t,
    \qquad
    \xi_t\stackrel{\mathrm{iid}}{\sim}\mathcal N(0,\sigma^2).
    \label{eq:lq-shared-output}
\end{equation}
The active direction and the physical margin are not directly observed.  The
initial margin and its update law are known, so the controller can propagate
the candidate margin associated with each direction from its action history.
In both variants the target is $a_0$: the $p$-channel is then invisible to the
active security constraint, whereas under $a_1$ that same channel spends the
margin.

In each variant the controller minimizes a quadratic cost and is compared with
the parameter-aware chance-safe oracle.  Stable dynamics and bounded inputs
give a uniform stage-cost ceiling.  Subtracting the stage cost from this
ceiling converts the example to the bounded-reward convention without changing
regret, with an $O(T\eta_T)$ value-profile remainder.

\paragraph{Variant I: one-sided reserve deployment.}
Let
\begin{equation}
    u_t=(p_t,q_t)\in[0,U]^2,
    \qquad
    x_{t+1}=\rho x_t+w-p_t,
    \qquad
    x_1=\bar x:=\frac{w}{1-\rho},
    \label{eq:lq-one-sided-plant}
\end{equation}
where $0<\rho<1$ and $0<w\le U$, and define
\begin{equation}
    C_T^{(1)}(\pi;a)
    :=
    \mathbb E_a^\pi
    \sum_{t=1}^T
    \left[x_{t+1}^2+\lambda(p_t^2+q_t^2)\right].
    \label{eq:lq-one-sided-cost}
\end{equation}
Let $C_T^{(1),\star}(a)$ be the parameter-aware chance-safe oracle cost and
write
\[
    R_T^{(1)}(\pi;a)
    :=
    C_T^{(1)}(\pi;a)-C_T^{(1),\star}(a).
\]
The hidden security margin evolves as
\begin{equation}
    d_{t+1}=d_t-a^\top u_t,
    \qquad
    d_1=d_0>0,
    \qquad
    d_t\ge0.
    \label{eq:lq-one-sided-margin}
\end{equation}
Under $a_0$, the feasible policy $p_t=w$, $q_t=0$ gives
\begin{equation}
    C_T^{(1),\star}(a_0)
    \le
    \lambda w^2T+O(1).
    \label{eq:lq-one-sided-oracle}
\end{equation}

Define commitment relative to $a_1$ by
\begin{equation}
    \tau_1^\pi
    :=
    \inf\left\{
        t\le T:
        \sum_{s=1}^t p_s>d_0
    \right\},
    \qquad
    A_1^\pi:=\{\tau_1^\pi\le T\}.
    \label{eq:lq-one-sided-commitment}
\end{equation}
Commitment violates safety under $a_1$, so
$\mathbb P_{a_1}^\pi(A_1^\pi)\le\eta_T$.  On noncommitment,
$\sum_t p_t\le d_0$.  Since
\[
    x_{t+1}
    =
    \bar x-\sum_{s=1}^t\rho^{t-s}p_s,
\]
the deviations from $\bar x$ sum to at most $d_0/(1-\rho)$, and every
noncommitting trajectory satisfies
\[
    \sum_{t=1}^T x_{t+1}^2
    \ge
    \bar x^2T-\frac{2\bar x d_0}{1-\rho}.
\]
Thus the common-regime gap can be chosen so that
\begin{equation}
    G_T^{(1)}
    \ge
    (\bar x^2-\lambda w^2)T-O(1),
    \label{eq:lq-one-sided-gap}
\end{equation}
and hence $G_T^{(1)}=\Theta(T)$ whenever
$\lambda w^2<\bar x^2$.  The input box also gives
\[
    |x_t|
    \le
    \max\left\{\bar x,\frac{U-w}{1-\rho}\right\},
\]
which verifies the bounded stage-cost claim above.

Under $a_0$ and $a_1$, the residual means are $q_t$ and $p_t$, respectively,
so
\begin{equation}
    \mathrm{kl}_t(a_0,a_1)
    =
    \frac{(q_t-p_t)^2}{2\sigma^2}.
    \label{eq:lq-one-sided-kl}
\end{equation}
Before commitment, $\sum p_t\le d_0$ pathwise.  On the target-safe event,
$\sum q_t\le d_0$; on its complement, which has probability at most
$\eta_T$, only the input box remains.  Splitting the stopped-KL expectation
over these events gives
\begin{equation}
    I_{\mathrm{pre}}^{(1)}(T;a_0,a_1)
    \le
    \frac{\min\{4d_0^2,\,2Ud_0\}}{2\sigma^2}
    +
    \frac{TU^2\eta_T}{2\sigma^2}.
    \label{eq:lq-one-sided-information}
\end{equation}
For $\eta_T\le T^{-2}$, the profile is uniformly bounded, and
Theorem~\ref{thm:binary-precommitment-limit} gives
\[
    R_T^{(1)}(\pi;a_0)=\Omega(T).
\]

\paragraph{Continuous cross-loading.}
The same one-sided model also yields a useful continuous-class calculation.
Replace the binary loading class by
\begin{equation}
    a_\gamma=(\gamma,1)^\top,
    \qquad
    \gamma\in[0,\bar\gamma],
    \label{eq:lq-cross-loading-vector}
\end{equation}
with target $\gamma=0$, so that
\begin{equation}
    d_{t+1}=d_t-q_t-\gamma p_t,
    \qquad
    y_t=q_t+\gamma p_t+\xi_t.
    \label{eq:lq-cross-loading-model}
\end{equation}
For an alternative $\gamma>0$, let commitment be the first time
$\sum_{s\le t}(q_s+\gamma p_s)>d_0$.  This is a safety violation under
$\gamma$, so the commitment allowance is at most $\eta_T$.  Strictly before
commitment,
\[
    \sum_t p_t\le\frac{d_0}{\gamma}.
\]
The $q$-channel cancels from the likelihood ratio, and the same budget also
bounds the target value on noncommitment.  Therefore
\begin{equation}
    I_{\mathrm{pre}}(T;0,\gamma)
    \le
    \frac{\gamma Ud_0}{2\sigma^2},
    \qquad
    G_T^{(1)}(\gamma)
    \ge
    cT-\frac{B}{\gamma}-O(1),
    \label{eq:lq-cross-loading-bounds}
\end{equation}
where
\[
    c:=\bar x^2-\lambda w^2>0,
    \qquad
    B:=\frac{2\bar x d_0}{1-\rho}.
\]
Choose the horizon-dependent alternative $\gamma_T=T^{-1/2}$.  Then
$I_{\mathrm{pre}}(T;0,\gamma_T)=O(T^{-1/2})$ and
$G_T^{(1)}(\gamma_T)\ge cT-O(\sqrt T)$.  For $\eta_T=T^{-2}$,
\[
    \psi_{\eta_T}
    \!\left(I_{\mathrm{pre}}(T;0,\gamma_T)\right)
    =1-o(1),
\]
so Corollary~\ref{cor:continuous-sup} gives
\begin{equation}
    R_T^{(1)}(\pi;0)\ge cT-o(T).
    \label{eq:lq-cross-loading-regret}
\end{equation}
Thus even a cross-loading that vanishes as $T^{-1/2}$ can block
asymptotically the entire linear regulation advantage.

\paragraph{Variant II: signed regulation with quadratic headroom.}
Let $s_t\in\{-1,+1\}$ be an exogenous up/down regulation request, observed
before the action, included in the pre-action history, and independent of $a$.
Take
\begin{equation}
    u_t=(p_t,q_t)\in[-U,U]^2,
    \qquad
    x_{t+1}=\rho x_t+w s_t-p_t-q_t,
    \qquad
    x_1=0,
    \label{eq:lq-signed-plant}
\end{equation}
where $0\le\rho<1$ and $0<w\le U$, with cost
\begin{equation}
    C_T^{(2)}(\pi;a)
    :=
    \mathbb E_a^\pi
    \sum_{t=1}^T
    \left[x_{t+1}^2+\lambda(p_t^2+q_t^2)\right].
    \label{eq:lq-signed-cost}
\end{equation}
Let $C_T^{(2),\star}(a)$ be the parameter-aware chance-safe oracle cost and
write
\[
    R_T^{(2)}(\pi;a)
    :=
    C_T^{(2)}(\pi;a)-C_T^{(2),\star}(a).
\]
The hidden margin now evolves as
\begin{equation}
    d_{t+1}=d_t-(a^\top u_t)^2,
    \qquad
    d_1=d_0>0,
    \qquad
    d_t\ge0.
    \label{eq:lq-signed-margin}
\end{equation}
Under $a_0$, the feasible policy $p_t=ws_t$, $q_t=0$ keeps $x_t=0$ and
certifies $C_T^{(2),\star}(a_0)\le\lambda w^2T$.  Define
\begin{equation}
    \tau_2^\pi
    :=
    \inf\left\{
        t\le T:
        \sum_{s=1}^t p_s^2>d_0
    \right\}.
    \label{eq:lq-signed-commitment}
\end{equation}
Commitment violates safety under $a_1$, so
$\mathbb P_{a_1}^\pi(\tau_2^\pi\le T)\le\eta_T$.

On a target-safe, noncommitting trajectory,
\[
    \|p_{1:T}+q_{1:T}\|_2\le2\sqrt{d_0}.
\]
With
\[
    e_t:=ws_t-p_t-q_t=x_{t+1}-\rho x_t,
\]
we obtain
\[
    \|e_{1:T}\|_2
    \ge
    \bigl(w\sqrt T-2\sqrt{d_0}\bigr)_+,
    \qquad
    \|e_{1:T}\|_2
    \le
    (1+\rho)\|x_{2:T+1}\|_2.
\]
Therefore the common-regime gap satisfies
\begin{equation}
    G_T^{(2)}
    \ge
    \left[
        \frac{\bigl(w\sqrt T-2\sqrt{d_0}\bigr)_+^2}{(1+\rho)^2}
        -\lambda w^2T
    \right]_+.
    \label{eq:lq-signed-gap}
\end{equation}
If $\lambda<(1+\rho)^{-2}$, then $G_T^{(2)}=\Theta(T)$.  Moreover,
\[
    |x_t|\le\frac{w+2U}{1-\rho},
\]
so the stage cost is uniformly bounded here as well.

The one-step divergence remains
\[
    \mathrm{kl}_t(a_0,a_1)
    =
    \frac{(q_t-p_t)^2}{2\sigma^2}.
\]
Before $\tau_2^\pi$, $\sum p_t^2\le d_0$ pathwise.  On the target-safe
event, $\sum q_t^2\le d_0$; on its complement, the input box gives
$\sum(q_t-p_t)^2\le4U^2T$.  Hence
\begin{equation}
    I_{\mathrm{pre}}^{(2)}(T;a_0,a_1)
    \le
    \frac{2d_0}{\sigma^2}
    +
    \frac{2U^2T\eta_T}{\sigma^2}.
    \label{eq:lq-signed-information}
\end{equation}

\begin{corollary}[Linear regret with an unknown active constraint direction]
\label{cor:unknown-direction-linear-regret}
For \(i\in\{1,2\}\), let
\(\Pi_{\mathrm{safe}}^{T,i}\) denote the policies that are uniformly
chance-\(\eta_T\) safe for \(\{a_0,a_1\}\) in Variant \(i\).
Suppose that all system parameters are fixed independently of \(T\) and
that \(\eta_T\leq T^{-2}\).

For Variant I, assume
\(
    g_1
    :=
    \bar{x}^{\,2}-\lambda w^2
    >0,
    \,\,\,
    \beta_1
    :=
    \frac{\min\{4d_0^2,\,2Ud_0\}}{2\sigma^2}.
\)
For Variant II, assume
\(
    g_2
    :=
    w^2\bigl((1+\rho)^{-2}-\lambda\bigr)
    >0,
    \,\,\,
    \beta_2
    :=
    \frac{2d_0}{\sigma^2}.
\)
Then, for \(i\in\{1,2\}\),
\[
    \liminf_{T\to\infty}
    \inf_{\pi\in\Pi_{\mathrm{safe}}^{T,i}}
    \frac{R_T^{(i)}(\pi;a_0)}{T}
    \geq
    \frac{g_i}{2}e^{-\beta_i}
    >0.
\]
In particular, there are constants \(c_i>0\) and \(T_i<\infty\),
depending only on the fixed system parameters, such that every
\(\pi\in\Pi_{\mathrm{safe}}^{T,i}\) satisfies
\[
    R_T^{(i)}(\pi;a_0)\geq c_iT,
    \qquad
    T\geq T_i.
\]
Thus both safety geometries force linear oracle regret under uniform
chance safety.
\end{corollary}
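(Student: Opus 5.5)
The plan is to apply Theorem~\ref{thm:binary-precommitment-limit} in its linear-profile form \eqref{eq:binary-profile-bound}, one variant at a time, with target $\theta=a_0$ and alternative $\theta'=a_1$. The commitment times are $\tau_1^\pi$ from \eqref{eq:lq-one-sided-commitment} and $\tau_2^\pi$ from \eqref{eq:lq-signed-commitment}. Both are predictable: membership is decided by $(H_{t-1},U_t)$, since the running sums of $p_s$ (or $p_s^2$) are functions of past actions and the current one. Hence the stopping-time machinery and Lemma~\ref{lem:general-stopped-kl} apply. Absolute continuity holds because the observation kernels are Gaussian with common variance.

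The proof will verify the three ingredients for each $i$.

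\emph{Allowance.} Under $a_1$, commitment pushes the cumulative expenditure $\sum_s a_1^\top u_s$ (or $\sum_s (a_1^\top u_s)^2$) past $d_0$, so $d_{\tau+1}<0$. Thus $A_i^\pi\subseteq\mathsf{Safe}_{a_1,T}^c$, and Definition~\ref{ass:binary-value-risk} gives $\alpha_T\le\eta_T\le T^{-2}\to0$.

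\emph{Information.} Use \eqref{eq:lq-one-sided-information} and \eqref{eq:lq-signed-information}. With $\eta_T\le T^{-2}$, the remainder $TU^2\eta_T/(2\sigma^2)$ is $O(1/T)$, so $I_{\mathrm{pre}}^{(i)}\le\beta_i+o(1)$.

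\emph{Profile.} Convert costs to rewards by subtracting the stage cost from the uniform ceiling $c_{\max}$; this is justified by the bounded-state estimates. Rewards are then deterministic functions of the trajectory, with $R_{\max,T}=O(T)$. Apply Corollary~\ref{cor:additive-value-separation}. The hard-safe oracle policy ($p_t=w$, or $p_t=ws_t$) bounds $J^\star_{T,\mathrm{hard}}$ from below, which is all that enters. The noncommitting cost lower bounds preceding \eqref{eq:lq-one-sided-gap} and \eqref{eq:lq-signed-gap} give $V_{T,\mathrm{com}}$. Together these yield $G_T^{(i)}\ge g_iT-o(T)$ and $\zeta_T=R_{\max,T}\eta_T=O(T\cdot T^{-2})=o(1)$. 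For Variant~II this needs expanding $(w\sqrt T-2\sqrt{d_0})^2/(1+\rho)^2-\lambda w^2T=g_2T-O(\sqrt T)$.

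Substituting into \eqref{eq:binary-profile-bound} gives, for every $\pi\in\Pi_{\mathrm{safe}}^{T,i}$,
\[
R_T^{(i)}(\pi;a_0)\ge \bigl(g_iT-o(T)\bigr)\Bigl(\tfrac12e^{-\beta_i-o(1)}-T^{-2}\Bigr)-o(1).
\]
The right side does not depend on $\pi$. Dividing by $T$ and taking $\liminf$ yields $\tfrac{g_i}{2}e^{-\beta_i}$. Choosing $c_i=\tfrac{g_i}{4}e^{-\beta_i}$ and $T_i$ large enough that the error terms are absorbed gives the nonasymptotic claim.

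I expect the main obstacle to be the profile step, specifically verifying the hypotheses of Corollary~\ref{cor:additive-value-separation}. The noncommitting cost bound uses target safety, through $\sum q_t\le d_0$ or $\|q\|_2\le\sqrt{d_0}$ under $a_0$. It must therefore be stated on $(A^\mu)^c\cap\mathsf{Safe}_{a_0,T}$, as Lemma~\ref{lem:binary-value-verification} allows, with violation paths charged to the $\eta_T$ term. A second point is that the sign of $J^\star-J^\star_{\mathrm{hard}}\ge0$ must be used so that only a lower bound on the oracle is needed. Both are bookkeeping: the bounds were already derived in the section, and the remaining care is in keeping the $O(1)$ and $O(\sqrt T)$ corrections sublinear.
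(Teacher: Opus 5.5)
Your proposal is correct and follows essentially the same route as the paper's proof. The paper likewise bounds the commitment allowance by $\eta_T$ and the stopped information by $\beta_i+o(1)$. It takes the noncommitment profile to be at least $G_T^{(i)}q-O(T\eta_T)$, with $G_T^{(1)}\ge g_1T-O(1)$ and $G_T^{(2)}\ge g_2T-O(\sqrt T)$. It then applies Theorem~\ref{thm:binary-precommitment-limit}, divides by $T$, and takes the lower limit.

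Your explicit routing of the profile through Corollary~\ref{cor:additive-value-separation}, with the noncommitting bound stated on the target-safe event, only fills in bookkeeping the paper leaves implicit. One caveat: that corollary's pathwise hypothesis on $J^\star_{T,\mathrm{hard}}$ is immediate by compactness when the request sequence $s_{1:T}$ of Variant~II is fixed, as the paper's ``deterministic examples'' tacitly assume. If $s_{1:T}$ were random with $\rho>0$, the pathwise hard-safe optimum depends on $s$, and that hypothesis would need a separate argument.
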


\noindent The proof appears in Appendix. \\
\noindent \textbf{Role of signed actuation.}
Variant~I isolates the one-sided order mismatch and supplies the local
continuous cross-loading calculation.  Variant~II shows that neither the
information ceiling nor the linear-regret conclusion depends on one-sided
actuation: the request and both reserve injections may change sign, while the
active loading spends quadratic energy.  Sign reversal can undo a linear
signed expenditure, but it cannot undo an even expenditure: reversing the
active loading pays the quadratic headroom a second time.  In both variants,
full observation of the regulated state is insufficient because the active
security channel is visible only through the input-dependent residual.  A
persistent direct measurement of the margin would change the stopped
information profile and can remove the obstruction.

%% file: safe_v15_package/positive_v5_edited.tex
\section{Safe Oracle Recovery from Repeatable Experiments}
\label{sec:positive-information-recovery}

Theorem~\ref{thm:binary-precommitment-limit} gives the lower-bound
certificate, and Section~\ref{sec:mechanisms-examples} verifies it in several
systems.  We now give a sufficient condition for recovery.  If a
safety-preserving experiment can be repeated and remains informative, a
confidence-based controller can learn without leaving the safe set.
The result uses three properties:
\begin{enumerate}[leftmargin=*,nosep]
    \item a conservative experiment is safe and returns the system to a common
    continuation state;
    \item repeated experiments accumulate information about the unknown
    parameter; and
    \item the loss in oracle value is controlled by the parameter error.
\end{enumerate}
This is not a converse to the lower bound.  It identifies one setting in which
pre-commitment information is persistent and can be used safely.

\subsection{Repeatable safe experiments}
\label{subsec:repeatable-information-model}

Let
\(
    \theta\in[\underline\theta,\overline\theta],
    \,\,\,
    0<\underline\theta<\overline\theta,
\)
be a fixed unknown scalar parameter.  Time is divided into episodes.  Let
$\mathcal G_{t-1}$ denote the $\sigma$-field generated by the first $t-1$
episodes.  The learner chooses a $\mathcal G_{t-1}$-measurable conservative
design parameter
$\vartheta_t$ and executes an experiment $\kappa_{\vartheta_t}$.  The
experiment returns reward $r_t$ and an observation
\begin{equation}
    y_t=\theta a_t+\xi_t,
    \qquad
    \xi_t\stackrel{\mathrm{iid}}{\sim}\mathcal N(0,\sigma^2),
    \label{eq:positive-observation}
\end{equation}
where $a_t$ is $\mathcal G_{t-1}$-measurable and $\xi_t$ is independent of
$\mathcal G_{t-1}$.  Define the accumulated design information
\begin{equation}
    V_t:=\sum_{s=1}^t\frac{a_s^2}{\sigma^2},
    \qquad V_0:=0.
    \label{eq:positive-information-process}
\end{equation}

Let $v^\star(\theta)\in[0,r_{\max}]$ be the oracle value per episode.  For
$\vartheta\le\theta$, let $v_{\mathrm{bnd}}(\theta,\vartheta)$ be the expected
value of $\kappa_\vartheta$ under the true parameter $\theta$.

\begin{assumption}[Repeatable safe experiments]
\label{ass:positive-repeatability}
There is a nondecreasing function
$\omega:[0,\infty)\to[0,r_{\max}]$ with $\omega(0)=0$ such that:

\begin{enumerate}
    \item[(P1)] \emph{Fixed parameter.}
    The experiments do not change $\theta$, and the same parameter governs all
    episodes.

    \item[(P2)] \emph{Safe repeatability.}
    For every
    $\underline\theta\le\vartheta\le\theta\le\overline\theta$, executing
    $\kappa_\vartheta$ is safe under $\theta$ and, possibly after a fixed
    reset, returns the system to a common state from which the next experiment
    can be run safely.

    \item[(P3)] \emph{Value modulus.}
    Whenever $\vartheta\le\theta$,
    \begin{equation}
        0
        \le
        v^\star(\theta)-v_{\mathrm{bnd}}(\theta,\vartheta)
        \le
        \omega(\theta-\vartheta).
        \label{eq:positive-value-modulus}
    \end{equation}

    \item[(P4)] \emph{Bounded reward.}
    Almost surely, $0\le r_t\le r_{\max}$, and
    \begin{equation}
        \mathbb E_\theta[r_t\mid\mathcal G_{t-1}]
        =v_{\mathrm{bnd}}(\theta,\vartheta_t)
        \quad\text{whenever }\vartheta_t\le\theta.
        \label{eq:positive-conditional-value}
    \end{equation}

    \item[(P5)] \emph{Informative initialization.}
    When $\kappa_{\underline\theta}$ is used in the first episode,
    $a_1\ne0$ almost surely.  Hence $V_t>0$ almost surely for every $t\ge1$.
\end{enumerate}
\end{assumption}

Thus conservative operation is itself a repeatable experiment.  As the design
parameter approaches the truth, its expected value approaches the oracle value.

\subsection{Lower-confidence policy}
\label{subsec:positive-confidence}

For $t\ge1$, define the weighted least-squares estimate
\begin{equation}
    \widehat\theta_t
    :=
    \frac{\sum_{s=1}^t a_sY_s/\sigma^2}{V_t}.
    \label{eq:positive-wls}
\end{equation}
For a mixture scale $\nu>0$ and confidence level $\delta\in(0,1)$, let
\begin{equation}
    b(v,\delta;\nu)
    :=
    \left
    \{
      \frac{
        2(v+\nu)
        \log\!\left(
          \frac{\sqrt{1+v/\nu}}{\delta}
        \right)
      }{v^2}
    \right\}^{1/2},
    \qquad v>0.
    \label{eq:positive-mixture-radius}
\end{equation}

\begin{lemma}[Information-indexed confidence sequence]
\label{lem:positive-mixture-confidence}
Under \eqref{eq:positive-observation}, for every $\delta\in(0,1)$ and
$\nu>0$,
\begin{equation}
    \mathbb P_\theta\!\left(
      |\widehat\theta_t-\theta|
      \le b(V_t,\delta;\nu)
      \text{ for every }t\ge1
    \right)
    \ge1-\delta.
    \label{eq:positive-confidence-event}
\end{equation}
\end{lemma}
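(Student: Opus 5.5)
The plan is the standard method-of-mixtures (Robbins / Abbasi-Yadkori) argument for self-normalized martingales, specialized to the scalar Gaussian case.

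\emph{Rewriting the error as a martingale.} Write $Y_s=\theta a_s+\xi_s$ and substitute into \eqref{eq:positive-wls}. This gives
\[
\widehat\theta_t-\theta=\frac{S_t}{V_t},
\qquad
S_t:=\sum_{s\le t}\frac{a_s\xi_s}{\sigma^2}.
\]
Because $a_s$ is $\mathcal G_{s-1}$-measurable and $\xi_s\sim\mathcal N(0,\sigma^2)$ is independent of $\mathcal G_{s-1}$, we have, for every fixed $\lambda\in\mathbb R$,
\[
\mathbb E\bigl[e^{\lambda a_s\xi_s/\sigma^2-\lambda^2a_s^2/(2\sigma^2)}\bigm|\mathcal G_{s-1}\bigr]=1.
\]
Hence $M_t^\lambda:=\exp(\lambda S_t-\lambda^2V_t/2)$ is a nonnegative martingale with $M_0^\lambda=1$.

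\emph{Mixing over $\lambda$.} Integrate against the prior $\lambda\sim\mathcal N(0,1/\nu)$. By Tonelli, the mixture $\bar M_t:=\int M_t^\lambda\,d\mathcal N(0,1/\nu)(\lambda)$ is again a nonnegative supermartingale (in fact a martingale) with $\bar M_0=1$. The Gaussian integral evaluates in closed form:
\[
\bar M_t=\sqrt{\frac{\nu}{V_t+\nu}}\exp\!\left(\frac{S_t^2}{2(V_t+\nu)}\right).
\]

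\emph{Applying Ville's inequality.} Ville gives $\mathbb P(\exists t:\bar M_t\ge1/\delta)\le\delta$. On the complementary event, for every $t$,
\[
S_t^2<2(V_t+\nu)\log\!\left(\frac{\sqrt{1+V_t/\nu}}{\delta}\right).
\]
For $t\ge1$ we have $V_t>0$ by (P5), so dividing by $V_t^2$ yields exactly $|\widehat\theta_t-\theta|\le b(V_t,\delta;\nu)$ simultaneously for all $t\ge1$, as in \eqref{eq:positive-mixture-radius}. Where (P5) does not apply, $V_t>0$ may fail; at such times $\widehat\theta_t$ is undefined, and I would use the convention that the bound is vacuous there.

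\emph{Main technical point.} The step needing care is the supermartingale property of the mixture when $a_s$ is adaptively chosen and when the episode structure may make the $\mathcal G$-filtration coarser than the per-observation noise. I would check this by conditioning step by step on $\mathcal G_{s-1}$, using only that $a_s$ is predictable and that $\xi_s$ is independent of $\mathcal G_{s-1}$ (assuming also that $\xi_s$ is $\mathcal G_s$-measurable). Under those conditions the interchange of integral and conditional expectation is justified by nonnegativity. The remaining step, the Gaussian integral, is routine completion of the square.
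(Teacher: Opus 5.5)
Your proposal is correct and follows the paper's proof essentially step for step: the same error representation $\widehat\theta_t-\theta=\bigl(\sum_{s\le t}a_s\xi_s/\sigma^2\bigr)/V_t$, the same exponential martingale mixed against $\mathcal N(0,\nu^{-1})$ to give $\sqrt{\nu/(V_t+\nu)}\exp\bigl(S_t^2/(2(V_t+\nu))\bigr)$, then Ville's inequality and division by $V_t^2$. Your remarks on $V_t>0$ via (P5) and on the filtration and measurability conditions are points the paper leaves implicit; they do not change the argument.
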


The proof is in Appendix~\ref{app:positive-confidence-proof}.  The radius is
indexed by the accumulated information $V_t$ and is of order
$\sqrt{\log V_t/V_t}$ for large $V_t$.

The controller uses the lower endpoint of this confidence sequence:
\begin{equation}
    \vartheta_1:=\underline\theta,
    \qquad
    \vartheta_t
    :=
    \left[
      \widehat\theta_{t-1}
      -b(V_{t-1},\delta;\nu)
    \right]_{\underline\theta}^{\overline\theta},
    \quad t\ge2,
    \label{eq:positive-lcb-policy}
\end{equation}
where
\[
    [x]_{\underline\theta}^{\overline\theta}
    :=\min\{\overline\theta,\max\{\underline\theta,x\}\}.
\]
At episode $t$, it runs $\kappa_{\vartheta_t}$, observes $(r_t,Y_t)$, and
updates \eqref{eq:positive-wls}.

\subsection{Safe recovery theorem}
\label{subsec:positive-general-theorem}

The parameter-aware oracle receives $v^\star(\theta)$ per episode.  In this
episodic specialization, its expected regret is
\begin{equation}
    R_T(\pi;\theta)
    :=
    \sum_{t=1}^T
    \left(
      v^\star(\theta)-\mathbb E_\theta^\pi[r_t]
    \right).
    \label{eq:positive-episodic-regret}
\end{equation}

\begin{theorem}[Safe recovery from repeatable information]
\label{thm:positive-information-recovery}
Suppose Assumption~\ref{ass:positive-repeatability} holds.  Then the policy
\eqref{eq:positive-lcb-policy} satisfies, for every
$\theta\in[\underline\theta,\overline\theta]$,
\begin{equation}
    \overline P_{\theta,T}^\pi(\mathsf{Safe}_{\theta,T}^c)
    \le\delta,
    \label{eq:positive-theorem-safety}
\end{equation}
and
\begin{equation}
    R_T(\pi;\theta)
    \le
    r_{\max}
    +
    \sum_{t=2}^T
    \mathbb E_\theta^\pi\!\left[
      \omega\!\left(
        2b(V_{t-1},\delta;\nu)
      \right)
    \right]
    +
    r_{\max}(T-1)\delta.
    \label{eq:positive-information-regret-bound}
\end{equation}
Consequently, for $\eta_T\in(0,1)$, setting $\delta=\eta_T$ gives
$\pi\in\Pi_{\mathrm{safe}}^T([\underline\theta,\overline\theta])$.
\end{theorem}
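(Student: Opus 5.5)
The plan is to work on the good event
\[
    \mathcal E:=\bigl\{|\widehat\theta_t-\theta|\le b(V_t,\delta;\nu)\ \text{for every }t\ge1\bigr\},
\]
which by Lemma~\ref{lem:positive-mixture-confidence} has $\mathbb P_\theta(\mathcal E)\ge1-\delta$. Assumption (P5) ensures $V_t>0$ for every $t\ge1$, so $\widehat\theta_t$ and $b(V_t,\delta;\nu)$ are well defined from episode $2$ onward.

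\emph{Safety.} First show that on $\mathcal E$ every design satisfies $\vartheta_t\le\theta$. For $t=1$ this holds because $\vartheta_1=\underline\theta\le\theta$. For $t\ge2$, on $\mathcal E$ we have $\widehat\theta_{t-1}-b(V_{t-1},\delta;\nu)\le\theta$. Clipping from above at $\overline\theta\ge\theta$ and from below at $\underline\theta\le\theta$ preserves this inequality, so $\underline\theta\le\vartheta_t\le\theta$. Assumption (P2) then says each executed experiment is safe under $\theta$ and returns the system to a common continuation state, so the next experiment is again admissible. An induction over episodes (P1 keeps $\theta$ fixed) gives $\mathcal E\subseteq\mathsf{Safe}_{\theta,T}$. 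Hence the violation probability is at most $\mathbb P_\theta(\mathcal E^c)\le\delta$. Because this holds for every $\theta$ in the interval, choosing $\delta=\eta_T$ places the policy in $\Pi_{\mathrm{safe}}^T([\underline\theta,\overline\theta])$.

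\emph{Regret.} The first episode costs at most $r_{\max}$, since $v^\star\le r_{\max}$ and $r_t\ge0$. For $t\ge2$, split on the event $\mathcal E_{t-1}:=\{|\widehat\theta_{t-1}-\theta|\le b(V_{t-1},\delta;\nu)\}$, which is $\mathcal G_{t-1}$-measurable and contains $\mathcal E$:
\[
    v^\star(\theta)-\mathbb E[r_t]
    =\mathbb E\bigl[(v^\star-\mathbb E[r_t\mid\mathcal G_{t-1}])\mathbf 1_{\mathcal E_{t-1}}\bigr]
    +\mathbb E\bigl[(v^\star-r_t)\mathbf 1_{\mathcal E_{t-1}^c}\bigr].
\]
The second term is at most $r_{\max}\mathbb P(\mathcal E_{t-1}^c)\le r_{\max}\delta$, because $0\le r_t$ and $v^\star\le r_{\max}$. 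On $\mathcal E_{t-1}$ we have $\vartheta_t\le\theta$, so (P4) gives $\mathbb E[r_t\mid\mathcal G_{t-1}]=v_{\mathrm{bnd}}(\theta,\vartheta_t)$, and (P3) gives $v^\star-v_{\mathrm{bnd}}\le\omega(\theta-\vartheta_t)$.

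It remains to bound $\theta-\vartheta_t\le 2b(V_{t-1},\delta;\nu)$ on $\mathcal E_{t-1}$:
\begin{itemize}
    \item If no clipping occurs, $\theta-\vartheta_t=(\theta-\widehat\theta_{t-1})+b\le 2b$.
    \item If the lower clip is active, then $\vartheta_t=\underline\theta\ge\widehat\theta_{t-1}-b$, so $\theta-\vartheta_t\le\theta-\widehat\theta_{t-1}+b\le 2b$.
    \item The upper clip cannot push $\vartheta_t$ below the unclipped value, since on $\mathcal E_{t-1}$ that value is already $\le\theta\le\overline\theta$.
\end{itemize}
Since $\omega$ is nondecreasing and nonnegative, the first term is at most $\mathbb E[\omega(2b(V_{t-1},\delta;\nu))]$. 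Summing over $t=2,\dots,T$ and adding the first-episode term gives \eqref{eq:positive-information-regret-bound}.

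The main obstacle is the measurability bookkeeping. Conditioning must use $\mathcal G_{t-1}$-measurable events, which is why the proof uses the per-time events $\mathcal E_{t-1}$ rather than $\mathcal E$ directly. The safety induction must also use only the time-uniform event $\mathcal E$, so that a single failure probability $\delta$ covers all episodes. Everything else is routine.
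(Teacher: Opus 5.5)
Your proposal is correct and follows the paper's proof essentially step for step: the time-uniform confidence event from Lemma~\ref{lem:positive-mixture-confidence} together with (P2) gives safety, and the regret bound comes from splitting on the $\mathcal G_{t-1}$-measurable events $\mathcal E_{t-1}\supseteq\mathcal E$, using the same clipping case analysis, (P3)--(P4) on $\mathcal E_{t-1}$, and the $r_{\max}$ bound on its complement. Two details the paper leaves implicit are handled explicitly in your version: the upper clip is inactive on $\mathcal E_{t-1}$, and nonnegativity of $\omega$ is what lets you drop the indicator.
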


The proof is in Appendix~\ref{app:positive-theorem-proof}.  Whenever
$|\widehat\theta_{t-1}-\theta|\le b(V_{t-1},\delta;\nu)$, the projection in
\eqref{eq:positive-lcb-policy} gives $\vartheta_t\le\theta$ and
$\theta-\vartheta_t\le2b(V_{t-1},\delta;\nu)$.  Thus (P2) gives safety and
(P3) bounds the loss in that episode.  Lemma~\ref{lem:positive-mixture-confidence}
makes these inequalities simultaneous with probability at least $1-\delta$;
bounded reward controls the remaining event.

\subsection{Rates from information growth}
\label{subsec:positive-rates}

Theorem~\ref{thm:positive-information-recovery} converts a lower bound on
$V_t$ into a regret rate.

\begin{corollary}[Polynomial information growth]
\label{cor:positive-information-growth}
Suppose Assumption~\ref{ass:positive-repeatability} holds and, uniformly over
$\theta\in[\underline\theta,\overline\theta]$, there are constants
$c_-,c_+>0$ and $\alpha\in(0,1]$ such that
\begin{equation}
    c_-t^\alpha
    \le V_t\le c_+t
    \qquad\text{almost surely for every }t\ge1.
    \label{eq:positive-information-growth-assumption}
\end{equation}
Assume also that $\omega(r)\le Lr^q$ for some $L<\infty$ and $q>0$.  Run the
policy with $\delta=\delta_T=T^{-2}$ and fixed $\nu>0$.  Then
\begin{equation}
    \sup_{\theta\in[\underline\theta,\overline\theta]}
    R_T(\pi;\theta)
    =
    \begin{cases}
    O\!\left(
      L T^{1-\alpha q/2}(\log T)^{q/2}
    \right),
    & \alpha q<2,
    \\[1mm]
    O\!\left(
      L(\log T)^{1+q/2}
    \right),
    & \alpha q=2,
    \\[1mm]
    O\!\left(
      L(\log T)^{q/2}
    \right),
    & \alpha q>2.
    \end{cases}
    \label{eq:positive-rate-taxonomy}
\end{equation}
The hidden constants may depend on $c_-$, $c_+$, $\alpha$, $\nu$, $q$, and
$r_{\max}$, but not on $T$ or $\theta$.
\end{corollary}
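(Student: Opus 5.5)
The plan is to start from Theorem~\ref{thm:positive-information-recovery} with $\delta=\delta_T=T^{-2}$ and bound each of its three terms. The remainder term $r_{\max}(T-1)\delta_T$ is $O(r_{\max}/T)$. The first-episode term is $r_{\max}$. Both are dominated by every case of \eqref{eq:positive-rate-taxonomy}, which is at least of order $(\log T)^{q/2}$ up to constants. All the work therefore lies in the middle sum
\[
    \sum_{t=2}^T\mathbb E_\theta^\pi\bigl[\omega(2b(V_{t-1},\delta_T;\nu))\bigr]
    \le L2^q\sum_{t=2}^T\mathbb E_\theta^\pi\bigl[b(V_{t-1},\delta_T;\nu)^q\bigr].
\]

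The next step is to make the radius \eqref{eq:positive-mixture-radius} deterministic in $t$ using the almost-sure sandwich \eqref{eq:positive-information-growth-assumption}. For $v\ge c_-$ we have $(v+\nu)/v^2\le(1+\nu/c_-)/v$. Because of the log term we need both sides of the sandwich:
\[
    \log\bigl(\sqrt{1+v/\nu}/\delta_T\bigr)\le\tfrac12\log(1+c_+T/\nu)+2\log T=O(\log T)
\]
uniformly for $v\le c_+T$. Hence, almost surely,
\[
    b(V_{t-1},\delta_T;\nu)^2\le C\,\frac{\log T}{c_-(t-1)^\alpha},
\]
with $C$ depending only on $\nu,c_-,c_+$. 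I use that $b$ is decreasing in $v$ on the relevant range up to the log factor. This is the main care point: I would not claim monotonicity of $b$ itself. Instead I bound the two factors separately, the rational factor by the lower bound on $V$ and the log factor by the upper bound on $V$, as above. The bound holds for every $\theta$, so uniformity over $[\underline\theta,\overline\theta]$ is automatic.

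It remains to evaluate
\[
    L(C\log T/c_-)^{q/2}\sum_{s=1}^{T-1}s^{-\alpha q/2}.
\]
The sum has three regimes. If $\alpha q<2$, it is $O(T^{1-\alpha q/2})$, giving $O(LT^{1-\alpha q/2}(\log T)^{q/2})$. If $\alpha q=2$, it is $O(\log T)$, giving $O(L(\log T)^{1+q/2})$. If $\alpha q>2$, it is $O(1)$, giving $O(L(\log T)^{q/2})$. Adding the negligible $r_{\max}+r_{\max}T^{-1}$ terms completes the argument, and the constants depend only on $c_\pm,\alpha,\nu,q,r_{\max}$, as claimed.

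I do not expect a real obstacle here. The only subtlety is the $\log(1+V/\nu)$ factor inside $b$, which is why the upper bound $V_t\le c_+t$ is needed.
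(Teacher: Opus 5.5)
Your proposal is correct and takes essentially the same route as the paper. You use the two-sided growth condition to get a deterministic bound $b(V_{t-1},\delta_T;\nu)\lesssim (t-1)^{-\alpha/2}\sqrt{\log T}$, then apply $\omega(r)\le Lr^q$ and the standard partial-sum bounds for $\sum_{s=1}^{T-1}s^{-\alpha q/2}$. Where you differ is only in detail: you spell out that the lower side of the sandwich controls the rational factor and the upper side $V_t\le c_+t$ controls the logarithm, whereas the paper leaves this implicit in the dependence of its constant $C_0$ on $c_+$.
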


The proof is in Appendix~\ref{app:positive-rate-proof}.  For example, if
$V_t\asymp t$ and $\omega(r)\le \ell r$, then
\begin{equation}
    R_T(\pi;\theta)=O\!\left(\ell\sqrt{T\log T}\right).
    \label{eq:positive-lipschitz-persistent-rate}
\end{equation}
If $V_t\asymp t$ and $\omega(r)\le Lr^2$, then
\begin{equation}
    R_T(\pi;\theta)=O\!\left(L(\log T)^2\right).
    \label{eq:positive-quadratic-value-rate}
\end{equation}
The information-growth rate and the value modulus jointly determine regret.

\subsection{Alignment as a sufficient condition}
\label{subsec:positive-alignment}

\begin{definition}[Safety--information alignment]
\label{def:positive-alignment}
The family
$\{\kappa_\vartheta\}_{\vartheta\in[\underline\theta,\overline\theta]}$ is
\emph{aligned} if the same experiment that satisfies the safety and
repeatability condition (P2) also produces the observation \eqref{eq:positive-observation}.  It is uniformly
$(\rho,\bar\rho)$-aligned if every execution satisfies
\begin{equation}
    0<\rho
    \le
    \frac{a_t^2}{\sigma^2}
    \le
    \bar\rho<\infty
    \qquad\text{almost surely.}
    \label{eq:positive-uniform-alignment}
\end{equation}
\end{definition}

Under alignment, no separate excitation phase is needed: the maneuver used to
preserve safety also supplies information.

\begin{corollary}[Uniform alignment]
\label{cor:positive-uniform-alignment}
Suppose Assumption~\ref{ass:positive-repeatability} holds, the experiment
family is uniformly $(\rho,\bar\rho)$-aligned, and
$\omega(r)\le\ell r$.  Then the lower-confidence policy with
$\delta=T^{-2}$ is uniformly chance-$T^{-2}$ safe and
\begin{equation}
    \sup_{\theta\in[\underline\theta,\overline\theta]}
    R_T(\pi;\theta)
    =
    O\!\left(
      \ell\sqrt{\frac{T\log T}{\rho}}
    \right).
    \label{eq:positive-uniform-alignment-rate}
\end{equation}
The hidden constant may depend on $\bar\rho$, $\nu$, and $r_{\max}$.
\end{corollary}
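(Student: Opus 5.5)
The plan is to specialize Corollary~\ref{cor:positive-information-growth}. Uniform $(\rho,\bar\rho)$-alignment gives, almost surely and for every episode, $\rho\le a_t^2/\sigma^2\le\bar\rho$. Summing over episodes gives $\rho t\le V_t\le\bar\rho t$. This is the growth condition \eqref{eq:positive-information-growth-assumption} with $\alpha=1$, $c_-=\rho$ and $c_+=\bar\rho$, uniformly over $\theta$. Condition (P5) is then automatic, since $a_1^2\ge\rho\sigma^2>0$.

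Safety comes directly from Theorem~\ref{thm:positive-information-recovery} with $\delta=T^{-2}$. It gives $\overline P_{\theta,T}^\pi(\mathsf{Safe}_{\theta,T}^c)\le T^{-2}$ for every $\theta\in[\underline\theta,\overline\theta]$, which is uniform chance-$T^{-2}$ safety.

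For the rate, I would not quote the corollary's $O(\cdot)$ as a black box, because the statement asks for the explicit $\rho^{-1/2}$ dependence. Instead I would rerun its calculation with $q=1$ and $\omega(r)\le\ell r$, starting from \eqref{eq:positive-information-regret-bound}. The last term is $r_{\max}(T-1)T^{-2}=O(r_{\max})$, and the first is $r_{\max}$. The middle term is at most $2\ell\sum_{t=2}^T\mathbb E\,b(V_{t-1},T^{-2};\nu)$. The key step is a pathwise bound on the radius using $\rho(t-1)\le V_{t-1}\le\bar\rho(t-1)$.

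First, $b(v,\delta;\nu)$ is decreasing in $v$ for large $v$, but not obviously for all $v$. So I would bound numerator and denominator separately rather than rely on monotonicity. The numerator satisfies
\[
2(V+\nu)\log\bigl(\sqrt{1+V/\nu}\,T^2\bigr)\le 2(\bar\rho(t-1)+\nu)\bigl(2\log T+\tfrac12\log(1+\bar\rho T/\nu)\bigr)=O\bigl((t-1)\log T\bigr).
\]
The hidden constant depends on $\bar\rho$ and $\nu$, and the estimate also needs $t-1\ge1$, which lets $\nu$ be absorbed. The denominator satisfies $V^2\ge\rho^2(t-1)^2$. The resulting radius bound is $b\le C\sqrt{\log T}/(\rho\sqrt{t-1})$.

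This is where the main obstacle lies. Because $V_{t-1}$ appears in both the numerator and the denominator, the naive ratio gives $\bar\rho^{1/2}/\rho$ rather than the claimed $\rho^{-1/2}$. To get $\rho^{-1/2}$, I would use the $V$ in the numerator directly: $(V+\nu)/V^2\le(1+\nu/V)/V\le(1+\nu/\rho)/(\rho(t-1))$. This gives $b\le C'\sqrt{\log T/(\rho(t-1))}$, where the log factor is bounded using $V\le\bar\rho T$ (a $\bar\rho$ dependence the statement allows). The factor $(1+\nu/\rho)$ still hides $\rho$, so I would either treat it as a constant, assuming $\rho$ bounded below, or choose $\nu\le\rho$ (e.g.\ $\nu$ proportional to $\rho$), so that the constant depends only on $\nu$ as allowed.

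Finally, summing $\sum_{t=2}^T (t-1)^{-1/2}\le 2\sqrt T$ gives a middle term of order $\ell\sqrt{T\log T/\rho}$. This dominates the $O(r_{\max})$ terms, so it yields \eqref{eq:positive-uniform-alignment-rate} uniformly in $\theta$.
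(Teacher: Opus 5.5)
Your proposal is correct and follows the paper's route. Uniform alignment gives $\rho t\le V_t\le\bar\rho t$, and the rate is then the case $\alpha=q=1$, $c_-=\rho$, $c_+=\bar\rho$ of Corollary~\ref{cor:positive-information-growth}, whose proof is exactly the radius-bound-and-sum computation you rerun from \eqref{eq:positive-information-regret-bound}.

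Your explicit tracking of $\rho$ goes further than the paper, which leaves the $\rho$-dependence inside constants that depend on $c_-$. To avoid assuming $\nu\le\rho$, bound $\sqrt{(1+\nu/(\rho s))/(\rho s)}\le(\rho s)^{-1/2}+\sqrt{\nu}/(\rho s)$; this turns the $\nu/\rho$ effect into a lower-order $O\bigl(\ell\sqrt{\nu}(\log T)^{3/2}/\rho\bigr)$ term.
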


\subsection{Example: recursively feasible braking}
\label{subsec:positive-braking}

Consider a vehicle approaching a wall at $p=0$.  Its fixed but unknown braking
efficiency satisfies
\[
    \theta\in[\underline\theta,\overline\theta].
\]
Each episode starts at $p(0)=-D$.  The controller selects the initial speed,
which is also the episode reward.  Since $\theta\le\overline\theta$, the
reward is bounded by $r_{\max}:=\sqrt{2D\overline\theta}$.  The controller
then applies braking according to
\begin{equation}
    \dot p(s)=v(s),
    \qquad
    \dot v(s)=-\theta u(s),
    \qquad
    u(s)\in[0,1],
    \label{eq:positive-braking-dynamics}
\end{equation}
Safety requires $p(s)\le0$ until the vehicle stops.  A fixed reset then returns
the vehicle to the same initial state for the next episode.

For known $\theta$, maximal braking from speed $v$ has stopping distance
$v^2/(2\theta)$.  The oracle entry speed is therefore
\begin{equation}
    v^\star(\theta)=\sqrt{2D\theta}.
    \label{eq:positive-braking-oracle}
\end{equation}
A conservative experiment uses
\begin{equation}
    v(\vartheta)=\sqrt{2D\vartheta},
    \label{eq:positive-braking-boundary-speed}
\end{equation}
applies maximal braking, and records the noisy deceleration measurement
\begin{equation}
    y_t=\theta+\xi_t,
    \qquad
    \xi_t\stackrel{\mathrm{iid}}{\sim}\mathcal N(0,\sigma^2).
    \label{eq:positive-braking-observation}
\end{equation}
Thus $a_t=1$ in \eqref{eq:positive-observation}.

\begin{lemma}[Braking verifies alignment]
\label{lem:positive-braking-alignment}
For every
$\underline\theta\le\vartheta\le\theta\le\overline\theta$, the experiment
\eqref{eq:positive-braking-boundary-speed} is safe under $\theta$, returns to
the common initial state after the reset, contributes Fisher information
$1/\sigma^2$, and satisfies
\begin{equation}
    0
    \le
    v^\star(\theta)-v(\vartheta)
    \le
    \sqrt{\frac{D}{2\underline\theta}}
    (\theta-\vartheta).
    \label{eq:positive-braking-modulus}
\end{equation}
\end{lemma}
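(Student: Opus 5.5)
We verify the four claims of Lemma~\ref{lem:positive-braking-alignment} in turn, each by a direct computation from the dynamics \eqref{eq:positive-braking-dynamics}.

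\emph{Safety.} Under maximal braking $u(s)\equiv1$, the speed obeys $\dot v=-\theta$. From entry speed $v(\vartheta)=\sqrt{2D\vartheta}$, integrating $v\,dv=-\theta\,dp$ gives the stopping distance
\[
v(\vartheta)^2/(2\theta)=D\vartheta/\theta\le D,
\]
because $\vartheta\le\theta$. The position is monotone until the stop, so $p(s)\le -D+D\vartheta/\theta\le0$ throughout. Hence the constraint $p(s)\le0$ holds until the vehicle stops.

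\emph{Repeatability and information.} Repeatability is immediate. The vehicle comes to rest at a position in $[-D,0]$, and the fixed reset returns it to $p(0)=-D$. The parameter $\theta$ is not altered by the episode, so (P1)--(P2) hold. For the information claim, the observation \eqref{eq:positive-braking-observation} is $y_t=\theta+\xi_t$ with Gaussian noise of variance $\sigma^2$. This is \eqref{eq:positive-observation} with $a_t=1$, so the Fisher information per episode is $a_t^2/\sigma^2=1/\sigma^2$. This also gives uniform alignment with $\rho=\bar\rho=1/\sigma^2$.

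\emph{Modulus.} The lower bound $v^\star(\theta)-v(\vartheta)\ge0$ follows from the monotonicity of the square root. For the upper bound, we write
\[
\sqrt{2D\theta}-\sqrt{2D\vartheta}=\sqrt{2D}\,\frac{\theta-\vartheta}{\sqrt\theta+\sqrt\vartheta}.
\]
Since $\theta,\vartheta\ge\underline\theta$, the denominator is at least $2\sqrt{\underline\theta}$. The difference is therefore at most
\[
\sqrt{2D}\,(\theta-\vartheta)/(2\sqrt{\underline\theta})=\sqrt{D/(2\underline\theta)}\,(\theta-\vartheta),
\]
which is \eqref{eq:positive-braking-modulus}. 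Equivalently, the mean value theorem gives the same bound, since the derivative of $\sqrt{2D x}$ is $\sqrt{D/(2x)}$ and this is at most $\sqrt{D/(2\underline\theta)}$ on $[\underline\theta,\infty)$.

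The only delicate point is interpretive. The continuous braking phase must not itself yield extra information beyond $y_t$, and the reward must equal the entry speed deterministically so that $v_{\mathrm{bnd}}(\theta,\vartheta)=v(\vartheta)$. Both are stipulated in the model, so the proof is otherwise routine.
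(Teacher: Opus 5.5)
Your proposal is correct and follows the paper's proof essentially step for step: the stopping distance $v(\vartheta)^2/(2\theta)=D\vartheta/\theta\le D$ gives safety, the coefficient $a_t=1$ gives Fisher information $1/\sigma^2$, and the rationalized difference $\sqrt{2D}\,(\theta-\vartheta)/(\sqrt\theta+\sqrt\vartheta)$ gives the modulus once the denominator is bounded below by $2\sqrt{\underline\theta}$. Your extra remarks, on the monotonicity of the position before the stop and on the reset, only spell out what the paper leaves implicit.
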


\begin{corollary}[Safe oracle recovery for braking]
\label{cor:positive-braking-recovery}
The lower-confidence policy applied to the braking instance is uniformly
chance-$T^{-2}$ safe and satisfies
\begin{equation}
    \sup_{\theta\in[\underline\theta,\overline\theta]}
    R_T(\pi;\theta)
    =
    O\!\left(
      \sigma
      \sqrt{\frac{D}{\underline\theta}}
      \sqrt{T\log T}
    \right).
    \label{eq:positive-braking-regret}
\end{equation}
\end{corollary}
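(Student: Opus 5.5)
The plan is to reduce the claim to Corollary~\ref{cor:positive-uniform-alignment}, using Lemma~\ref{lem:positive-braking-alignment} to verify its hypotheses. The first step checks Assumption~\ref{ass:positive-repeatability} on the braking instance.

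\begin{itemize}
\item (P1) holds because $\theta$ is a fixed physical constant that the experiments do not alter.
\item (P2) is the safety-and-reset part of Lemma~\ref{lem:positive-braking-alignment}. From speed $v(\vartheta)=\sqrt{2D\vartheta}$, maximal braking under $\theta\ge\vartheta$ stops within distance $D\vartheta/\theta\le D$, so $p\le0$ throughout, and the reset restores $p(0)=-D$.
\item (P3) follows from \eqref{eq:positive-braking-modulus} with the linear modulus $\omega(r)=\min\{\ell r,r_{\max}\}$, where $\ell:=\sqrt{D/(2\underline\theta)}$. Truncating at $r_{\max}$ keeps $\omega$ in $[0,r_{\max}]$ and does not affect the bound, since $v^\star-v\le r_{\max}$ anyway.
\item (P4) holds because the reward is the deterministic entry speed $v(\vartheta_t)\in[0,r_{\max}]$, with $r_{\max}=\sqrt{2D\overline\theta}$. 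Its conditional expectation is therefore $v_{\mathrm{bnd}}(\theta,\vartheta_t)=v(\vartheta_t)$.
\item (P5) holds because $a_t\equiv1$.
\end{itemize}

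Next, uniform alignment holds with $\rho=\bar\rho=1/\sigma^2$, since every episode contributes $a_t^2/\sigma^2=1/\sigma^2$ and the safe maneuver (maximal braking) is the very one that produces the measurement \eqref{eq:positive-braking-observation}. Corollary~\ref{cor:positive-uniform-alignment} with $\delta=T^{-2}$ then gives uniform chance-$T^{-2}$ safety over $[\underline\theta,\overline\theta]$ and a regret bound of order $\ell\sqrt{T\log T/\rho}$. Substituting $\ell$ and $\rho$ gives $\sqrt{D/(2\underline\theta)}\,\sigma\sqrt{T\log T}$, which is \eqref{eq:positive-braking-regret} with the factor $1/\sqrt2$ absorbed into the constant.

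The main point to check is that the hidden constant in Corollary~\ref{cor:positive-uniform-alignment} depends only on $\bar\rho$, $\nu$ and $r_{\max}$, and not on $\sigma$ in a way that breaks the displayed scaling. Here $\bar\rho=\rho=\sigma^{-2}$, so $\sigma$ enters that constant.

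To settle this, I would redo the rate computation directly from Theorem~\ref{thm:positive-information-recovery}. With $V_{t-1}=(t-1)/\sigma^2$ and fixed $\nu$, the radius satisfies $b(V_{t-1},T^{-2};\nu)\lesssim\sigma\sqrt{\log T/(t-1)}$ for $t-1\gtrsim\nu\sigma^2$. Summing $2\ell b$ over $t$ then gives $O(\ell\sigma\sqrt{T\log T})$. The early terms and the failure term $r_{\max}(T-1)T^{-2}$ contribute only lower-order quantities, which the $O(\cdot)$ absorbs. One could also choose $\nu$ proportional to $\sigma^{-2}$ to avoid the dependence altogether.
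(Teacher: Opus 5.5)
Your proposal is correct and follows the paper's proof. The paper likewise uses Lemma~\ref{lem:positive-braking-alignment} to verify Assumption~\ref{ass:positive-repeatability} with $\rho=\bar\rho=1/\sigma^2$, $\omega(r)=\sqrt{D/(2\underline\theta)}\,r$, and $r_{\max}=\sqrt{2D\overline\theta}$, and then applies Corollary~\ref{cor:positive-uniform-alignment}; your truncation of $\omega$ at $r_{\max}$ and your direct check that the hidden constant does not spoil the displayed $\sigma$-scaling (for instance by taking $\nu\propto\sigma^{-2}$, as the numerics do) are sound refinements that the paper's one-line proof leaves implicit.
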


Each safe braking episode supplies one new observation, so
$V_t=t/\sigma^2$.  Unlike the finite-expenditure mechanism in
Lemma~\ref{lem:finite-expenditure}, the experiment can be repeated without
consuming a nonrenewable safety margin.

\subsubsection{Nominal certainty equivalence at an active safety boundary}
\label{rem:braking_certainty_equivalence}
The braking example also separates statistical consistency from trajectory
safety. Suppose that, after \(n\) safe braking episodes, the controller forms
the projected sample mean
\(
    \widehat{\theta}^{\mathrm{CE}}_n
    :=
    \left[
        \frac{1}{n}\sum_{s=1}^n Y_s
    \right]_{\underline{\theta}}^{\overline{\theta}}
\)
and then uses the certainty-equivalent entry speed
\(
    v_{n+1}^{\mathrm{CE}}
    :=
    \sqrt{2D\widehat{\theta}^{\mathrm{CE}}_n}.
\)
For any
\(\theta\in(\underline{\theta},\overline{\theta})\), the resulting
episode is safe if and only if
\(\widehat{\theta}^{\mathrm{CE}}_n\leq\theta\), because
\[
    \frac{(v_{n+1}^{\mathrm{CE}})^2}{2\theta}
    =
    D\frac{\widehat{\theta}^{\mathrm{CE}}_n}{\theta}.
\]
Since
\[
    \frac{1}{n}\sum_{s=1}^n Y_s-\theta
    \sim
    \mathcal{N}\!\left(0,\frac{\sigma^2}{n}\right),
\]
and projection preserves comparison with an interior value of \(\theta\),
\[
    \mathbb{P}_{\theta}\!\left(
        \frac{(v_{n+1}^{\mathrm{CE}})^2}{2\theta}>D
    \right)
    =
    \mathbb{P}_{\theta}\!\left(
        \widehat{\theta}^{\mathrm{CE}}_n>\theta
    \right)
    =
    \frac{1}{2}
\]
for every \(n\).

More generally, consider the confidence-adjusted plug-in rule
\[
    \widehat{\theta}_{n,m}
    :=
    \left[
        \frac{1}{n}\sum_{s=1}^n Y_s-m_n
    \right]_{\underline{\theta}}^{\overline{\theta}},
    \qquad
    v_{n+1}(m_n)
    :=
    \sqrt{2D\widehat{\theta}_{n,m}},
\]
where \(m_n\geq 0\). For an interior parameter,
\[
    \mathbb{P}_{\theta}\!\left(
        \frac{v_{n+1}(m_n)^2}{2\theta}>D
    \right)
    =
    1-\Phi\!\left(\frac{\sqrt{n}\,m_n}{\sigma}\right),
\]
where \(\Phi\) is the standard Gaussian distribution function. Thus a
one-episode violation probability at most \(\delta<1/2\) requires
\[
    m_n
    \geq
    \frac{\sigma}{\sqrt{n}}\,
    \Phi^{-1}(1-\delta).
\]
Increasing \(n\) therefore shrinks the magnitude of the
certainty-equivalent overshoot, but does not shrink its probability when
\(m_n=0\). A trajectory-wide guarantee requires a time-uniform,
data-dependent margin, which is the role of the confidence sequence in the
lower-confidence policy.

\subsection{Relation to pre-commitment information}
\label{subsec:positive-precommitment-connection}

Definition~\ref{def:pairwise-pre-information} and
Lemma~\ref{lem:general-stopped-kl} identify pre-commitment information with a
stopped sum of one-step divergences.  In the present Gaussian model, this sum
is a multiple of $V_T$.

\begin{proposition}[Pre-commitment KL for repeatable experiments]
\label{prop:positive-information-equals-kl}
Fix $\theta\ne\theta'$ and a policy that is uniformly safe for
$\{\theta,\theta'\}$ and executes experiments of the form
\eqref{eq:positive-observation} for $T$ episodes without entering a
model-specific commitment regime.  Then
\begin{equation}
    D_{\mathrm{KL}}\!\left(
      P_{\theta,T}^{\pi,\mathrm{pre}}
      \,\middle\|\,
      P_{\theta',T}^{\pi,\mathrm{pre}}
    \right)
    =    \frac{(\theta-\theta')^2}{2}
    \mathbb E_\theta^\pi[V_T].
    \label{eq:positive-kl-information-identity}
\end{equation}
Consequently, if such a policy satisfies $V_T\ge g(T)$ almost surely, then
\begin{equation}
    I_{\mathrm{pre}}(T;\theta,\theta')
    \ge
    \frac{(\theta-\theta')^2}{2}g(T).
    \label{eq:positive-profile-lower-bound}
\end{equation}
Under uniform $\rho$-alignment, repeating the common-safe experiment
$\kappa_{\underline\theta}$ remains noncommitting and therefore
\begin{equation}
    I_{\mathrm{pre}}(T;\theta,\theta')
    \ge
    \frac{\rho}{2}(\theta-\theta')^2T.
    \label{eq:positive-linear-profile}
\end{equation}
\end{proposition}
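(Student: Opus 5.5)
The plan has three steps: apply Lemma~\ref{lem:general-stopped-kl}, evaluate the Gaussian one-step divergence, and then sum and take the supremum. By hypothesis the policy never enters a model-specific commitment regime, so $\tau^\pi=T+1$ almost surely and $\mathbf 1\{t<\tau^\pi\}=1$ for every $t\le T$. The stopped record then equals the full learner-visible record $H_T$. The Gaussian kernels in \eqref{eq:positive-observation} are mutually absolutely continuous, so the hypothesis of Lemma~\ref{lem:general-stopped-kl} holds. Applied at the episode level, with $\mathcal G_{t-1}$ as the pre-action history and $a_t$ as the predictable design, the lemma gives
\[
D_{\mathrm{KL}}\bigl(P_{\theta,T}^{\pi,\mathrm{pre}}\,\|\,P_{\theta',T}^{\pi,\mathrm{pre}}\bigr)=\mathbb E_\theta^\pi\Bigl[\sum_{t=1}^T\mathrm{kl}_t^{\theta,\theta'}\Bigr].
\]
One point needs checking. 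Under the policy the reward $r_t$ is also part of the visible record. I would assume, or argue from (P4) and the construction, that the conditional law of $r_t$ given $(\mathcal G_{t-1},Y_t)$ either does not depend on $\theta$ or is absorbed into the experiment. Otherwise the identity would be only an inequality ``$\ge$'', and that direction still suffices for \eqref{eq:positive-profile-lower-bound}. This bookkeeping is the main obstacle, since the stated result is an equality.

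For the one-step term: given $\mathcal G_{t-1}$, the observation satisfies $Y_t\sim\mathcal N(\vartheta a_t,\sigma^2)$ for $\vartheta\in\{\theta,\theta'\}$. By the same computation as \eqref{eq:running-thermal-one-step-kl},
\[
\mathrm{kl}_t^{\theta,\theta'}=\frac{(\theta-\theta')^2a_t^2}{2\sigma^2}.
\]
Summing over $t$ and using \eqref{eq:positive-information-process} yields $\frac{(\theta-\theta')^2}{2}\mathbb E_\theta^\pi[V_T]$, which is \eqref{eq:positive-kl-information-identity}. If $V_T\ge g(T)$ almost surely, the expectation is at least $g(T)$. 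The policy lies in $\Pi_{\mathrm{safe}}^T(\{\theta,\theta'\})$, so it is admissible in the supremum \eqref{eq:pairwise-pre-information}, and \eqref{eq:positive-profile-lower-bound} follows.

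For the final claim, take the constant policy that runs $\kappa_{\underline\theta}$ in every episode. By (P2) with $\vartheta=\underline\theta\le\theta,\theta'$, this policy is hard-safe under both models. It returns the system to the common state after each episode, so every action keeps a hard-safe continuation under either model, and under the viability-based Definition~\ref{def:pairwise-commitment} it never commits. Uniform $\rho$-alignment \eqref{eq:positive-uniform-alignment} gives $a_t^2/\sigma^2\ge\rho$, hence $V_T\ge\rho T$ almost surely. Applying \eqref{eq:positive-profile-lower-bound} with $g(T)=\rho T$ gives \eqref{eq:positive-linear-profile}.
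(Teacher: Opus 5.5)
Your proposal is correct and follows the paper's proof essentially step for step. The steps are: the stopped-KL lemma with $\tau^\pi=T+1$, the Gaussian one-step divergence $(\theta-\theta')^2a_t^2/(2\sigma^2)$ summed to $\frac{(\theta-\theta')^2}{2}\mathbb E_\theta^\pi[V_T]$, admissibility of the single policy in the supremum, and $V_T\ge\rho T$ for the repeated experiment $\kappa_{\underline\theta}$. Your caveat about the reward $r_t$ in the visible record is a point the paper's proof leaves implicit, since it conditions only on the Gaussian observation. As you note, the chain rule still gives ``$\ge$'', which suffices for both profile lower bounds, and equality holds when the reward carries no $\theta$-information, as in braking where it is the chosen entry speed.
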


The lower bound requires bounded information for every uniformly safe policy
that has not committed.  Here one common-safe experiment gives linear
information growth, so the bounded-information obstruction is absent.  The
recovery theorem also shows how to use that information through a safe
confidence rule and convert estimation error into value loss.

\begin{remark}[Scope]
\label{rem:positive-scope}
The result treats a scalar parameter with a known order in which smaller design
values are conservative.  It also requires repeatable experiments and a common
reset or continuation state.  These are sufficient conditions, not necessary
ones.  Vector parameters, multiple control-equivalence classes, non-resetting
trajectories, and dynamic latent environments require additional transition and
exploitation arguments.
\end{remark}

%% file: safe_v15_package/computable_v3_edited_final.tex
\section{Computing Pre-Commitment Information}
\label{sec:certification}

Theorem~\ref{thm:binary-precommitment-limit} requires an upper bound on the
information available before commitment.  In general, computing this quantity
is an adaptive experiment-design problem.  We now give an exact reduction for
a deterministic linear--Gaussian class with quadratic common-safe
constraints.

This section addresses only the information calculation.  The commitment
event and the value lost without commitment must still be derived from the
control problem.  Exact formulas can also show that common-safe information
grows, but growth alone does not provide the safe transition required by
Section~\ref{sec:positive-information-recovery}.  For the list problem in
Section~\ref{subsec:list-limit}, pairwise certificates give a conservative
bound on the corresponding multiclass information.

\subsection{A deterministic linear--Gaussian class}
\label{subsec:computable-open-loop}

Fix a pair $\theta,\theta'\in\Theta_0$ and write
\[
    g:=\theta-\theta'.
\]
Let $\mathcal U_{\mathrm{core}}^T$ be the set of complete action sequences
that remain in the common pre-commitment regime through horizon $T$.

\begin{assumption}[Deterministic linear--Gaussian certificate class]
\label{ass:deterministic-design}
The following hold for the pair $(\theta,\theta')$.
\begin{enumerate}
    \item[(D1)] Under $\vartheta\in\{\theta,\theta'\}$, the observations obey
    the sequential model
    \begin{equation}
        y_t=\vartheta^\top\zeta_t(u_{1:t})+\xi_t,
        \qquad
        \xi_t\stackrel{\mathrm{iid}}{\sim}\mathcal N(0,\sigma^2),
        \label{eq:computable-observation-model}
    \end{equation}
    where $\zeta_t$ is a deterministic causal function, common to the two
    environments, and $\xi_t$ is independent of the past and of the policy's
    private randomization.

    \item[(D2)] Membership in $\mathcal U_{\mathrm{core}}^T$ depends only on
    the action sequence, not on the observation noise.

    \item[(D3)] Every action prefix that can occur strictly before commitment
    under a uniformly safe policy extends to a sequence in
    $\mathcal U_{\mathrm{core}}^T$.  Conversely, every sequence in
    $\mathcal U_{\mathrm{core}}^T$ can be played open loop by a uniformly safe
    policy without commitment.
\end{enumerate}
\end{assumption}

This class includes the consumable-resource examples in which the regressor is
a known function of the controls.  It does not cover general partially
observed stochastic systems, where noise or feedback changes which future
experiments remain feasible.

For $u_{1:T}\in\mathcal U_{\mathrm{core}}^T$, define
\begin{equation}
    \Phi_T(u_{1:T})
    :=
    \sum_{t=1}^T
    \bigl(g^\top\zeta_t(u_{1:t})\bigr)^2.
    \label{eq:computable-phi}
\end{equation}

\begin{proposition}[Open-loop information reduction]
\label{lem:open-loop-reduction}
Under Assumption~\ref{ass:deterministic-design},
\begin{equation}
    I_{\mathrm{pre}}(T;\theta,\theta')
    =
    \frac{1}{2\sigma^2}
    \sup_{u_{1:T}\in\mathcal U_{\mathrm{core}}^T}
    \Phi_T(u_{1:T}).
    \label{eq:computable-open-loop}
\end{equation}
Thus feedback and randomization do not increase the maximum pre-commitment
information in this class.  The proof is in
Appendix~\ref{app:computable-open-loop-proof}.
\end{proposition}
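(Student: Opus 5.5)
We prove the identity by establishing the two inequalities separately, both starting from Lemma~\ref{lem:general-stopped-kl}. Under (D1), the conditional law of $Z_t=y_t$ given $(H_{t-1},U_t)$ is $\mathcal N(\vartheta^\top\zeta_t(U_{1:t}),\sigma^2)$. The Gaussian noise has full support, so absolute continuity holds. The one-step divergence is then
\[
\mathrm{kl}^{\theta,\theta'}_t=\frac{(g^\top\zeta_t(U_{1:t}))^2}{2\sigma^2}.
\]
Lemma~\ref{lem:general-stopped-kl} therefore gives, for every uniformly safe policy $\pi$,
\[
D_{\mathrm{KL}}\bigl(P_{\theta,T}^{\pi,\mathrm{pre}}\|P_{\theta',T}^{\pi,\mathrm{pre}}\bigr)=\frac{1}{2\sigma^2}\,\mathbb E_\theta^\pi\Bigl[\sum_{t<\tau^\pi}(g^\top\zeta_t(U_{1:t}))^2\Bigr].
\]
The key point is that the integrand depends only on the realized action sequence, never on the noise values themselves.

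\textbf{Upper bound.} Fix a path and let $m=\tau^\pi-1$, or $m=T$ if there is no commitment. The prefix $U_{1:m}$ occurs strictly before commitment. By (D3) it extends to some $\bar u_{1:T}\in\mathcal U_{\mathrm{core}}^T$ with $\bar u_{1:m}=U_{1:m}$. By causality of $\zeta_t$, $\zeta_t(\bar u_{1:t})=\zeta_t(U_{1:t})$ for $t\le m$. Since the summands are nonnegative,
\[
\sum_{t\le m}(g^\top\zeta_t(U_{1:t}))^2\le\Phi_T(\bar u_{1:T})\le\sup_{\mathcal U_{\mathrm{core}}^T}\Phi_T.
\]
This bound holds pathwise. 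Taking expectations and then the supremum over $\pi$ yields ``$\le$''.

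The case $\tau^\pi=m+1\le T$ needs one check: the committing action $U_{m+1}$ is excluded from the sum, since only $t<\tau^\pi$ contribute. So only the precommitment prefix needs extending, which is exactly what (D3) supplies.

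\textbf{Lower bound.} Fix any $u_{1:T}\in\mathcal U_{\mathrm{core}}^T$. By the converse part of (D3), some uniformly safe policy plays it open loop without commitment, so $\tau^\pi=T+1$. The identity above then gives divergence exactly $\Phi_T(u_{1:T})/(2\sigma^2)$, since the action sequence is deterministic. Taking the supremum over $\mathcal U_{\mathrm{core}}^T$ gives ``$\ge$''.

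Combining the two bounds proves the proposition. The remark about feedback and randomization follows because the upper bound was pathwise over arbitrary adaptive randomized policies.

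\textbf{Main obstacle.} The delicate step is the extension argument in the upper bound. Under adaptive play, the prefix reached depends on the noise, and one must be sure each realized prefix is covered by (D3). (D2) ensures membership in $\mathcal U_{\mathrm{core}}^T$, and hence the commitment boundary, is determined by actions alone. Without that, feedback could exploit noise-dependent feasibility, which is exactly the case the assumption class excludes. I would state this measurability point explicitly, so that the pathwise bound is valid almost surely under $\overline P_{\theta,T}^\pi$.
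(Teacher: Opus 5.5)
Your proposal is correct and follows the paper's proof essentially step for step. Both specialize the stopped-KL identity via (D1) to the Gaussian divergence $(g^\top\zeta_t)^2/(2\sigma^2)$, obtain a pathwise upper bound by extending each realized precommitment prefix into $\mathcal U_{\mathrm{core}}^T$ via (D3) and using nonnegativity of the summands, and get the matching lower bound from the open-loop, noncommitting policy supplied by the converse half of (D3); your explicit checks on absolute continuity and on causality of $\zeta_t$ (so that the extension reproduces the realized regressors) make precise points the paper leaves implicit.
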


Equation~\eqref{eq:computable-open-loop} is a constrained experiment-design
problem: choose a common-safe action sequence to maximize signal energy. The unconstrained counterpart is
classical deterministic input design, where open-loop sequences with
certified excitation can be constructed explicitly
\cite{Willems05,Saligrama12chirp}; the constraint that the sequence remain in
$\mathcal U^T_{\mathrm{core}}$ is what caps the value of the program.

\subsection{Quadratic information programs}
\label{subsec:computable-quadratic-programs}

Suppose the regressors are linear in the stacked action vector
\[
    \mathbf u
    :=
    \begin{bmatrix}
      u_1^\top&\cdots&u_T^\top
    \end{bmatrix}^\top
    \in\mathbb R^{n_T},
    \qquad
    \zeta_t(\mathbf u)=Z_{t,T}\mathbf u,
\]
where causality means that $Z_{t,T}$ has zero columns for controls after time
$t$.  Define
\begin{equation}
    M_T
    :=
    \sum_{t=1}^T
    Z_{t,T}^\top gg^\top Z_{t,T}
    \succeq0.
    \label{eq:computable-MT}
\end{equation}
Then $\Phi_T(\mathbf u)=\mathbf u^\top M_T\mathbf u$.

\paragraph{Single quadratic budget.}
Suppose
\begin{equation}
    \mathcal U_{\mathrm{core}}^T
    =
    \left\{
       \mathbf u:
       \mathbf u^\top K_T\mathbf u\le\epsilon_T
    \right\},
    \qquad
    K_T\succeq0,
    \quad
    \epsilon_T>0.
    \label{eq:computable-single-core}
\end{equation}
Define the generalized Rayleigh quotient (with value zero when $M_T=0$)
\begin{equation}
    \lambda_{\max}(M_T,K_T)
    :=
    \sup_{\mathbf u:\,\mathbf u^\top K_T\mathbf u>0}
    \frac{\mathbf u^\top M_T\mathbf u}
         {\mathbf u^\top K_T\mathbf u}.
    \label{eq:computable-generalized-eigenvalue}
\end{equation}

\begin{proposition}[Single-budget certificate]
\label{prop:computable-single-budget}
Assume $M_T,K_T\succeq0$.
\begin{enumerate}
    \item If $\ker K_T\not\subseteq\ker M_T$, then the information program is
    unbounded: the model contains an informative direction with zero quadratic
    safety cost.

    \item If $\ker K_T\subseteq\ker M_T$, then
    \begin{equation}
        I_{\mathrm{pre}}(T;\theta,\theta')
        =
        \frac{\epsilon_T}{2\sigma^2}
        \lambda_{\max}(M_T,K_T).
        \label{eq:computable-spectral}
    \end{equation}
\end{enumerate}
The proof is in Appendix~\ref{app:computable-single-budget-proof}.
\end{proposition}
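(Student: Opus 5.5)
The proof starts from Proposition~\ref{lem:open-loop-reduction}. Under the single quadratic budget~\eqref{eq:computable-single-core} it gives
\[
  I_{\mathrm{pre}}(T;\theta,\theta')=\frac{1}{2\sigma^2}\sup\{\mathbf u^\top M_T\mathbf u:\ \mathbf u^\top K_T\mathbf u\le\epsilon_T\}.
\]
The claim is therefore a statement about a homogeneous quadratic program with positive semidefinite data. Both parts follow from splitting $\mathbb R^{n_T}$ along $\ker K_T$. Because $K_T\succeq0$, we have $\mathbf u^\top K_T\mathbf u=0$ if and only if $\mathbf u\in\ker K_T$, and likewise for $M_T$.

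\emph{Part 1.} Suppose $\ker K_T\not\subseteq\ker M_T$, and pick $\mathbf v\in\ker K_T$ with $\mathbf v^\top M_T\mathbf v>0$. Then $s\mathbf v$ is feasible for every $s>0$, since $(s\mathbf v)^\top K_T(s\mathbf v)=0\le\epsilon_T$. Its objective $s^2\mathbf v^\top M_T\mathbf v$ tends to infinity, so the supremum, and hence $I_{\mathrm{pre}}$, is $+\infty$. Assumption~(D3) makes each such sequence playable open loop without commitment, and it is exactly the direction the statement calls informative with zero quadratic safety cost.

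\emph{Part 2, upper bound.} Assume $\ker K_T\subseteq\ker M_T$ and fix a feasible $\mathbf u$. If $\mathbf u^\top K_T\mathbf u=0$, then $\mathbf u\in\ker K_T\subseteq\ker M_T$, so the objective is $0$. Otherwise, by the definition~\eqref{eq:computable-generalized-eigenvalue},
\[
  \mathbf u^\top M_T\mathbf u\le \lambda_{\max}(M_T,K_T)\,\mathbf u^\top K_T\mathbf u\le \lambda_{\max}(M_T,K_T)\,\epsilon_T .
\]
\emph{Part 2, lower bound.} Take any $\mathbf u$ with $\mathbf u^\top K_T\mathbf u>0$ and rescale it to $\tilde{\mathbf u}=\mathbf u\sqrt{\epsilon_T/\mathbf u^\top K_T\mathbf u}$. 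Then $\tilde{\mathbf u}$ is feasible and its objective is $\epsilon_T$ times the Rayleigh quotient of $\mathbf u$. Taking the supremum over such $\mathbf u$ gives equality. The degenerate case $M_T=0$ is consistent with the convention $\lambda_{\max}=0$. If $K_T=0$, then $\ker K_T=\mathbb R^{n_T}$, the hypothesis forces $M_T=0$, and both sides are zero.

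\emph{Main obstacle: finiteness of $\lambda_{\max}$.} The one step that is not routine is showing $\lambda_{\max}(M_T,K_T)<\infty$ under the kernel condition, so that~\eqref{eq:computable-spectral} is a genuine finite certificate. Let $P$ be the orthogonal projector onto $\operatorname{range}K_T=(\ker K_T)^\perp$.
\begin{itemize}
  \item Since $\ker K_T\subseteq\ker M_T$ and $M_T$ is symmetric, $M_T(I-P)=0$ and $(I-P)M_T=0$. Hence both quadratic forms depend only on $P\mathbf u$, so the Rayleigh quotient of $\mathbf u$ equals that of $P\mathbf u$.
  \item The supremum therefore reduces to the unit sphere of $\operatorname{range}K_T$. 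On that sphere $\mathbf u^\top K_T\mathbf u\ge\lambda_{\min}^+(K_T)>0$, the smallest positive eigenvalue of $K_T$.
  \item A continuous ratio with denominator bounded below on a compact set is bounded and attains its maximum. The maximum is the largest eigenvalue of $(K_T^{+})^{1/2}M_T(K_T^{+})^{1/2}$, where $K_T^{+}$ is the Moore--Penrose pseudoinverse.
\end{itemize}
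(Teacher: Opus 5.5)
Your proof is correct. Part~1 is identical to the paper's, but Part~2 takes a different route.

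\emph{The paper's route.} It splits $\mathbf u=u_r+u_0$ along $\operatorname{range}K_T\oplus\ker K_T$ and whitens with $w=K_T^{1/2}u_r$. The program becomes the maximization of $w^\top K_T^{\dagger/2}M_TK_T^{\dagger/2}w$ over the ball $\|w\|_2^2\le\epsilon_T$, and the value is $\epsilon_T$ times the top eigenvalue of that matrix.

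\emph{Your route.} You work directly from the supremum definition \eqref{eq:computable-generalized-eigenvalue}. The upper bound is the definition itself: kernel vectors contribute zero by the inclusion, and $\lambda_{\max}\ge0$ because $M_T\succeq0$, which you use implicitly. The lower bound is homogeneity: rescale any direction onto the boundary of the ellipsoid.

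\emph{What each buys.} Your argument matches the paper's Rayleigh-quotient definition of $\lambda_{\max}(M_T,K_T)$ exactly. It proves the identity in the extended reals without any spectral identification. The paper's proof, by contrast, tacitly equates that supremum with the top eigenvalue of the whitened matrix. Conversely, the whitening gives finiteness and an explicitly computable eigenvalue formula in one step. You get finiteness separately, by compactness on the unit sphere of $\operatorname{range}K_T$, and you only assert the eigenvalue form at the end.

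For the same reason, finiteness is not really the ``main obstacle.'' Your equality would hold even if the supremum were infinite. Finiteness is what makes Part~2 a genuine certificate, in contrast to Part~1.
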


The profile is therefore the available quadratic safety resource multiplied by
the largest ratio of information to resource.  Bounded information follows when
\[
    \epsilon_T\lambda_{\max}(M_T,K_T)
\]
is uniformly bounded; linear growth follows when this product is $\Theta(T)$.

\begin{corollary}[Stationary matched-order systems]
\label{cor:computable-block-stationary}
Suppose
\[
    M_T=I_T\otimes M_0,
    \qquad
    K_T=I_T\otimes K_0,
\]
and $\ker K_0\subseteq\ker M_0$.  Then
\begin{equation}
    I_{\mathrm{pre}}(T;\theta,\theta')
    =
    \frac{\epsilon_T}{2\sigma^2}
    \lambda_{\max}(M_0,K_0).
    \label{eq:computable-block-stationary}
\end{equation}
A bounded $\epsilon_T$ gives a bounded profile.  If
$\epsilon_T=\Theta(T)$ and $\lambda_{\max}(M_0,K_0)>0$, the profile grows
linearly.  The proof is in
Appendix~\ref{app:computable-block-stationary-proof}.
\end{corollary}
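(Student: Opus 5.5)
The corollary is a direct specialization of Proposition~\ref{prop:computable-single-budget}. The plan has two parts: first show that the generalized Rayleigh quotient of the Kronecker pair reduces to that of the blocks, and then transfer the kernel hypothesis from the blocks to the full matrices.

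\emph{Kernel condition.} We need $\ker K_T\subseteq\ker M_T$ so that part~2 of Proposition~\ref{prop:computable-single-budget} applies. Write $\mathbf u=(u_1,\dots,u_T)$ as stacked blocks. Then
\[
\mathbf u^\top K_T\mathbf u=\sum_t u_t^\top K_0u_t,
\qquad
\mathbf u^\top M_T\mathbf u=\sum_t u_t^\top M_0u_t .
\]
Since $K_0\succeq0$, we have $K_T\mathbf u=0$ if and only if $K_0u_t=0$ for every $t$. The hypothesis $\ker K_0\subseteq\ker M_0$ then gives $M_0u_t=0$ for every $t$, that is, $M_T\mathbf u=0$. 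Here I assume $M_0,K_0\succeq0$, which is inherited from $M_T,K_T\succeq0$ because principal blocks of a PSD matrix are PSD.

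\emph{Rayleigh quotient.} Next we show $\lambda_{\max}(M_T,K_T)=\lambda_{\max}(M_0,K_0)$.

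For the inequality $\ge$, embed a single block: take $\mathbf u=(u,0,\dots,0)$ with $u^\top K_0u>0$. This reproduces exactly the block quotient.

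For the inequality $\le$, use the mediant inequality. Let $S$ be the set of blocks with $u_t^\top K_0u_t>0$. Each block satisfies $u_t^\top M_0u_t\le\lambda_0\,u_t^\top K_0u_t$, where $\lambda_0:=\lambda_{\max}(M_0,K_0)$. For blocks outside $S$ we have $K_0u_t=0$ by positive semidefiniteness, so by the kernel inclusion $M_0u_t=0$ and the numerator contribution vanishes as well. Summing over $t$ gives $\mathbf u^\top M_T\mathbf u\le\lambda_0\,\mathbf u^\top K_T\mathbf u$.

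The degenerate case $M_0=0$ is handled by the zero-value convention, since then $M_T=0$ too.

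\emph{Conclusion.} Substituting the equality of quotients into \eqref{eq:computable-spectral} gives \eqref{eq:computable-block-stationary}. The asymptotic statements follow immediately: if $\epsilon_T$ is bounded the profile is bounded, and if $\epsilon_T=\Theta(T)$ with $\lambda_0>0$ the profile grows linearly.

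The only point requiring care is the semidefinite step $u^\top K_0u=0\Rightarrow K_0u=0$. This is what lets blocks with zero safety cost be discarded without spoiling the mediant bound.
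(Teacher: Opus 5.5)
Your proof is correct, but it takes a different route from the paper's.

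\textbf{The paper's route.} The appendix argument works with the whitened matrix. On the range of $K_T$ it writes
\[
K_T^{\dagger/2}M_TK_T^{\dagger/2}=I_T\otimes\bigl(K_0^{\dagger/2}M_0K_0^{\dagger/2}\bigr),
\]
using $(I_T\otimes K_0)^{\dagger/2}=I_T\otimes K_0^{\dagger/2}$ and the mixed-product rule. The spectrum of the large matrix is then the block spectrum repeated $T$ times, so its top eigenvalue is $\lambda_{\max}(M_0,K_0)$.

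\textbf{Your route.} You instead sandwich the generalized Rayleigh quotient directly. A single-block embedding gives the lower bound. A blockwise bound gives the upper bound, with the kernel inclusion disposing of zero-cost blocks.

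\textbf{What each buys.} Your version is more elementary, since it needs no pseudoinverse square roots or Kronecker identities. It also explicitly checks $\ker K_T\subseteq\ker M_T$. The paper needs this inclusion to invoke part~2 of Proposition~\ref{prop:computable-single-budget}, but leaves it implicit. The paper's version is shorter, given the whitening already set up in the proposition's proof. It also exposes the full spectral structure: the top eigenvalue has multiplicity at least $T$, so the maximizing energy can be placed at any single time step or split across several.

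\textbf{Two small remarks.} First, block-diagonality alone gives $K_T\mathbf u=0$ if and only if $K_0u_t=0$ for all $t$. Positive semidefiniteness is needed only in your upper-bound step, where you pass from $u_t^\top K_0u_t=0$ to $K_0u_t=0$. Second, the claim that bounded $\epsilon_T$ gives a bounded profile also uses $\lambda_{\max}(M_0,K_0)<\infty$. This follows from the kernel inclusion, for example via the range/kernel decomposition in the proof of Proposition~\ref{prop:computable-single-budget}. It is worth stating explicitly.
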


\subsection{Multiple constraints and an SDP certificate}
\label{subsec:computable-sdp}

Suppose the common core is contained in the quadratic outer approximation
\begin{equation}
    \mathcal U_{\mathrm{core}}^T
    \subseteq
    \widehat{\mathcal U}_{\mathrm{core}}^T
    :=
    \left\{
       \mathbf u:
       \mathbf u^\top Q_{j,T}\mathbf u\le b_{j,T},
       \quad j=1,\ldots,m_T
    \right\},
    \label{eq:computable-multi-core}
\end{equation}
where $Q_{j,T}\succeq0$ and $b_{j,T}\ge0$.  Then
\begin{equation}
    2\sigma^2 I_{\mathrm{pre}}(T;\theta,\theta')
    \le
    v_T
    :=
    \max_{\mathbf u}
    \left\{
       \mathbf u^\top M_T\mathbf u:
       \mathbf u^\top Q_{j,T}\mathbf u\le b_{j,T},\ j\in[m_T]
    \right\}.
    \label{eq:computable-qcqp}
\end{equation}
The QCQP is nonconvex in general.  Its semidefinite dual is
\begin{equation}
\begin{aligned}
    d_T^{\mathrm{sdp}}
    :=
    \min_{\lambda\ge0}
    \quad&
    \sum_{j=1}^{m_T}b_{j,T}\lambda_j
    \\
    \text{subject to}\quad&
    \sum_{j=1}^{m_T}\lambda_jQ_{j,T}\succeq M_T.
\end{aligned}
    \label{eq:computable-dual-sdp}
\end{equation}
Weak duality gives $v_T\le d_T^{\mathrm{sdp}}$.  Thus every feasible dual
point is a valid information upper bound; strong duality is not required.

\begin{theorem}[Uniform SDP information certificate]
\label{thm:computable-uniform-sdp}
Suppose that for every horizon $T$ there are multipliers
$\lambda_{j,T}\ge0$ such that
\begin{equation}
    \sum_{j=1}^{m_T}\lambda_{j,T}Q_{j,T}\succeq M_T
    \label{eq:computable-matrix-domination}
\end{equation}
and
\begin{equation}
    \sum_{j=1}^{m_T}b_{j,T}\lambda_{j,T}\le\Lambda
    \qquad\text{for all }T
    \label{eq:computable-uniform-dual-value}
\end{equation}
for some finite $\Lambda$.  Then
\begin{equation}
    \beta_{\mathrm{pre}}(\theta,\theta')
    \le
    \frac{\Lambda}{2\sigma^2}.
    \label{eq:computable-uniform-capacity}
\end{equation}
Once the commitment allowance and the noncommitment regret profile have been bounded, Theorem~\ref{thm:binary-precommitment-limit} applies with information ceiling $\Lambda/(2\sigma^2)$.  The proof is in Appendix~\ref{app:computable-uniform-sdp-proof}.
\end{theorem}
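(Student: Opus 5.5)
The plan is a chain of three inequalities that are each already available. Fix a horizon $T$ and let $\lambda_{j,T}\ge0$ be the multipliers supplied by the hypothesis.

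\textbf{Step 1 (quadratic form dominates information).} For any $\mathbf u\in\widehat{\mathcal U}_{\mathrm{core}}^T$, the matrix domination \eqref{eq:computable-matrix-domination} and nonnegativity of the multipliers give
\[
\mathbf u^\top M_T\mathbf u\le\sum_j\lambda_{j,T}\,\mathbf u^\top Q_{j,T}\mathbf u\le\sum_j\lambda_{j,T}b_{j,T}\le\Lambda .
\]
Taking the supremum over the outer approximation yields $v_T\le\Lambda$. This is exactly weak duality for \eqref{eq:computable-dual-sdp}, evaluated at a feasible dual point, so strong duality is never needed.

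\textbf{Step 2 (link to $I_{\mathrm{pre}}$).} Because $\mathcal U_{\mathrm{core}}^T\subseteq\widehat{\mathcal U}_{\mathrm{core}}^T$ and $\Phi_T(\mathbf u)=\mathbf u^\top M_T\mathbf u$ in the linear-regressor setting, Proposition~\ref{lem:open-loop-reduction} gives
\[
2\sigma^2 I_{\mathrm{pre}}(T;\theta,\theta')=\sup_{\mathcal U_{\mathrm{core}}^T}\Phi_T\le v_T,
\]
which is \eqref{eq:computable-qcqp}. Combining with Step 1 gives $I_{\mathrm{pre}}(T;\theta,\theta')\le\Lambda/(2\sigma^2)$ for every $T$.

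\textbf{Step 3 (uniformity in $T$).} The bound $\Lambda$ does not depend on $T$ by \eqref{eq:computable-uniform-dual-value}. Taking the supremum over $T\ge1$ in Definition~\ref{def:pairwise-pre-information} therefore gives \eqref{eq:computable-uniform-capacity}.

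The closing sentence of the statement is then just an instantiation. Theorem~\ref{thm:binary-precommitment-limit} needs only an upper bound on $I_{\mathrm{pre}}$ (and on $\beta_{\mathrm{pre}}$ for the asymptotic form), since $\psi_\alpha$ is nonincreasing. Substituting $\Lambda/(2\sigma^2)$ is therefore legitimate once $\alpha_T$ and $\mathcal G_T$ have been bounded separately.

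\textbf{Main obstacle.} The only delicate point is Step 2: justifying that the open-loop reduction applies, i.e.\ that Assumption~\ref{ass:deterministic-design} holds in the quadratic setting. I would treat it as a standing hypothesis of the subsection. If the supremum in Proposition~\ref{lem:open-loop-reduction} is over a set that is only contained in the outer approximation, monotonicity of the supremum is all that is used, so no extra work is needed beyond checking that the inclusion is pathwise for every pre-commitment action sequence.
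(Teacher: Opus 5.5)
Your proposal is correct and follows the paper's proof essentially verbatim. You bound $\mathbf u^\top M_T\mathbf u$ by $\Lambda$ on the quadratic outer core using the matrix domination and nonnegative multipliers, pass to $I_{\mathrm{pre}}(T;\theta,\theta')$ through Proposition~\ref{lem:open-loop-reduction} and the inclusion $\mathcal U_{\mathrm{core}}^T\subseteq\widehat{\mathcal U}_{\mathrm{core}}^T$, and then take the supremum over $T$. For the closing sentence about substituting $\Lambda/(2\sigma^2)$, the step also uses that $q\mapsto\mathcal G_T(\theta,\theta';q)$ is nondecreasing, not only that $\psi_\alpha$ is nonincreasing.
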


A finite-horizon solve bounds only $I_{\mathrm{pre}}(T)$.  A bounded-capacity
certificate requires feasible multipliers whose objective values are uniformly
bounded in $T$.  Conversely, a growing SDP value does not prove that the true
information profile grows: the relaxation may be loose.  Even exact
information growth does not by itself establish safe oracle recovery.

\begin{remark}[Affine terms and input constraints]
Affine dynamics, nonzero initial states, off-center ellipsoids, and linear
input constraints can be represented by homogenizing with
$\widetilde{\mathbf u}=(1,\mathbf u^\top)^\top$ and imposing
$\widetilde u_0=1$ (or $W_{00}=1$ after lifting).  The resulting SDP remains
a valid upper certificate, but the relaxation may be more conservative.
\end{remark}

\subsection{Labeled problems}
\label{subsec:computable-labeled}

Pairwise certificates also give a conservative upper bound on the mutual
information used in Proposition~\ref{prop:list-fano}.  Let
$\Theta_M=\{\theta_1,\ldots,\theta_M\}$ and use the same labeled stopped record
$S_T^\pi$ under every label.  Suppose that every
$\pi\in\Pi_{\mathrm{safe}}^T(\Theta_M)$ satisfies
\[
    D_{\mathrm{KL}}\!\left(
      P_{\theta_j,T}^{\pi,\mathrm{pre}}
      \,\middle\|\,
      P_{\theta_k,T}^{\pi,\mathrm{pre}}
    \right)
    \le\overline\beta_{jk,T},
    \qquad j,k\in[M].
\]
Then
\begin{equation}
    I_{\mathrm{pre}}^{(M)}(T;\Theta_M)
    \le
    \frac{1}{M^2}
    \sum_{j=1}^M\sum_{k=1}^M
    \overline\beta_{jk,T}.
    \label{eq:computable-multiclass-average}
\end{equation}
The proof is in Appendix~\ref{app:computable-multiclass-bridge-proof}.  The
bound can be loose because the pairwise programs may favor different sensing
actions.  For the unknown-location problem in
Section~\ref{subsec:unknown-location-lists}, the direct list calculation is
sharper.

\subsection{Finite-headroom example}
\label{subsec:computable-thermal}

Recall the finite-headroom model used in the binary consequence of
Theorem~\ref{thm:binary-precommitment-limit}:
\[
    d_{t+1}=d_t-\theta u_t^2,
    \qquad
    y_t=\theta u_t+\xi_t,
    \qquad
    0\le u_t\le U.
\]
For the pair $(\theta_b,\theta_d)$ with
$0\le\theta_b<\theta_d$, the common regime satisfies
\[
    \sum_{t=1}^T u_t^2\le\frac{d_0}{\theta_d}.
\]
The information matrix is
\[
    M_T=(\theta_d-\theta_b)^2I_T,
    \qquad
    K_T=I_T.
\]
With the pointwise action cap, the largest feasible input energy is
$\min\{TU^2,d_0/\theta_d\}$.  Hence
\begin{equation}
    I_{\mathrm{pre}}(T;\theta_b,\theta_d)
    =
    \frac{(\theta_d-\theta_b)^2}{2\sigma^2}
    \min\left\{TU^2,\frac{d_0}{\theta_d}\right\}.
    \label{eq:computable-thermal-exact}
\end{equation}
Once $TU^2\ge d_0/\theta_d$, the profile saturates at
$(\theta_d-\theta_b)^2d_0/(2\sigma^2\theta_d)$, which is the bound used in
the finite-headroom consequence.

The one-sided budget in Remark~\ref{rem:signed-parity} is better
handled directly.  If $0\le u_t\le U$ and $\sum_tu_t\le B$, then
\[
    \sum_tu_t^2\le\min\{B^2,UB\}.
\]
No quadratic-budget reformulation is needed.

\paragraph{Scope of the computational certificate.}
A feasible finite-horizon dual point bounds one profile value.  A uniformly
bounded family of dual points certifies bounded pre-commitment information.
Pairwise bounds can also be combined into a sufficient multiclass certificate.
A growing SDP upper bound does not establish persistent information, and none
of these calculations supplies the commitment value or the safe transition to
the oracle; those remain separate control arguments.

%% file: safe_v15_package/numerical_v3_edited.tex
\section{Numerical Illustrations}
\label{sec:numerical-illustrations}

The preceding sections give three outcomes.  Bounded pre-commitment
information yields an oracle-gap lower bound through
Theorem~\ref{thm:binary-precommitment-limit}.  Repeatable information aligned
with safety yields oracle recovery through
Theorem~\ref{thm:positive-information-recovery}.  In the multi-alternative
setting of Section~\ref{subsec:unknown-location-lists}, finite sensing can
instead support a certified list and partial adaptation.

This section illustrates these outcomes.  The replenishment and sensor
examples evaluate analytic information profiles.  The unknown-location example
uses Monte Carlo only to evaluate the resulting list sizes and values.  The braking example compares the simulated lower-confidence policy
with the exact robust baseline, nominal certainty equivalence, and
the analytic regret upper bound from Theorem~\ref{thm:positive-information-recovery}.
Monte Carlo is used only to display policy performance; the safety
guarantee for the lower-confidence policy remains analytic. Safety in all four examples is established
analytically, not by simulation.  The quantities \(I_{\rm av}\),
\(I_{\rm rate}\), and \(I_{\rm level}\) are example-specific evidence
quantities used in the plots; \(I_{\rm pre}\) in
Definition~\ref{def:pairwise-pre-information} remains the general profile.

\subsection{Replenishment: from bounded to persistent safe information}
\label{subsec:numerics-replenishment}

Consider the replenishing thermal resource
\begin{equation}
    d_{t+1}
    =
    \min\{d_{\max},d_t+b-\theta u_t^2\},
    \qquad
    d_t\ge0,
    \qquad
    0\le u_t\le U,
    \label{eq:num-replenishing-dynamics}
\end{equation}
with sensor
\begin{equation}
    y_t=\theta u_t+\xi_t,
    \qquad
    \xi_t\stackrel{\mathrm{iid}}{\sim}\mathcal N(0,\sigma^2).
    \label{eq:num-replenishing-sensor}
\end{equation}
We compare the target \(\theta_0=0\) with the incompatible alternative
\(\theta_1>0\).  The \(\theta_0\)-oracle uses \(u_t=U\).  Before excluding
\(\theta_1\), the learner must preserve feasibility under \(\theta_1\).

The largest Gaussian divergence available over a length-\(T\) common-safe
experiment is
\begin{equation}
    I_{\rm av}(T;b)
    =
    \frac{\theta_1^2}{2\sigma^2}
    \min\left\{
      U^2T,
      \frac{d_0+bT}{\theta_1}
    \right\}.
    \label{eq:num-replenishing-profile}
\end{equation}
The first term is the action limit; the second is the initial headroom plus
replenishment.  Figure~\ref{fig:num-replenishment} uses
\[
    \theta_0=0,
    \quad
    \theta_1=1,
    \quad
    \sigma=1,
    \quad
    U=1,
    \quad
    d_0=d_{\max}=2.
\]
In the plotted horizon range, the resource term is active and
\[
    I_{\rm av}(T;b)=1+\frac b2T.
\]
Thus \(b=0\) gives a bounded information budget, whereas every fixed \(b>0\)
gives a positive information rate.

A robust-then-commit policy accumulates evidence inside the common-safe regime
and uses \(u=U\) only after the log-likelihood ratio in favor of \(\theta_0\)
over \(\theta_1\) reaches
\begin{equation}
    \kappa_T=\log(1/\eta_T)=2\log T,
    \qquad
    \eta_T=T^{-2}.
    \label{eq:num-llr-stopping}
\end{equation}
Under \(\theta_1\), the exponentiated likelihood ratio is a nonnegative
martingale.  Ville's inequality therefore bounds the probability of false
commitment by \(e^{-\kappa_T}=T^{-2}\).

\begin{figure}[!htbp]
\centering
\includegraphics[width=\textwidth]{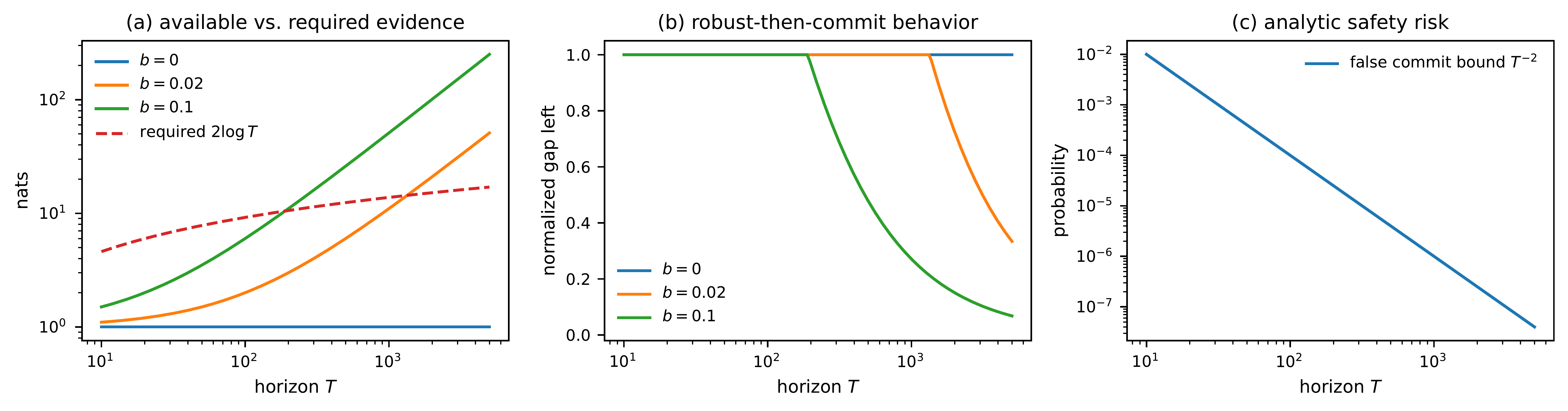}
\caption{\textbf{Replenishment changes the information-growth regime} (Sec~\ref{subsec:numerics-replenishment}).
Replenishing finite-headroom device with target $\theta_0=0$,
alternative $\theta_1=1$, and replenishment rate $b$.
\emph{(a)} Available common-safe evidence $I_{\mathrm{av}}(T;b)$
against the commitment threshold $\kappa_T=2\log T$: bounded for
$b=0$, linear for every fixed $b>0$.
\emph{(b)} Normalized gap left, $\min\{1,\kappa_T/I_{\mathrm{av}}(T;b)\}$:
the fraction of the horizon a robust-then-commit policy spends below the
threshold, and hence the fraction of the commitment-only gap it forfeits.
It is an evidence-budget ratio, not a simulated regret; it equals one when
the threshold is unreachable and vanishes when the threshold requires a
negligible fraction of the available evidence.
\emph{(c)} The Ville bound $e^{-\kappa_T}=T^{-2}$ on false commitment under
$\theta_1$; the curve is analytic, not a Monte Carlo estimate.}
\label{fig:num-replenishment}
\end{figure}

For \(b=0\), increasing the horizon does not increase the evidence available
before commitment.  For \(b>0\), the information profile grows linearly, so
the bounded-information premise of Theorem~\ref{thm:binary-precommitment-limit}
no longer applies.

\subsection{Sensor ablation: transient evidence versus persistent imprint}
\label{subsec:numerics-sensor}

We next keep the plant, objective, and safe diagnostic action sequence fixed
and change only the sensor.  Set \(b=0\) and retain the pair
\((\theta_0,\theta_1)=(0,\theta_1)\).  The learner applies a safe prefix with
\[
    q:=\sum_{t=1}^m u_t^2\le d_0/\theta_1,
\]
and then sets \(u_t=0\).

With the rate sensor
\[
    y_t^{\rm rate}=\theta u_t+\xi_t,
\]
no further information arrives after the prefix.  For every \(T\ge m\),
\begin{equation}
    I_{\rm rate}(T)=\frac{\theta_1^2q}{2\sigma^2}.
    \label{eq:num-rate-information}
\end{equation}
With the level sensor
\[
    y_t^{\rm level}=d_t+\xi_t,
\]
the same prefix leaves a persistent separation
\(\Delta_d=\theta_1q\) between the two headroom levels.  Independent passive
measurements then give
\begin{equation}
    I_{\rm level}(T)
    =
    I_{\rm level}(m)
    +
    \frac{\Delta_d^2}{2\sigma^2}(T-m),
    \qquad T\ge m.
    \label{eq:num-level-information}
\end{equation}
Thus the rate sensor yields finite evidence, while the level sensor converts
the same finite perturbation into a linearly growing observation record.

\begin{figure}[!htbp]
\centering
\includegraphics[width=.86\textwidth]{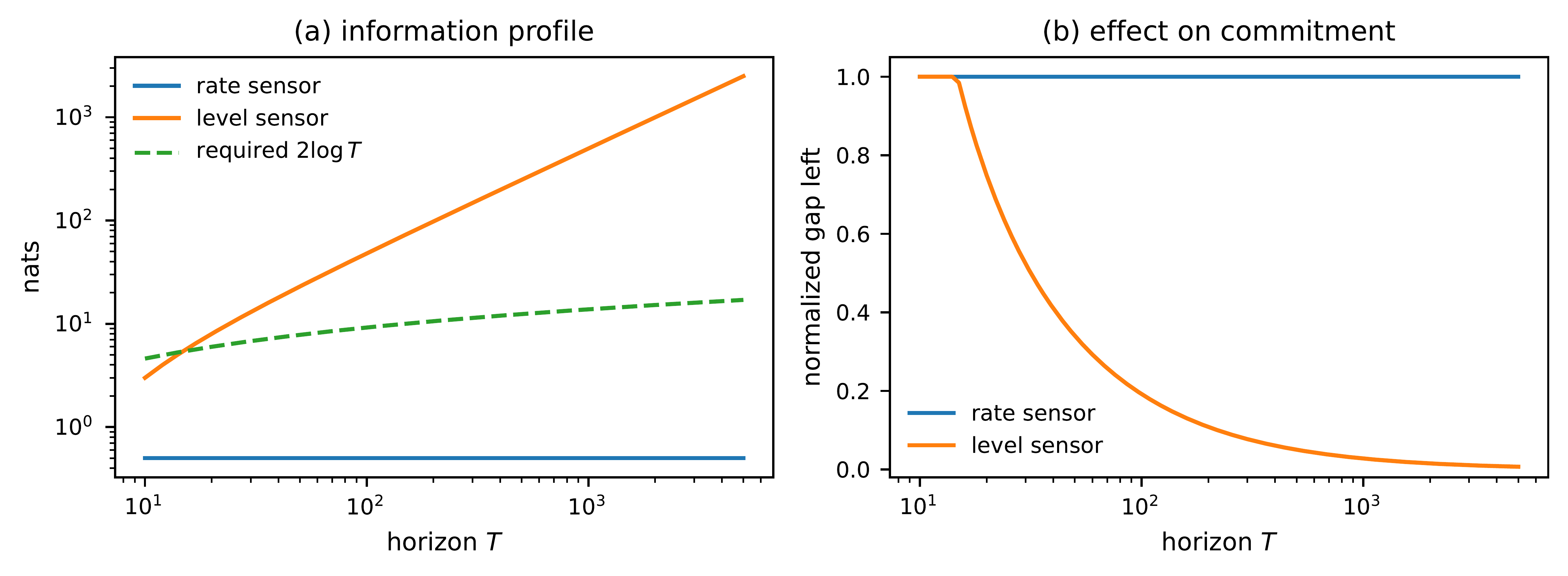}
\caption{\textbf{The sensing channel changes the information profile} (Section~\ref{subsec:numerics-sensor}); both configurations apply the same safe
diagnostic prefix of energy $q$, and only the sensor differs.
\emph{(a)} Pre-commitment information: the rate sensor
$y^{\mathrm{rate}}_t=\theta u_t+\xi_t$ records only the transient input,
giving the bounded profile; the level sensor
$y^{\mathrm{level}}_t=d_t+\xi_t$ observes the persistent headroom
separation $\Delta_d=\theta_1 q$, so information grows linearly.
\emph{(b)} Normalized gap left, $\min\{1,\kappa_T/I(T)\}$, as in
Figure~2(b): the rate sensor never reaches the growing threshold
$\kappa_T=2\log T$, while the level sensor reaches it after
logarithmically many passive measurements.}
\label{fig:num-sensor}
\end{figure}

The example isolates the role of observation design: the plant dynamics alone
do not determine the pre-commitment information regime.

\subsection{Unknown disturbance location: safe lists and partial adaptation}
\label{subsec:numerics-unknown-location}

We use the multi-alternative model from
Section~\ref{subsec:unknown-location-lists}.  There are \(M\) possible affected
locations and one true location \(J\).  During the diagnostic window,
\begin{equation}
    x_{i,t+1}=x_{i,t}+\Delta\mathbf 1\{i=J\},
    \qquad
    x_{i,1}=0,
    \qquad
    x_{i,t}\le H,
    \label{eq:num-location-diagnostic-state}
\end{equation}
with \(H=N\Delta\).  Hence unit-throughput diagnostic operation is safe for
\(N\) rounds.  Varying \(N\) changes the available physical headroom, not
only a policy parameter; the curves therefore compare different safe sensing
windows.  At each round, the learner inspects one location:
\begin{equation}
    y_t
    =
    \mu\mathbf 1\{A_t=J\}+\xi_t,
    \qquad
    \xi_t\stackrel{\rm iid}{\sim}\mathcal N(0,\sigma^2).
    \label{eq:num-location-sensor}
\end{equation}

At the deadline, the learner returns a nonempty certified list
\(\mathcal C_N\).  The post-diagnostic controller divides mitigation and
throughput across the retained locations.  If \(J\in\mathcal C_N\), it arrests
the affected state.  If \(J\notin\mathcal C_N\), the state violates the
constraint on the next round and the continuation reward is set to zero.
Therefore safety reduces to retaining the true location.

We use the elimination rule from Section~\ref{subsec:unknown-location-lists}.
A candidate is deleted only when its likelihood-ratio statistic exceeds
\(\kappa_T=\log(1/\eta_T)\).  Since the lists only shrink, Ville's inequality
gives, for every true location \(J\),
\begin{equation}
    \mathbb P_J(J\notin\mathcal C_N)\le\eta_T.
    \label{eq:num-location-list-safety}
\end{equation}
At each round, the policy inspects the least-sampled surviving location.  This
balanced rule is used to demonstrate partial certification; it is not claimed
to be an optimal sensing policy.

All three controllers receive unit reward during the \(N\) diagnostic rounds.
Afterward, the robust controller retains all \(M\) locations, the list
controller uses \(\mathcal C_N\), and the oracle retains only \(J\).  Their
values are
\begin{align}
    J_T^{\rm robust}(N)
    &=
    N+\frac{T-N}{M},
    \label{eq:num-location-robust-value}
    \\
    J_T^{\rm list}(N)
    &=
    N
    +(T-N)
    \mathbb E\!\left[
       \frac{\mathbf 1\{J\in\mathcal C_N\}}
            {|\mathcal C_N|}
    \right],
    \label{eq:num-location-list-value}
    \\
    J_T^{\rm oracle}(N)
    &=T.
    \label{eq:num-location-oracle-value}
\end{align}
The fraction of the robust-to-oracle gap recovered by the list policy is
\begin{equation}
    \Gamma_N
    :=
    \frac{J_T^{\rm list}(N)-J_T^{\rm robust}(N)}
         {J_T^{\rm oracle}(N)-J_T^{\rm robust}(N)}.
    \label{eq:num-location-recovered-fraction}
\end{equation}

Panel~(a) of Figure~\ref{fig:num-unknown-location} reports
\[
    I(J;S_N)=\log M-\mathbb E[H(p_N)],
\]
where \(p_N\) is the posterior computed from the exact Gaussian likelihoods
under a uniform prior.  We use
\[
    M=8,
    \quad
    \mu=2.5,
    \quad
    \sigma=1,
    \quad
    T=1000,
    \quad
    \eta_T=T^{-2}=10^{-6}.
\]
For each value of \(N\), all Monte Carlo quantities are averaged over
\(10^5\) independent draws of \(J\) and the sensor noise.  The standard error
of every reported reward rate or probability is therefore at most
\(1/(2\sqrt{10^5})<1.6\times10^{-3}\).

At \(N=36\), the reward rates are
\[
    \text{robust}=0.1565,
    \qquad
    \text{safe list}=0.3893,
    \qquad
    \text{oracle}=1.
\]
The list policy recovers \(27.6\%\) of the robust-to-oracle gap.  Its mean list
size is \(3.62\); \(10.6\%\) of trials produce a singleton list and
\(88.9\%\) produce a proper partial list.  No false elimination was observed
in \(10^5\) trials.  The safety claim is the analytic bound
\eqref{eq:num-location-list-safety}, not the empirical count.

\begin{figure}[t]
\centering
\includegraphics[width=\textwidth]{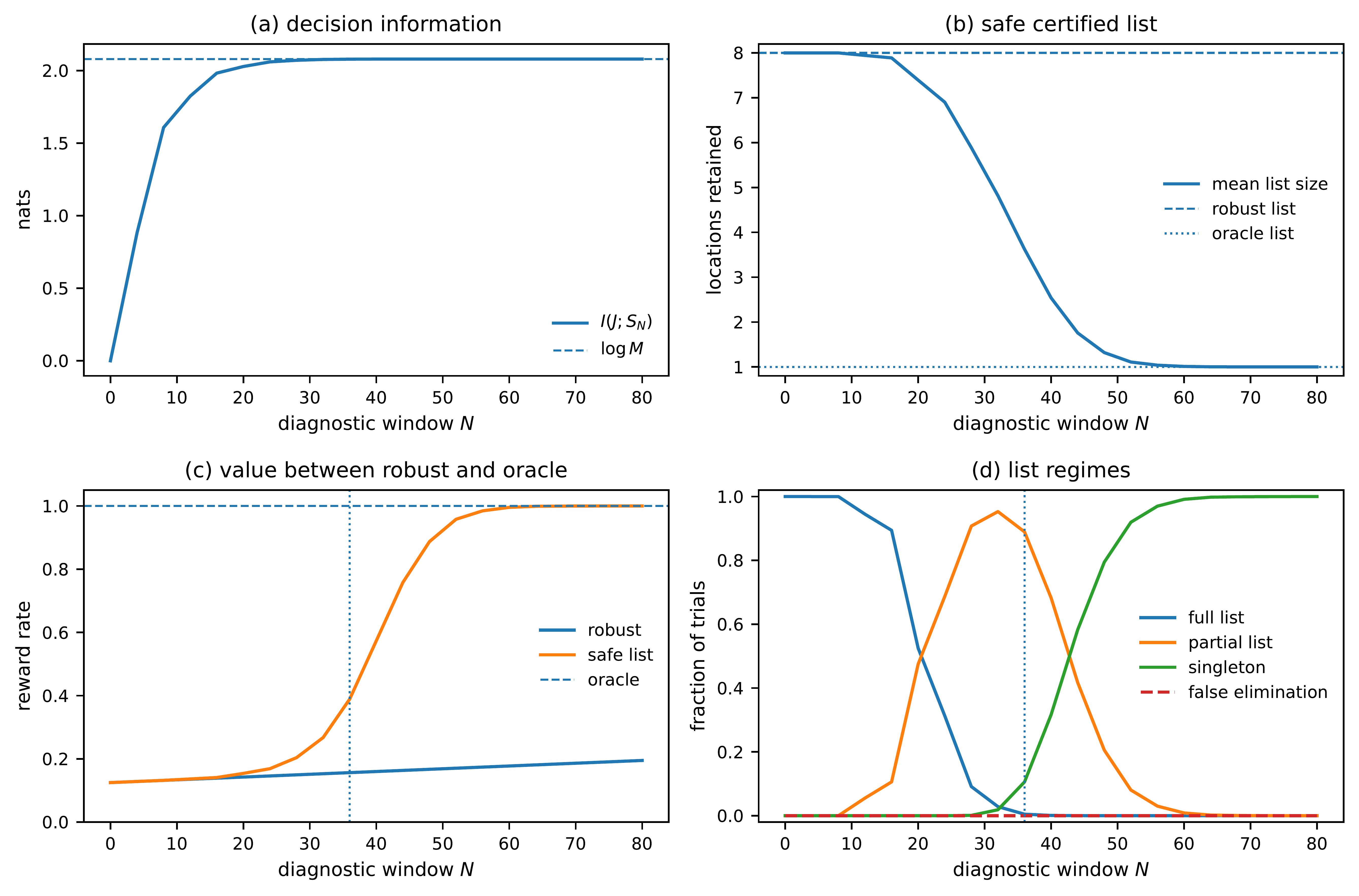}
\caption{\textbf{Limited sensing produces safe partial adaptation.}
Unknown-disturbance-location model (Section~\ref{subsec:numerics-unknown-location}) with $M=8$ locations, true
location $J$, and a diagnostic window of $N$ safe rounds; parameters as in
the text.
\emph{(a)} Decision information $I(J;S_N)$ approaches $\log M$ as $N$ grows.
\emph{(b)} Mean size of the certified list $\mathcal{C}_N$, shrinking from
the robust list $[M]$ toward the oracle list $\{J\}$.
\emph{(c)} Post-diagnostic reward rates: the list policy lies
between the robust and oracle values over a broad range of $N$.
\emph{(d)} Fraction of trials returning the full list, a proper partial
list, or a singleton; near the operating point $N=36$ most trials produce
a proper partial list rather than exact localization. No false elimination
occurred in $10^5$ trials; the safety claim is the analytic bound,
not the empirical count.}
\label{fig:num-unknown-location}
\end{figure}

This example realizes the intermediate case quantified by
Proposition~\ref{prop:list-fano}: a smaller certified uncertainty set can
improve value before exact localization is possible.

\subsection{Repeatable braking: safe information converted into performance}
\label{subsec:numerics-braking}

The final illustration uses the braking model of
Section~\ref{subsec:positive-braking}.  The safety-preserving maneuver is also
informative.  We set
\(
    \theta\in[0.5,1.5],
    \,\,\,
    \theta^\star=1,
    \,\,\,
    D=1,
    \,\,\,
    \sigma=0.3.
\)
Each episode begins at distance \(D\).  Given a pessimistic parameter
\(\vartheta_t\), the learner enters with speed
\[
    v(\vartheta_t)=\sqrt{2D\vartheta_t},
\]
applies maximal braking, and observes
\(
    y_t=\theta^\star+\xi_t.
\)
Because the design coefficient is one in every episode, the accumulated
information is
\(    V_t=t/\sigma^2. \)

The learner uses the lower-confidence rule in
Section~\ref{subsec:positive-confidence} with \(\delta_T=T^{-2}\).  On the
simultaneous confidence event, \(\vartheta_t\le\theta^\star\) for every
\(t\), so the stopping-distance calculation gives safety episode by episode.
Corollary~\ref{cor:positive-braking-recovery} gives
\begin{equation}
    R_T
    =
    O\!\left(
      \sigma
      \sqrt{\frac{D}{\underline\theta}}
      \sqrt{T\log T}
    \right).
    \label{eq:num-braking-rate}
\end{equation}
A permanently robust controller uses \(v(\underline\theta)\), so its regret at
\(\theta^\star>\underline\theta\) is linear.

We fix a terminal horizon $T_{\max}=5000$, set
$\delta=T_{\max}^{-2}$ and $\nu=1/\sigma^2$, and simulate
$20{,}000$ independent trajectories in Figure~\ref{fig:num-braking}. All quantities at episode
$t\leq T_{\max}$ are prefixes of this single horizon-$T_{\max}$
policy. For nominal certainty equivalence, no violation through episode
$t=n+1$ is equivalent to the first $n$ centered Gaussian partial
sums remaining nonpositive. For continuous symmetric increments,
the corresponding survival probability is
\[
    \mathbb{P}_{\theta^\star}
    \bigl(\text{no CE violation through episode }t\bigr)
    =
    \frac{\binom{2n}{n}}{4^n},
    \,\,\, n=t-1.
\implies
    \mathbb{P}_{\theta^\star}
    \bigl(\text{CE violation by episode }t\bigr)
    =
    1-\frac{\binom{2n}{n}}{4^n}.
\]
\begin{figure}[!h]
\centering
\includegraphics[width=.9\textwidth]{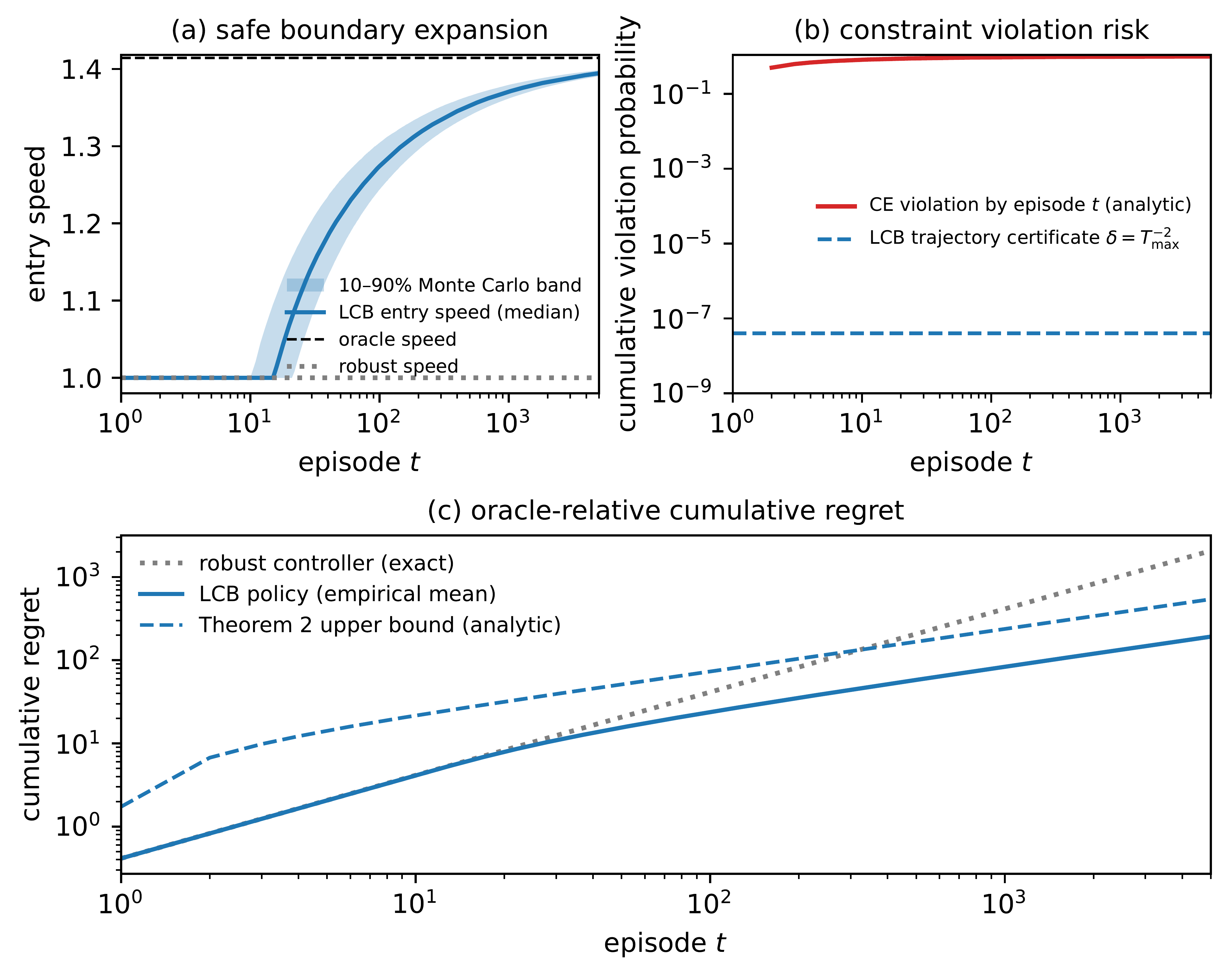}
\caption{\textbf{Repeatable information must be converted into a
safety-valid decision rule.}
Recursively feasible braking with
$\theta\in[0.5,1.5]$, $\theta^\star=1$, $D=1$,
$\sigma=0.3$, $T_{\max}=5000$,
$\delta=T_{\max}^{-2}$, and $\nu=1/\sigma^2$.
\textbf{(a)} Median entry speed of the lower-confidence policy over
$20{,}000$ independent trajectories, with the $10$--$90$ percentile
band. The policy initially coincides with the robust controller and
then expands the certified safe boundary toward the oracle.
\textbf{(b)} Cumulative probability that nominal certainty equivalence
has violated the stopping-distance constraint by episode $t$, compared
with the trajectory-wide LCB (lower confidence bound policy (see Sec.~\ref{subsec:positive-confidence}) certificate $\delta$.
\textbf{(c)} Exact cumulative regret of the permanently robust
controller, empirical mean regret of the LCB policy, and the analytic
upper bound from Theorem~\ref{thm:positive-information-recovery}. Nominal certainty
equivalence is omitted from the regret comparison because it is not
uniformly safe. No LCB violations were observed in simulation; the
safety claim is the analytic bound
$\mathbb{P}(\mathrm{Safe}^{c})\leq\delta$.}
\label{fig:num-braking}
\end{figure}

This example verifies the two ingredients used in
Section~\ref{sec:positive-information-recovery}: one safe policy produces
persistent information, and the value loss decreases with the confidence
radius.

%% file: safe_v15_package/related_v4_constraints_bite.tex
\section{Related Work}
\label{sec:related}

\noindent \textbf{Adaptive control and constrained LQR.}
Finite-time adaptive-control results obtain regret or sample-complexity bounds
when deliberate excitation, process noise, or closed-loop state variation
keeps the system informative
\cite{AbbasiYadkori11,DeanFoCM20,CohenKorenMansour19,SimchowitzFoster20}. Certainty equivalence can be efficient in unconstrained LQR when a point
estimate may be used directly for control~\cite{ManiaTuRecht19}.
A binding trajectory constraint changes this implication: an optimistic
estimation error can itself make the selected action unsafe before later
observations can correct the estimate. Section~\ref{subsec:numerics-braking} gives a
simple instance.
Learning-based MPC can separate a robust model used for safety from a learned
model used for performance, with convergence to the model-informed controller
under sufficient excitation~\cite{AswaniEtAl13}.  Constrained-LQR results give
more specific guarantees.  Dean et al.~\cite{DeanConstrained19} combine persistent excitation with robust
controller synthesis.  Li et al.~\cite{LiDasShammaLi21} obtain
$\widetilde O(T^{2/3})$ regret for a single-trajectory constrained problem.
Schiffer and Janson~\cite{SchifferJanson25} show that a binding safety
constraint can itself generate useful state variation in a one-dimensional
problem with nonlinear baselines.  Hutchinson et al.~\cite{HutchinsonJiangAlizadeh26} obtain
$\widetilde O(\sqrt T)$ regret for multidimensional chance-constrained LQR.
These results give conditions under which safe operation continues to supply
information.  In our terms, they place the problem in a persistent
precommitment-information regime.  We ask whether this property holds for a
given dynamics--sensor--constraint geometry.  Bounded precommitment
information gives a certificate that it does not.

\noindent \textbf{Dual control, experiment design, and controlled sensing.}
Dual control accounts for the effect of an action on both current performance
and future information~\cite{Feldbaum60,AstromWittenmark}.  Experiment design
for control selects inputs for their value to the eventual control task
\cite{hjalmarsson2005experiment}.  Sequential experimental design and
controlled sensing select experiments to distinguish among hypotheses
\cite{Chernoff59,NaghshvarJavidi13,NitinawaratAtiaVeeravalli13,GopalanLS21}.  Our setting
adds a decision-time constraint.  The action that uses the information must be
chosen before the observation generated by that action is available.  Safety
may also shrink the set of admissible experiments as margin is spent.  Thus
full-horizon information can count evidence that arrives too late, and the
relevant likelihood ratio is stopped before commitment.

\noindent \textbf{Safe learning, recursive feasibility, and returnability.}
Safe learning and learning-based control are surveyed by Garc\'ia and
Fern\'andez~\cite{GarciaFernandez15}, Hewing et al.~\cite{HewingEtAl20}, and
Brunke et al.~\cite{BrunkeEtAl22}.  Recursive feasibility and controlled
invariance are central tools in constrained control
\cite{MayneEtAl00,blanchini1999invariance}.  Safety filters and reachability
methods preserve a safe continuation while a learning controller operates
\cite{fisac2019general,wabersich2021predictive,ames2019control}.  Safe
exploration methods use recoverability, returnability, or terminal safe sets
to avoid states with no safe exit
\cite{MoldovanAbbeel12,turchetta2016safe,KollerEtAl19,berkenkamp2017safe}.
In the deterministic systems considered here, noncommitment relative to an
alternative is exactly preservation of hard-safe recursive feasibility under
that alternative.  Commitment is the first action that removes this
continuation.  Our lower bound asks whether the evidence needed for that
action can be collected while the continuation is still preserved.  Related
methods include safe Bayesian optimization and safe linear bandits
\cite{sui2015safe,amani2019linear}; safety issues in power-system learning are
surveyed by Dobbe et al.~\cite{dobbe2020learning}.

\noindent \textbf{Data informativity.}
Data informativity asks whether a given dataset supports a controller that
works for every model consistent with the data.  This line builds on
persistency of excitation~\cite{Willems05} and has been developed into a
general informativity theory~\cite{vanWaarde20}.
Worst-case identification bounds what a finite data record can resolve when
the model class itself is uncertain
\cite{VenkateshDahleh97,VenkateshDahleh01}, and deterministic input designs
with certified excitation show how such records can be generated
\cite{Saligrama12chirp}. Mixed stochastic--set-membership formulations sharpen the excitation story: persistency of excitation is necessary, not only sufficient, for uniform consistency, and finite-data lower bounds hold without assuming a
consistent estimator~\cite{saligrama2005convex}. Our question is online and
decision-timed: whether safe interaction can generate the data needed for a
model-specific decision before that decision must be made.  The distinction is
between certifying a controller from available data and certifying whether the
required data can be collected safely in time.

%% file: safe_v15_package/discussion_v3_constraints_bite.tex
\section{Discussion and Limitations}
\label{sec:discussion}

\noindent \textbf{When safety changes the learning problem.}
If a near-oracle policy can be reached while preserving a safe continuation
under every plausible model, the safety-specific obstruction is absent.
Information can continue to grow inside the common feasible regime, and the
problem is in the setting addressed by constrained adaptive-control analyses.
The lower-bound examples have a different geometry.  Better target
performance requires leaving the common regime, while the actions that
separate the target from an alternative consume the margin that must be
preserved under that alternative.  The issue is then not only the rate at
which uncertainty shrinks.  It is whether the required evidence is available
before the first action that needs it.

\noindent \textbf{Information timing versus information growth.}
Persistent excitation asks whether information continues to grow along the
closed loop.  Precommitment information asks whether the relevant information
arrives in time.  A system may be identifiable from its complete trajectory
and still fail to support safe adaptation if the observations that distinguish
the relevant models appear only after commitment.  Stopping the record is
therefore part of the causal statement, not only a proof technique.  The
positive theorem treats a case in which the safe maneuver itself is
informative and repeatable, so information accumulates without giving up
recursive feasibility.

\noindent \textbf{Design implications.} When persistent safe information is available, it must still be converted into a safety-valid decision rule. In the braking example, nominal certainty equivalence can overestimate braking authority and violate the constraint, whereas the lower-confidence rule expands the safe operating boundary only as the data justify it. When precommitment information is bounded, however, changing the estimator cannot remove the obstruction; the controlled experiment or the required degree of specialization must change. Replenishment, a persistent sensor signal, and certified candidate lists illustrate these three possibilities.

\noindent \textbf{Limits of the present results.}
Bounded precommitment information is a sufficient certificate for an
unavoidable oracle gap.  Failure of the certificate does not prove recovery.
The pairwise profile may be large because different policies are informative
for different alternatives, even though no single safe policy resolves all of
them.  Information may also be available but unusable because the controller
cannot enter the selected regime without losing feasibility.  The recovery
theorem proves one positive result for an ordered scalar parameter, a
repeatable experiment family, and a common continuation state.  Extending it
to vector parameters, several control-equivalence classes, and nonresetting
dynamics requires additional transition and exploitation arguments.  The
computational certificate omits process noise, and the general case of
unbounded but sublinear precommitment information remains unresolved.

The lower-bound theorem itself is more general.  An application must specify
a predictable commitment rule, compute or bound the allowed commitment
probability under the alternative, evaluate the noncommitment regret profile
under the target, and bound the stopped information.  Pathwise value ceilings
are one way to control the regret profile in deterministic systems; they are
not required by the theorem.

%% file: safe_v15_package/appendix_mechanisms_edited.tex
\section*{Acknowledgments}
The author thanks Aditya Gangrade and Alex Olshevsky for technical
discussions, and Emiliano Dall'Anese for help with practical applications.
Large language models were used for editorial assistance, including
identifying possible issues in exposition, consistency, and references.
The author independently developed and verified all mathematical arguments
and conclusions, checked the cited sources, and assumes full responsibility
for the manuscript.

\section{Proofs for Section~\ref{sec:mechanisms-examples}}
\label{app:mechanism-proofs}

\subsection{Proof of Lemma~\ref{lem:finite-expenditure}}
\label{app:proof-finite-expenditure}

\begin{proof}
Fix a uniformly safe policy and one sample path.  Define
\[
    s_t:=\mathbf 1\{t<\tau^\pi\}S_t.
\]
Then $s_t\ge0$ and, by \eqref{eq:finite-expenditure-budget},
\[
    \sum_{t=1}^T s_t\le B_T.
\]
For $q\ge1$,
\[
    \sum_{t=1}^T s_t^q
    \le
    \left(\sum_{t=1}^T s_t\right)^q
    \le B_T^q.
\]
Using \eqref{eq:finite-expenditure-local-kl},
\[
    \sum_{t=1}^T
    \mathbf 1\{t<\tau^\pi\}\mathrm{kl}^{\theta,\theta'}_t
    \le
    \kappa\sum_{t=1}^T s_t^q
    \le
    \kappa B_T^q.
\]
Take expectation under $\theta$, apply
Lemma~\ref{lem:general-stopped-kl}, and then take the supremum over
uniformly safe policies.  This gives
\eqref{eq:finite-expenditure-information}.
\end{proof}
\subsection{Proof of Corollary~\ref{cor:unknown-direction-linear-regret}}

For Variant I, the noncommitment calculation above gives
\[
    G_T^{(1)}
    \geq
    g_1T-O(1),
\]
while the stopped-information bound gives
\[
    I_{\mathrm{pre}}^{(1)}(T;a_0,a_1)
    \leq
    \beta_1+o(1).
\]
For Variant II,
\[
    G_T^{(2)}
    \geq
    g_2T-O(\sqrt{T}),
    \qquad
    I_{\mathrm{pre}}^{(2)}(T;a_0,a_1)
    \leq
    \beta_2+o(1).
\]
In both variants, commitment under \(a_1\) implies a safety violation,
so the commitment allowance is at most
\(\eta_T=o(1)\). The uniform stage-cost bounds give a
noncommitment-profile remainder of order
\(O(T\eta_T)=o(T)\).
Applying Theorem~\ref{thm:binary-precommitment-limit}, dividing by
\(T\), and taking the lower limit proves
\[
    \liminf_{T\to\infty}
    \inf_{\pi\in\Pi_{\mathrm{safe}}^{T,i}}
    \frac{R_T^{(i)}(\pi;a_0)}{T}
    \geq
    \frac{g_i}{2}e^{-\beta_i}.
\]
The eventual linear lower bound follows because the right-hand side is
strictly positive.
\subsection{Proof of Lemma~\ref{lem:finite-sensing-window}}
\label{app:proof-sensing-window}

\begin{proof}
Fix a uniformly safe policy $\pi$.  The policy action at time $t$, the decision
to continue, and the eventual specialization label are functions of the
previously observed record and environment-independent private randomization.
They therefore add no information about $J$ conditional on that record.  The
chain rule for the stopped record gives
\[
    I_\pi(J;S_T^\pi)
    =
    \sum_{t=1}^T
    P_\pi(t<\tau^\pi)
    I_\pi\!\left(
      J;Z_t\mid H_{t-1},U_t,t<\tau^\pi
    \right).
\]
Only label-dependent observations contribute.  By assumption there are at
most $N_T$ such observations before specialization or fallback, and each
contributes at most $c_{\mathrm{obs}}$.  Hence
\[
    I_\pi(J;S_T^\pi)\le N_Tc_{\mathrm{obs}}.
\]
Taking the supremum over uniformly safe policies proves
\eqref{eq:sensing-window-total}.

For the Gaussian sensor, condition on $(H_{t-1},A_t)$ and define
\[
    B_t:=\mathbf 1\{A_t=J\}.
\]
The observation depends on $J$ only through $B_t$, so
\[
    I(J;Z_t\mid H_{t-1},A_t)
    =I(B_t;Z_t\mid H_{t-1},A_t)
    \le H(B_t\mid H_{t-1},A_t)
    \le\log2.
\]
Let
\[
    p_t:=P(J=A_t\mid H_{t-1},A_t),
    \qquad
    P_1:=\mathcal N(\mu,\sigma^2),
    \qquad
    P_0:=\mathcal N(0,\sigma^2).
\]
The output mixture minimizes the weighted average relative entropy over the
choice of reference distribution.  Taking $P_0$ as a reference gives
\[
    I(B_t;Z_t\mid H_{t-1},A_t)
    \le p_tD_{\mathrm{KL}}(P_1\|P_0)
    \le \frac{\mu^2}{2\sigma^2}.
\]
Combining the two bounds proves
\eqref{eq:sensing-window-cobs}.
\end{proof}

\subsection{Proof of Proposition~\ref{prop:list-fano}}
\label{app:proof-list-fano}

\begin{proof}
Let
\[
    E:=\{J\notin C(S)\ \text{or}\ |C(S)|>L\},
    \qquad
    P(E)=\varepsilon_L.
\]
On $E^c$, the true label belongs to a list of size at most $L$, so
\[
    H(J\mid S,E^c)\le\log L.
\]
On $E$, use the bound $H(J\mid S,E)\le\log M$.  Therefore
\begin{align*}
    H(J\mid S)
    &\le H(\mathbf 1_E\mid S)+H(J\mid S,\mathbf 1_E)\\
    &\le h_2(\varepsilon_L)
      +(1-\varepsilon_L)\log L
      +\varepsilon_L\log M.
\end{align*}
Since $J$ is uniform on $[M]$,
\[
    I(J;S)
    =H(J)-H(J\mid S)
    \ge
    (1-\varepsilon_L)\log\frac{M}{L}
    -h_2(\varepsilon_L),
\]
which proves \eqref{eq:list-fano-bound}.

If
$P(J\notin C(S))\le\eta$ and
$P(|C(S)|>L)\le\delta$, then
$\varepsilon_L\le\eta+\delta$ by the union bound.  For $L<M$, the function
\[
    x\longmapsto (1-x)\log(M/L)-h_2(x)
\]
is decreasing on $[0,1/2]$.  Substituting
$\varepsilon_L\le\eta+\delta\le1/2$ gives
\eqref{eq:list-fano-two-sided}.
\end{proof}

\subsection{Safe-list guarantee and value calculation for
Section~\ref{subsec:unknown-location-lists}}
\label{app:proof-unknown-location-list-values}

\begin{proof}[Proof of the safe-list guarantee]
Fix the true location $J$.  At a time $s$ with $A_s=J$, the multiplicative
increment in $\exp(\Lambda_{J,t})$ is
\[
    \frac{\phi_{0,\sigma}(Z_s)}{\phi_{\mu,\sigma}(Z_s)}.
\]
Under the true model, $Z_s\sim\mathcal N(\mu,\sigma^2)$ conditionally on the
past, so this increment has conditional expectation one.  At times with
$A_s\ne J$, the process does not change.  Hence
$\{\exp(\Lambda_{J,t})\}_{t\ge0}$ is a nonnegative martingale under the true
location, even though the sampled locations are chosen adaptively.  Ville's
inequality gives
\[
    P_J\!\left(
      \sup_{t\le N}\Lambda_{J,t}
      \ge\log\frac1{\eta_T}
    \right)
    \le\eta_T.
\]
The event on the left is exactly false elimination of the true location before
the deadline.  If an empty candidate set is replaced by $[M]$, the failure
probability can only decrease.  This proves
\eqref{eq:location-list-safety-guarantee}.
\end{proof}

\begin{proof}[Proof of the value formulas]
Suppose first that $J\in\mathcal C_N$.  For the affected component,
\[
    x_{J,t+1}-x_{J,t}
    =
    \frac{\Delta}{|\mathcal C_N|}
    -\frac{\Delta}{|\mathcal C_N|}
    =0.
\]
Each retained unaffected component decreases by
$\Delta/|\mathcal C_N|$, and each nonretained unaffected component is
unchanged.  Thus the post-diagnostic controller is safe whenever the list
contains the true location.  If $J\notin\mathcal C_N$, the affected component
starts the post-diagnostic phase at the boundary and increases on the next
round, so a violation occurs immediately.

Every policy earns unit reward in each of the $N$ diagnostic rounds.  The
robust controller retains all $M$ locations and earns post-diagnostic reward
$1/M$ per round.  The list controller earns
$1/|\mathcal C_N|$ per post-diagnostic round when $J\in\mathcal C_N$ and zero
after failure.  The oracle retains the singleton $\{J\}$ and earns unit reward
throughout.  These calculations give
\eqref{eq:location-robust-value}--\eqref{eq:location-oracle-value}.
\end{proof}

%% file: safe_v15_package/appendix_positive_v2_edited.tex
\section{Proofs for Section~\ref{sec:positive-information-recovery}}
\label{app:positive-proofs}

\subsection{Proof of Lemma~\ref{lem:positive-mixture-confidence}}
\label{app:positive-confidence-proof}

\begin{proof}
Define
\[
    M_t:=\sum_{s=1}^t\frac{a_s\xi_s}{\sigma^2}.
\]
Then
\[
    \widehat\theta_t-\theta=\frac{M_t}{V_t}.
\]
Because $a_t$ is predictable and
$\xi_t\sim\mathcal N(0,\sigma^2)$ is independent of the past, for every fixed
$\lambda\in\mathbb R$,
\[
    L_t(\lambda)
    :=
    \exp\!\left(
      \lambda M_t-\frac{\lambda^2}{2}V_t
    \right)
\]
is a nonnegative martingale with initial value one.

Integrate $L_t(\lambda)$ against the Gaussian density
$\lambda\sim\mathcal N(0,\nu^{-1})$.  Completing the square gives the
nonnegative martingale
\[
    \mathcal M_t
    =
    \sqrt{\frac{\nu}{V_t+\nu}}
    \exp\!\left(
      \frac{M_t^2}{2(V_t+\nu)}
    \right).
\]
Ville's inequality implies
\[
    \mathbb P_\theta\!\left(
      \sup_{t\ge1}\mathcal M_t\ge\frac1\delta
    \right)
    \le\delta.
\]
On the complementary event, simultaneously for all $t\ge1$,
\[
    M_t^2
    \le
    2(V_t+\nu)
    \log\!\left(
      \frac{\sqrt{1+V_t/\nu}}{\delta}
    \right).
\]
Dividing by $V_t^2$ proves
\[
    |\widehat\theta_t-\theta|
    \le
    b(V_t,\delta;\nu)
\]
for every $t\ge1$.
\end{proof}

\subsection{Proof of Theorem~\ref{thm:positive-information-recovery}}
\label{app:positive-theorem-proof}

\begin{proof}
Let
\[
    \mathcal E_\infty
    :=
    \left\{
      |\widehat\theta_s-\theta|
      \le b(V_s,\delta;\nu)
      \text{ for every }s\ge1
    \right\}.
\]
Lemma~\ref{lem:positive-mixture-confidence} gives
$\mathbb P_\theta(\mathcal E_\infty^c)\le\delta$.

On $\mathcal E_\infty$, the selected design is conservative.  At $t=1$,
$\vartheta_1=\underline\theta\le\theta$.  For $t\ge2$,
\[
    \widehat\theta_{t-1}-b(V_{t-1},\delta;\nu)
    \le\theta.
\]
Clipping to $[\underline\theta,\overline\theta]$ therefore preserves
$\vartheta_t\le\theta$.  The same confidence bound also gives
\[
    0\le\theta-\vartheta_t
    \le2b(V_{t-1},\delta;\nu),
    \qquad t\ge2.
\]
If the lower clipping is inactive, this follows directly from the two-sided
confidence interval.  If it is active, then
$\widehat\theta_{t-1}-b(V_{t-1},\delta;\nu)<\underline\theta$ and
$\widehat\theta_{t-1}\ge\theta-b(V_{t-1},\delta;\nu)$, which imply the same
bound.

By Assumption~\ref{ass:positive-repeatability}(P2), every episode is safe and
returns to the common continuation state on $\mathcal E_\infty$.  Hence
\[
    \overline P_{\theta,T}^\pi(\mathsf{Safe}_{\theta,T}^c)
    \le
    \mathbb P_\theta(\mathcal E_\infty^c)
    \le\delta.
\]

For regret, the first episode contributes at most $r_{\max}$.  For $t\ge2$,
define the predictable event
\[
    \mathcal E_{t-1}
    :=
    \left\{
      |\widehat\theta_{t-1}-\theta|
      \le b(V_{t-1},\delta;\nu)
    \right\}.
\]
It is $\mathcal G_{t-1}$-measurable, and
$\mathbb P_\theta(\mathcal E_{t-1}^c)\le\delta$.  On
$\mathcal E_{t-1}$,
\[
    \vartheta_t\le\theta,
    \qquad
    0\le\theta-\vartheta_t
    \le2b(V_{t-1},\delta;\nu).
\]
Assumption~\ref{ass:positive-repeatability}(P3)--(P4) then gives
\[
    \bigl(
      v^\star(\theta)
      -\mathbb E_\theta[r_t\mid\mathcal G_{t-1}]
    \bigr)
    \mathbf 1_{\mathcal E_{t-1}}
    \le
    \omega\!\left(2b(V_{t-1},\delta;\nu)\right)
    \mathbf 1_{\mathcal E_{t-1}}.
\]
On $\mathcal E_{t-1}^c$, the one-episode loss is at most $r_{\max}$.  Taking
expectations yields
\[
    \mathbb E_\theta^\pi[v^\star(\theta)-r_t]
    \le
    \mathbb E_\theta^\pi\!\left[
      \omega\!\left(2b(V_{t-1},\delta;\nu)\right)
    \right]
    +r_{\max}\delta.
\]
Summing over $t=2,\ldots,T$ and adding the first-episode bound proves
\eqref{eq:positive-information-regret-bound}.
\end{proof}

\subsection{Proof of Corollary~\ref{cor:positive-information-growth}}
\label{app:positive-rate-proof}

\begin{proof}
Set $\delta_T=T^{-2}$.  Under
\eqref{eq:positive-information-growth-assumption}, for every $1\le t\le T$,
\[
    b(V_t,\delta_T;\nu)
    \le
    C_0t^{-\alpha/2}\sqrt{\log T},
\]
where $C_0$ depends only on $c_-$, $c_+$, $\alpha$, and $\nu$.  Since
$\omega(r)\le Lr^q$,
\[
    \omega\!\left(2b(V_t,\delta_T;\nu)\right)
    \le
    C_1L(\log T)^{q/2}t^{-\alpha q/2}.
\]
Theorem~\ref{thm:positive-information-recovery} therefore gives
\[
    R_T(\pi;\theta)
    \le
    r_{\max}
    +C_1L(\log T)^{q/2}
      \sum_{t=1}^{T-1}t^{-\alpha q/2}
    +\frac{r_{\max}}{T}.
\]
The three cases in \eqref{eq:positive-rate-taxonomy} follow from the standard
partial-sum bounds for $t^{-\alpha q/2}$.
\end{proof}

\subsection{Proof of Corollary~\ref{cor:positive-uniform-alignment}}
\label{app:positive-uniform-alignment-proof}

\begin{proof}
Uniform alignment gives
\[
    \rho t\le V_t\le\bar\rho t.
\]
Apply Corollary~\ref{cor:positive-information-growth} with
$\alpha=1$, $c_-=\rho$, $c_+=\bar\rho$, and $q=1$.
\end{proof}

\subsection{Proofs for the braking example}
\label{app:positive-braking-proof}

\begin{proof}[Proof of Lemma~\ref{lem:positive-braking-alignment}]
Under maximal braking, the stopping distance from
$v(\vartheta)=\sqrt{2D\vartheta}$ is
\[
    \frac{v(\vartheta)^2}{2\theta}
    =
    D\frac{\vartheta}{\theta}
    \le D.
\]
Thus the vehicle stops before the wall whenever $\vartheta\le\theta$.  The
observation $Y_t=\theta+\xi_t$ has coefficient $a_t=1$ and contributes Fisher
information $1/\sigma^2$.

Also,
\[
\begin{aligned}
    v^\star(\theta)-v(\vartheta)
    &=
    \sqrt{2D}\,
    \frac{\theta-\vartheta}{\sqrt\theta+\sqrt\vartheta} \\
    &\le
    \sqrt{\frac{D}{2\underline\theta}}
    (\theta-\vartheta).
\end{aligned}
\]
This proves \eqref{eq:positive-braking-modulus}.
\end{proof}

\begin{proof}[Proof of Corollary~\ref{cor:positive-braking-recovery}]
Lemma~\ref{lem:positive-braking-alignment} verifies
Assumption~\ref{ass:positive-repeatability} with
\[
    \rho=\bar\rho=\frac1{\sigma^2},
    \qquad
    \omega(r)=\sqrt{\frac{D}{2\underline\theta}}\,r,
    \qquad
    r_{\max}=\sqrt{2D\overline\theta}.
\]
Apply Corollary~\ref{cor:positive-uniform-alignment}.
\end{proof}

\subsection{Proof of Proposition~\ref{prop:positive-information-equals-kl}}
\label{app:positive-kl-proof}

\begin{proof}
Conditioned on $(\mathcal G_{t-1},a_t)$, the two observation laws are
\[
    \mathcal N(\theta a_t,\sigma^2)
    \qquad\text{and}\qquad
    \mathcal N(\theta'a_t,\sigma^2).
\]
Their conditional KL divergence is
\[
    \frac{(\theta-\theta')^2a_t^2}{2\sigma^2}.
\]
The policy does not commit during the $T$ experiments, so
Lemma~\ref{lem:general-stopped-kl} gives
\[
    D_{\mathrm{KL}}\!\left(
      P_{\theta,T}^{\pi,\mathrm{pre}}
      \,\middle\|\,
      P_{\theta',T}^{\pi,\mathrm{pre}}
    \right)
    =
    \frac{(\theta-\theta')^2}{2}
    \mathbb E_\theta^\pi\!\left[
      \sum_{t=1}^T\frac{a_t^2}{\sigma^2}
    \right],
\]
which is \eqref{eq:positive-kl-information-identity}.  Any one uniformly safe
policy gives a lower bound on the profile, proving
\eqref{eq:positive-profile-lower-bound}.  Under uniform $\rho$-alignment,
repeating $\kappa_{\underline\theta}$ is safe for the full interval and gives
$V_T\ge\rho T$, which proves \eqref{eq:positive-linear-profile}.
\end{proof}

%% file: safe_v15_package/appendix_computable_edited.tex
\section{Proofs for Section~\ref{sec:certification}}
\label{app:computable-certificates}

\subsection{Proof of Proposition~\ref{lem:open-loop-reduction}}
\label{app:computable-open-loop-proof}

\begin{proof}
Fix a uniformly safe causal policy.  By
Lemma~\ref{lem:general-stopped-kl} and
Assumption~\ref{ass:deterministic-design}(D1),
\[
\begin{aligned}
    D_{\mathrm{KL}}\!\left(
      P_{\theta,T}^{\pi,\mathrm{pre}}
      \,\middle\|\,
      P_{\theta',T}^{\pi,\mathrm{pre}}
    \right)
    =
    \frac{1}{2\sigma^2}
    \mathbb E_\theta^\pi\!\left[
      \sum_{t<\tau^\pi}
      \bigl(g^\top\zeta_t(U_{1:t})\bigr)^2
    \right].
\end{aligned}
\]
Every realized pre-commitment action prefix extends, by (D3), to a sequence in
$\mathcal U_{\mathrm{core}}^T$.  Since all summands are nonnegative,
\[
    \sum_{t<\tau^\pi}
    \bigl(g^\top\zeta_t(U_{1:t})\bigr)^2
    \le
    \sup_{u_{1:T}\in\mathcal U_{\mathrm{core}}^T}
    \Phi_T(u_{1:T})
\]
on every path.  Taking the supremum over uniformly safe policies gives the
upper bound in \eqref{eq:computable-open-loop}.

Conversely, (D3) states that every sequence in
$\mathcal U_{\mathrm{core}}^T$ can be played open loop by a uniformly safe,
noncommitting policy.  For this policy, the stopped record is the full
horizon record, and the preceding identity gives KL divergence
$\Phi_T(u_{1:T})/(2\sigma^2)$.  Taking the supremum over core sequences proves
the reverse inequality.
\end{proof}

\subsection{Proof of Proposition~\ref{prop:computable-single-budget}}
\label{app:computable-single-budget-proof}

\begin{proof}
Suppose first that
$\ker K_T\not\subseteq\ker M_T$.  Then there is a vector
$z\in\ker K_T$ with $z^\top M_Tz>0$.  For every $c>0$, the vector $cz$ is
feasible, while
\[
    (cz)^\top M_T(cz)=c^2z^\top M_Tz\longrightarrow\infty.
\]
Hence the information program is unbounded.

Now suppose $\ker K_T\subseteq\ker M_T$.  Decompose
$\mathbf u=u_r+u_0$, where
$u_r\in\operatorname{range}(K_T)$ and $u_0\in\ker K_T$.  Since
$M_Tu_0=0$, both the objective and the constraint depend only on $u_r$.  Set
\[
    w:=K_T^{1/2}u_r.
\]
Then
\[
    u_r=K_T^{\dagger/2}w,
    \qquad
    u_r^\top K_Tu_r=\|w\|_2^2,
\]
and
\[
    u_r^\top M_Tu_r
    =
    w^\top K_T^{\dagger/2}M_TK_T^{\dagger/2}w.
\]
Maximizing over $\|w\|_2^2\le\epsilon_T$ gives
\[
    \sup_{\mathbf u^\top K_T\mathbf u\le\epsilon_T}
    \mathbf u^\top M_T\mathbf u
    =
    \epsilon_T\lambda_{\max}(M_T,K_T).
\]
Divide by $2\sigma^2$ and apply
Proposition~\ref{lem:open-loop-reduction}.
\end{proof}

\subsection{Proof of Corollary~\ref{cor:computable-block-stationary}}
\label{app:computable-block-stationary-proof}

\begin{proof}
On the range of $K_T$,
\[
    K_T^{\dagger/2}M_TK_T^{\dagger/2}
    =
    I_T\otimes
    \bigl(K_0^{\dagger/2}M_0K_0^{\dagger/2}\bigr).
\]
Its largest eigenvalue is therefore
$\lambda_{\max}(M_0,K_0)$.  Substituting this equality into
\eqref{eq:computable-spectral} gives
\eqref{eq:computable-block-stationary}.
\end{proof}

\subsection{Proof of Theorem~\ref{thm:computable-uniform-sdp}}
\label{app:computable-uniform-sdp-proof}

\begin{proof}
Fix $T$ and a vector $\mathbf u$ in the quadratic outer core.  By
\eqref{eq:computable-matrix-domination},
\[
\begin{aligned}
    \mathbf u^\top M_T\mathbf u
    &\le
    \sum_{j=1}^{m_T}
    \lambda_{j,T}\mathbf u^\top Q_{j,T}\mathbf u \\
    &\le
    \sum_{j=1}^{m_T}\lambda_{j,T}b_{j,T}
    \le\Lambda.
\end{aligned}
\]
Hence the QCQP value in \eqref{eq:computable-qcqp} is at most $\Lambda$.
Proposition~\ref{lem:open-loop-reduction} then gives
\[
    I_{\mathrm{pre}}(T;\theta,\theta')
    \le
    \frac{\Lambda}{2\sigma^2}
\]
for every $T$.  Taking the supremum over horizons proves
\eqref{eq:computable-uniform-capacity}.
\end{proof}

\subsection{Proof of the labeled aggregation bound}
\label{app:computable-multiclass-bridge-proof}

\begin{proof}
Fix a uniformly safe policy and write
\[
    P_j:=P_{\theta_j,T}^{\pi,\mathrm{pre}},
    \qquad
    \overline P:=\frac1M\sum_{k=1}^MP_k.
\]
Under the uniform prior,
\[
    I(J;S_T^\pi)
    =
    \frac1M\sum_{j=1}^M
    D_{\mathrm{KL}}(P_j\|\overline P).
\]
Convexity of KL divergence in its second argument gives
\[
    D_{\mathrm{KL}}(P_j\|\overline P)
    \le
    \frac1M\sum_{k=1}^M D_{\mathrm{KL}}(P_j\|P_k).
\]
Applying the assumed pairwise bounds and averaging over $j$ yields
\[
    I(J;S_T^\pi)
    \le
    \frac1{M^2}
    \sum_{j=1}^M\sum_{k=1}^M
    \overline\beta_{jk,T}.
\]
Finally, take the supremum over uniformly safe policies to obtain
\eqref{eq:computable-multiclass-average}.
\end{proof}

%% file: main_bibtex_ready.bbl
\begin{thebibliography}{10}

\bibitem{AbbasiYadkori11}
Yasin Abbasi-Yadkori and Csaba Szepesv{\'a}ri.
\newblock Regret bounds for the adaptive control of linear quadratic systems.
\newblock In {\em Proceedings of the 24th Annual Conference on Learning
  Theory}, volume~19 of {\em Proceedings of Machine Learning Research}, pages
  1--26. PMLR, 2011.

\bibitem{DeanConstrained19}
Sarah Dean, Stephen Tu, Nikolai Matni, and Benjamin Recht.
\newblock Safely learning to control the constrained linear quadratic
  regulator.
\newblock In {\em 2019 American Control Conference}, pages 5582--5588. IEEE,
  2019.
\newblock arXiv:1809.10121.

\bibitem{LiDasShammaLi21}
Yingying Li, Subhro Das, Jeff Shamma, and Na~Li.
\newblock Safe adaptive learning-based control for constrained linear quadratic
  regulators with regret guarantees, 2021.
\newblock arXiv:2111.00411.

\bibitem{SchifferJanson25}
Benjamin Schiffer and Lucas Janson.
\newblock Foundations of safe online reinforcement learning in the linear
  quadratic regulator: $\sqrt{T}$-regret, 2025.
\newblock arXiv:2504.18657.

\bibitem{HutchinsonJiangAlizadeh26}
Spencer Hutchinson, Nanfei Jiang, and Mahnoosh Alizadeh.
\newblock Rate-optimal regret for the safe learning-based control of the
  constrained linear quadratic regulator, 2026.
\newblock arXiv:2604.22158.

\bibitem{Chernoff59}
Herman Chernoff.
\newblock Sequential design of experiments.
\newblock {\em The Annals of Mathematical Statistics}, 30(3):755--770, 1959.

\bibitem{NaghshvarJavidi13}
Mohammad Naghshvar and Tara Javidi.
\newblock Active sequential hypothesis testing.
\newblock {\em The Annals of Statistics}, 41(6):2703--2738, 2013.

\bibitem{NitinawaratAtiaVeeravalli13}
Sirin Nitinawarat, George~K. Atia, and Venugopal~V. Veeravalli.
\newblock Controlled sensing for multihypothesis testing.
\newblock {\em IEEE Transactions on Automatic Control}, 58(10):2451--2464,
  2013.

\bibitem{GangradeChenSaligrama24}
Aditya Gangrade, Tianrui Chen, and Venkatesh Saligrama.
\newblock Safe linear bandits over unknown polytopes.
\newblock In {\em Proceedings of the 37th Conference on Learning Theory},
  volume 247 of {\em PMLR}, pages 1755--1795, 2024.

\bibitem{GopalanLS21}
Aditya Gopalan, Braghadeesh Lakshminarayanan, and Venkatesh Saligrama.
\newblock Bandit quickest changepoint detection.
\newblock In {\em Advances in Neural Information Processing Systems},
  volume~34, 2021.

\bibitem{Willems05}
Jan~C. Willems, Paolo Rapisarda, Ivan Markovsky, and Bart L.~M. De~Moor.
\newblock A note on persistency of excitation.
\newblock {\em Systems \& Control Letters}, 54(4):325--329, 2005.

\bibitem{Saligrama12chirp}
Venkatesh Saligrama.
\newblock Aperiodic sequences with uniformly decaying correlations with
  applications to compressed sensing and system identification.
\newblock {\em IEEE Transactions on Information Theory}, 58(9):6023--6036,
  2012.

\bibitem{DeanFoCM20}
Sarah Dean, Horia Mania, Nikolai Matni, Benjamin Recht, and Stephen Tu.
\newblock On the sample complexity of the linear quadratic regulator.
\newblock {\em Foundations of Computational Mathematics}, 20:633--679, 2020.

\bibitem{CohenKorenMansour19}
Alon Cohen, Tomer Koren, and Yishay Mansour.
\newblock Learning linear-quadratic regulators efficiently with only $\sqrt{T}$
  regret.
\newblock In {\em Proceedings of the 36th International Conference on Machine
  Learning}, volume~97 of {\em Proceedings of Machine Learning Research}, pages
  1300--1309. PMLR, 2019.

\bibitem{SimchowitzFoster20}
Max Simchowitz and Dylan~J. Foster.
\newblock Naive exploration is optimal for online {LQR}.
\newblock In {\em Proceedings of the 37th International Conference on Machine
  Learning}, volume 119 of {\em Proceedings of Machine Learning Research},
  pages 8937--8948. PMLR, 2020.

\bibitem{ManiaTuRecht19}
Horia Mania, Stephen Tu, and Benjamin Recht.
\newblock Certainty equivalence is efficient for linear quadratic control.
\newblock In {\em Advances in Neural Information Processing Systems},
  volume~32, 2019.

\bibitem{AswaniEtAl13}
Anil Aswani, Humberto Gonzalez, S.~Shankar Sastry, and Claire Tomlin.
\newblock Provably safe and robust learning-based model predictive control.
\newblock {\em Automatica}, 49(5):1216--1226, 2013.
\newblock arXiv:1107.2487.

\bibitem{Feldbaum60}
A.~A. Feldbaum.
\newblock Dual control theory, parts {I--IV}.
\newblock {\em Automation and Remote Control}, 21, 1960.
\newblock Parts I--IV appeared across volumes 21--22, 1960--1961.

\bibitem{AstromWittenmark}
Karl~J. {\AA}str{\"o}m and Bj{\"o}rn Wittenmark.
\newblock {\em Adaptive Control}.
\newblock Addison-Wesley, 2 edition, 1995.

\bibitem{hjalmarsson2005experiment}
H{\aa}kan Hjalmarsson.
\newblock From experiment design to closed-loop control.
\newblock {\em Automatica}, 41(3):393--438, 2005.

\bibitem{GarciaFernandez15}
Javier Garc{\'i}a and Fernando Fern{\'a}ndez.
\newblock A comprehensive survey on safe reinforcement learning.
\newblock {\em Journal of Machine Learning Research}, 16(42):1437--1480, 2015.

\bibitem{HewingEtAl20}
Lukas Hewing, Kim~P. Wabersich, Marcel Menner, and Melanie~N. Zeilinger.
\newblock Learning-based model predictive control: Toward safe learning in
  control.
\newblock {\em Annual Review of Control, Robotics, and Autonomous Systems},
  3:269--296, 2020.

\bibitem{BrunkeEtAl22}
Lukas Brunke, Melissa Greeff, Adam~W. Hall, Zhaocong Yuan, Siqi Zhou, Jacopo
  Panerati, and Angela~P. Schoellig.
\newblock Safe learning in robotics: From learning-based control to safe
  reinforcement learning.
\newblock {\em Annual Review of Control, Robotics, and Autonomous Systems},
  5:411--444, 2022.

\bibitem{MayneEtAl00}
David~Q. Mayne, James~B. Rawlings, Christopher~V. Rao, and Pierre O.~M.
  Scokaert.
\newblock Constrained model predictive control: Stability and optimality.
\newblock {\em Automatica}, 36(6):789--814, 2000.

\bibitem{blanchini1999invariance}
Franco Blanchini.
\newblock Set invariance in control.
\newblock {\em Automatica}, 35(11):1747--1767, 1999.

\bibitem{fisac2019general}
Jaime~F. Fisac, Anayo~K. Akametalu, Melanie~N. Zeilinger, Shahab Kaynama,
  Jeremy~H. Gillula, and Claire~J. Tomlin.
\newblock A general safety framework for learning-based control in uncertain
  robotic systems.
\newblock {\em IEEE Transactions on Automatic Control}, 64(7):2737--2752, 2019.

\bibitem{wabersich2021predictive}
Kim~P. Wabersich and Melanie~N. Zeilinger.
\newblock A predictive safety filter for learning-based control of constrained
  nonlinear dynamical systems.
\newblock {\em Automatica}, 129:109597, 2021.

\bibitem{ames2019control}
Aaron~D. Ames, Samuel Coogan, Magnus Egerstedt, Gennaro Notomista, Koushil
  Sreenath, and Paulo Tabuada.
\newblock Control barrier functions: Theory and applications.
\newblock In {\em 2019 European Control Conference}, pages 3420--3431. IEEE,
  2019.

\bibitem{MoldovanAbbeel12}
Teodor~Mihai Moldovan and Pieter Abbeel.
\newblock Safe exploration in markov decision processes.
\newblock In {\em Proceedings of the 29th International Conference on Machine
  Learning}, pages 1711--1718, 2012.
\newblock arXiv:1205.4810.

\bibitem{turchetta2016safe}
Matteo Turchetta, Felix Berkenkamp, and Andreas Krause.
\newblock Safe exploration in finite markov decision processes with gaussian
  processes.
\newblock In {\em Advances in Neural Information Processing Systems},
  volume~29, 2016.

\bibitem{KollerEtAl19}
Torsten Koller, Felix Berkenkamp, Matteo Turchetta, Joschka Boedecker, and
  Andreas Krause.
\newblock Learning-based model predictive control for safe exploration and
  reinforcement learning, 2019.
\newblock arXiv:1906.12189.

\bibitem{berkenkamp2017safe}
Felix Berkenkamp, Matteo Turchetta, Angela~P. Schoellig, and Andreas Krause.
\newblock Safe model-based reinforcement learning with stability guarantees.
\newblock In {\em Advances in Neural Information Processing Systems},
  volume~30, 2017.

\bibitem{sui2015safe}
Yanan Sui, Alkis Gotovos, Joel Burdick, and Andreas Krause.
\newblock Safe exploration for optimization with gaussian processes.
\newblock In {\em Proceedings of the 32nd International Conference on Machine
  Learning}, volume~37 of {\em Proceedings of Machine Learning Research}, pages
  997--1005. PMLR, 2015.

\bibitem{amani2019linear}
Sanae Amani, Mahnoosh Alizadeh, and Christos Thrampoulidis.
\newblock Linear stochastic bandits under safety constraints.
\newblock In {\em Advances in Neural Information Processing Systems},
  volume~32, 2019.

\bibitem{dobbe2020learning}
Roel Dobbe, Patricia Hidalgo-Gonzalez, Stavros Karagiannopoulos, Rodrigo
  Henriquez-Auba, Gabriela Hug, Duncan~S. Callaway, and Claire~J. Tomlin.
\newblock Learning to control in power systems: Design and analysis guidelines
  for concrete safety problems.
\newblock {\em Electric Power Systems Research}, 189:106615, 2020.

\bibitem{vanWaarde20}
Henk~J. van Waarde, Jaap Eising, Harry~L. Trentelman, and M.~Kanat Camlibel.
\newblock Data informativity: A new perspective on data-driven analysis and
  control.
\newblock {\em IEEE Transactions on Automatic Control}, 65(11):4753--4768,
  2020.

\bibitem{VenkateshDahleh97}
S.~R. Venkatesh and M.~A. Dahleh.
\newblock Identification in the presence of classes of unmodeled dynamics and
  noise.
\newblock {\em IEEE Transactions on Automatic Control}, 42(12):1620--1635,
  1997.

\bibitem{VenkateshDahleh01}
S.~R. Venkatesh and M.~A. Dahleh.
\newblock On system identification of complex systems from finite data.
\newblock {\em IEEE Transactions on Automatic Control}, 46(2):235--257, 2001.

\bibitem{saligrama2005convex}
Venkatesh Saligrama.
\newblock A convex analytic approach to system identification.
\newblock {\em IEEE transactions on automatic control}, 50(10):1550--1566,
  2005.

\end{thebibliography}
